\documentclass{article}[] %
\usepackage[margin=4cm]{geometry} 
\usepackage[american]{babel}
\usepackage{amsthm}
\usepackage{amsmath}
\usepackage{mathtools}
\usepackage{amssymb}
\usepackage{bm}
\usepackage{xr}
\usepackage{parskip}
\usepackage{bbm}
\usepackage{tikz}
\usepackage{scrextend}
\usepackage{clipboard}
\usepackage{centernot}
\usetikzlibrary{decorations}
\usepackage{svg}
\usepackage{hyperref}
\hypersetup{
    colorlinks=true,      %
    citecolor=blue!50!gray,
    linkcolor=green!50!black,
    urlcolor=red!50!darkgray,
}
\makeatletter
\pdfstringdefDisableCommands{\let\HyPsd@CatcodeWarning\@gobble}
\makeatother

\usepackage{footnotebackref}

\usepackage[hyperpageref]{backref} 
\renewcommand*{\backref}[1]{}  
\renewcommand*{\backrefalt}[4]{{\small
  \ifcase #1 
     (Not cited.)
  \or
     (Cited on page~#2.)
  \else
     (Cited on pages~#2.)
  \fi
}} 

\usepackage[nameinlink]{cleveref}

\usepackage{enumitem}
\usepackage{subcaption}
\usepackage{makecell} 
\usepackage{microtype}
\usepackage{graphicx}
\usepackage{adjustbox}

\usepackage{placeins}

\newcommand{\pa}{\mathbf{pa}}
\newcommand{\p}{\item[$\bullet$]}
\newcommand{\PA}{\mathbf{PA}}

\newcommand\independent{\protect\mathpalette{\protect\independenT}{\perp}}
\def\independenT#1#2{\mathrel{\mathrlap{#1#2}\mkern4mu{#1#2}}}
\newcommand{\indep}{\independent}

\newcommand{\nindep}{\centernot\indep}

\newcommand{\doop}{\ensuremath{\operatorname{do}}}

\DeclareSymbolFont{symbolsC}{U}{txsyc}{m}{n}
\DeclareMathSymbol{\strictif}{\mathrel}{symbolsC}{74}

\theoremstyle{definition}

\newtheorem{defi/}{Definition}[section]
\newenvironment{definition}
{%
	\pushQED{\qed}\begin{defi/}}
	{\popQED\end{defi/}}

\newtheorem{proposition}[defi/]{Proposition}

\newtheorem{exam/}[defi/]{Example}
\newenvironment{example}
{%
	\pushQED{\qed}\begin{exam/}}
	{\popQED\end{exam/}}

\newtheorem{con/}[defi/]{Corollary}
\newenvironment{corollary}
{%
	\pushQED{\qed}\begin{con/}}
	{\popQED\end{con/}}

\def \rP {\text{\boldmath$P$}}
\def \rX {\text{\boldmath$X$}}
\def \rY {\text{\boldmath$Y$}}
\def \rZ {\text{\boldmath$Z$}}

\def \int {\mathbf{Int}}

\usepackage[]{natbib} %
    \bibliographystyle{plainnat}
    
\usepackage{mathtools} %
\usepackage{booktabs} %
\usepackage{tikz} %

\DeclareMathAlphabet{\mathfrak}{OT1}{pgoth}{m}{n}
\title{
What is causal about causal models and representations?
}

\usepackage{authblk}

\author{Frederik Hytting Jørgensen}
\author{Luigi Gresele}
\author{Sebastian Weichwald}
\affil{
Copenhagen Causality Lab, Department of Mathematical Sciences, and
Pioneer Centre for AI,\\
University of Copenhagen, Denmark
}

\date{}

\definecolor{textcolor}{RGB}{24, 24, 24}
\color{textcolor}

\begin{document}

\maketitle

\begin{abstract}

Causal Bayesian networks are `causal' models since they make predictions about interventional distributions.
To connect such causal model predictions to real-world outcomes, we must determine which actions in the world correspond to which interventions in the model.
For example, to interpret an action as an intervention on a treatment variable, the action will presumably have to a) change the distribution of treatment in a way that corresponds to the intervention, and b)
not 
 change other 
aspects,
 such as
how the outcome depends on the treatment; while the marginal distributions of some variables
may change as an effect.
We introduce a formal framework to make such requirements for different interpretations of actions as interventions precise.
We prove that the seemingly natural interpretation of actions as interventions is circular:
Under this interpretation,
every causal Bayesian network that correctly models the observational distribution is trivially also interventionally valid,
and no action yields empirical data that could possibly falsify such a model.
We prove an impossibility result:
No interpretation exists that is non-circular and simultaneously satisfies a set of natural desiderata.
Instead, we examine non-circular interpretations that may violate some desiderata and show how this may in turn enable the falsification of causal models.
By rigorously examining how a causal Bayesian network could be a `causal' model of the world instead of merely a mathematical object, our formal framework contributes to
the conceptual foundations of causal representation learning, causal discovery, and causal abstraction, while also highlighting some limitations of existing approaches.

\end{abstract}

\section{Introduction}

Causal Bayesian networks are mathematical models that induce multiple distributions over some random variables
\citep{spirtes2001causation,Pearl2009,Peters2017}.
A causal Bayesian network describes one reference distribution, called the observational distribution,
and
a procedure to derive
interventional distributions.
As such, a causal Bayesian network is a concise mathematical model of several distributions indexed by interventions.

Causal reasoning using causal models is seemingly intuitive
once we assign names to the variables in the model based on the real-world quantities they aim to represent. The term `intervention' is suggestive and one might use interventions on model variables to reason about actions that perturb the corresponding real-world quantities.
Yet, without making the correspondence between model interventions and actions explicit, we blur the line between mathematical model and real-world substantiation.
It is then unclear what predictions the model makes about the effects of actions in the world
and how interventional predictions may be used to falsify a causal model.

\subsection{Dialogue – What is an intervention?}\label{sec: dialogue}

The following dialogue illustrates the conflict
one runs into
when using a causal Bayesian network, a mathematical model, to reason about actions and observations of some real-world quantities 
while using the word `intervention' ambiguously.

\textbf{Omar:} I have two quantities that I model as random variables $A$ and $B$. I have measured them and the data seems to perfectly match the joint normal distribution
\begin{align*}
    \mathcal{L}^{\mathcal{O}}(A,B)=\mathcal{N}\left(\begin{pmatrix}
        0\\
        0
    \end{pmatrix},\begin{pmatrix}
        1 & 1 \\
        1 & 2 
    \end{pmatrix}\right).
\end{align*}
I am sure that there is no unobserved confounding,\footnote{%
This assumption eases the presentation of the example
but is not necessary to arrive at the problem this example illustrates.
}
but I am not sure if $A$ causes $B$ or $B$ causes $A$.
Do you think $A\to B$ or $A \gets B$ is correct?

\textbf{Sofia:}  I am sure that $A$ causes $B$. 

\textbf{Omar:} How do you know?

\textbf{Sofia:}  Try to intervene on $B$. If $A$ causes $B$, then we would expect that intervening on $B$ changes the conditional distribution of $B$ given $A$ but does not change the marginal distribution of $A$. For example, if you intervene to set $B$ equal to $5$, you will observe that $(A,B)$ follows joint distribution 
$$\mathcal{L}^{\doop(B=5)}(A,B)=\mathcal{N}(0,1)\otimes \delta_5,$$
where $\delta_5$ is the Dirac distribution with support $\{5\}$.

\textbf{Omar:} Okay, I tried. I did something and now I observe that $A$ and $B$ follow joint normal distribution
$$\mathcal{N}\left(\begin{pmatrix}
        2\\
        0
    \end{pmatrix},\begin{pmatrix}
        4 & 3 \\
        3 & 7 
    \end{pmatrix}\right).$$

\textbf{Sofia:}  Now $B$ follows distribution $\mathcal{N}(0,7)$. I proposed that you intervene to
make $B$ have point mass in $5$. Could you try again?

\textbf{Omar:} Ah, sorry. I thought I implemented the intervention you suggested, but I can see now that I did not. I will try something different such that $B$ has point mass in $5$:
Now, $A$ and $B$ have distribution   
\begin{align*}
    \delta_1 \otimes \delta_5.
\end{align*}
So $B$ has distribution $\delta_5$, but I did not get the interventional distribution that you said I would get. Does that mean that $A$ does not cause $B$?

\textbf{Sofia:}  Haha, you also intervened on $A$ and set it equal to 1! When you intervene on $B$, you only intervene on $B$. Make sure to only change the conditional distribution of $B$ given $A$. Do not change the marginal distribution of $A$.

\textbf{Omar:} Okay, now I think I did the right thing and indeed, just like you predicted, I now get
$$\mathcal{N}(0,1)\otimes \delta_5.$$

\textbf{Sofia:} See, as I predicted, $A$ causes $B$ and not vice-versa.

\subsection{What went wrong?}
\label{sec:what-wrong}

Let us assume that Sofia is right that $A \to B$. Based on her correct model (and the observational distribution), she makes the following prediction:
\Copy{prop}{
\begin{enumerate}
    \item[(P)] If you intervene $\doop(B=5)$, then you will observe the %
    distribution $\mathcal{N}(0,1)\otimes \delta_5$ over $(A,B)$.
\end{enumerate}}
Proposition (P) is ambiguous:
It is clear what distribution Sofia's model implies under the intervention $\doop(B=5)$,
however, it is unclear when the antecedent is satisfied in the world.
Here is an explanation provided by 
Pearl, Glymour, and Jewell:

\begin{quote}
The difference between intervening on a variable and conditioning on that variable should,
hopefully, be obvious. When we intervene on a variable in a model, we fix its value. We \emph{change}
the system, and the values of other variables often change as a result. \citep[page 54]{Pearl2016}
\end{quote}

One possible way to understand this in the context of (P) is as follows:
\Copy{p1}{
\begin{enumerate}
    \item[(P1)] If you do something to change the system such that you observe $B=5$ with probability 1, then
    you will observe the
    distribution $\mathcal{N}(0,1)\otimes \delta_5$ over $(A,B)$.
\end{enumerate}}
Even though Sofia believes that  $A\to B$,
she apparently thinks that (P1) is false. When Omar does something such that $(A,B)$ has distribution $\delta_1 \otimes \delta_5$
and the antecedent of (P1) is thus satisfied, she objects that Omar intervened on both nodes. Instead, she proposes (P2) as an analysis of (P):
\begin{enumerate}
    \item[(P2)]  If you do something to change the system such that you observe $B=5$ with probability 1 while not changing the marginal distribution of $A$, then you will observe the distribution $\mathcal{N}(0,1)\otimes \delta_5$ over $(A,B)$.
\end{enumerate}

Interpreting (P) as (P2) cannot be correct because if $A\leftarrow B$, then (P) is false and (P2) is true. (P2) is a mathematical truth that holds irrespectively of whether $A\to B$ or $A\leftarrow B$,
while (P) has different truth values depending on the causal structure.
In \Cref{prop: tautology2}, we show that interpreting (P) as (P2) means that every causal Bayesian network that correctly models the observational distribution is interventionally valid (see \Cref{def: interventionally valid}).

\subsection{Contribution}
We introduce a mathematical framework that explicitly links causal models and the real-world data-generating processes they are models of.
This enables a transparent and formal argument showing that the seemingly natural interpretation of
actions as interventions
is circular,
suggesting that
interventional (layer 2 \citep{ibeling2020probabilistic,Bareinboim2022}) predictions are not inherently free from the philosophical intricacies of falsifiability that have been debated for counterfactual (layer 3) predictions \citep{dawid2000causal,shpitser2007,raghavancounterfactual}.
Our work thus
challenges the common assumption
that interventional predictions
allow for falsification of causal models.\footnote{%
For example, \citet[Position 4]{loftus2024position} posits that
``Causal models can be falsified in more ways than predictive models. This is usually good.''; \citet[Section 6.8]{Peters2017} that
``if an interventional model predicts the observational distribution correctly but does not predict what happens in a randomized experiment, the model is still considered to be falsified''; and
\citet{raghavancounterfactual}
that
``It is commonly believed that, in a real-world environment, samples can only be drawn from observational and interventional distributions [... whereas sampling from] counterfactual distributions, is believed to be inaccessible almost by definition.''
\label{fn:falsification}}
Our formalism enables us to work 
out interpretations that do allow for falsification of causal models via interventional predictions
and to transparently discuss their pros and cons, see \Cref{sec: non-circular}.
This is necessary to make sense of, for example, causal discovery and causal representation learning,
where it is presupposed that not all causal Bayesian networks that
induce the correct observational distribution
are also interventionally valid.

Our contributions also highlight that intuition and commonsense may not be the best arbiters for determining if and how causal models make falsifiable predictions about the effects of certain actions.
Therefore, we present many examples throughout the paper to illustrate our theoretical results and their formal implications.
Some examples may appear contrived and abstract,
but this is intentional:
They are designed to explain and highlight the principles and technical subtleties involved in establishing a formal relationship between causal models and real-world data-generating processes.

\subsection{Article outline}

\begin{description}
\item[\Cref{sec: setting} --
\textbf{Framework:}]
We introduce the framework used in this article. Instead of assuming that the underlying data-generating process is a causal model, we have a generic set of distributions indexed by actions. We draw an important distinction between 1) a model emulating the distributions of a representation and 2) a model being an interventionally valid model of a representation.

\item[\Cref{sec: circular} --
\textbf{A circular interpretation: }]
We formalize the seemingly natural interpretation of actions as interventions and show that it is circular, rendering every CBN that correctly models the observational distribution interventionally valid.

\item[\Cref{sec: interpretations} --
\textbf{An impossibility result for interpretations:}]
We discuss different intuitive properties of interpretations of actions as interventions and show that an interpretation that satisfies four intuitive desiderata is necessarily circular.

\item[\Cref{sec: non-circular} --
\textbf{Non-circular interpretations: }]
We define and discuss five non-circular interpretations of actions as interventions, making it possible for a causal model to be falsified.

\item[\Cref{sec: related work} --
\textbf{Implications for related research: }]
We discuss implications of our work for causal representation learning, causal discovery, and causal abstraction. We discuss connections to the philosophical literature on the logic of conditionals.

\item[\Cref{sec:conclusion} -- \textbf{Conclusion}] 
\end{description}

\section{Framework}\label{sec: setting}

We choose to formalize causal models as causal Bayesian networks \citep{spirtes2001causation,Pearl2009,Peters2017} instead of,
for example, structural causal models (SCMs).
This choice eases the mathematical presentation in the present manuscript,
while we think that the considerations in this paper apply equally to SCMs.

\begin{definition}{\textbf{Causal Bayesian network.}}\label{def: CGM}
A \textit{causal Bayesian network} (CBN) $\mathfrak{C}$, also called a causal graphical model, 
over real-valued random variables $\rZ=
(Z_1,\dots Z_n)$ is a directed acyclic 
graph (DAG) $\mathcal{G}$\footnote{Throughout, we assume 
acyclicity to ensure that the 
collection of Markov kernels induces a 
well-defined distribution under every 
intervention.} over nodes $[n]=\{1,\dots n\}$\footnote{%
The nodes of a graph $\mathcal{G}$ are technically the natural numbers $[n]$, but to improve readability we also consider the corresponding coordinates of $\rZ=(Z_1,...,Z_n)$ to be the nodes of $\mathcal{G}$.}
 and a collection of 
Markov kernels $\{\pa_i\mapsto 
p_i^{\mathfrak{C}}(\cdot \mid \pa_i) 
\cdot \nu_i \mid i\in 
[n]\}$.\footnote{By $\PA_i$ we denote the 
variables of $\rZ$ whose indices 
corresponds to the parents of $i$ in $\mathcal{G}$, and by 
$\pa_i\in \mathbb{R}^{|\PA_i|}$ we denote some value of these variables.
The conditional distribution $\mathcal{L}^{\mathfrak{C}}(Z_i\mid \PA_i=\pa_i)$ has a density $p_{i}^\mathfrak{C}(\cdot\mid\pa_i)$ w.r.t.\ the $\sigma$-finite measure $\nu_i$ on $\mathbb{R}$. The measure $\nu_i$ is fixed across all $\pa_i$.
} These Markov kernels induce a joint distribution,
called the observational distribution,
denoted by $\mathcal{L}^{\mathfrak{C}}(\rZ)$, with density
\begin{align*}
    p^{\mathfrak{C}}_\rZ(z_1,\dots, z_n)=\prod_{i=1}^n p_i^{\mathfrak{C}}(z_i\mid \pa_i).
\end{align*}
Given some nonempty subset $J\subseteq [n]$, interventional distributions are obtained by, for each $j\in J$, replacing the kernel of $j$ with some new kernel $\pa_j\mapsto q_{j}(\cdot \mid \pa_j)\cdot \mu_j$. We denote this intervention by $\doop(j \gets q_j,j\in J)$.%
\footnote{An intervention $\doop(j \leftarrow q_j,j\in J)$ is allowed to change both the densities $p_{j}(\cdot \mid \cdot)$ and the measures $\nu_j$, even though the change of measure is suppressed in the notation. Sometimes, we write $\doop(Z_j=z)$ and take this to mean that the kernel of $j$ is replaced with $\pa_j\mapsto \delta_{z}$, where $\delta_{z}$ is the Dirac distribution with support $\{z\}$. We also sometimes use notation like $\doop(Z_j\gets \mathcal{N}(0,1))$ and take this to mean that the kernel of $j$ is replaced with $\pa_j\mapsto \mathcal{N}(0,1)$, that is, $Z_j$ is set to follow a standard normal distribution and is made independent of its parents. 
}
The interventional distribution under intervention $\doop(j\gets q_j, j\in J)$ is denoted by $\mathcal{L}^{\mathfrak{C}; \doop(j\leftarrow q_j, j\in J)}(\rZ)$ and has density given by
\begin{align*}
    &p_\rZ^{\mathfrak{C};\doop(j\leftarrow q_j, j\in J)}(\bm{z})
    \\=&\prod_{i=1}^n p_i^{\mathfrak{C};\doop(j\leftarrow q_j, j\in J)}(z_i \mid \pa_i) 
    \\=&\prod_{i\not \in J}p^{\mathfrak{C}}_{i}(z_i\mid \pa_i)\prod_{i\in J} q_{i}(z_i\mid \pa_i).
\end{align*}
$p_i^{\mathfrak{C};\doop(j\leftarrow q_j, j\in J)}$ denotes the $i$'th kernel given by $\mathfrak{C}$ and intervention $d=\doop(j\gets q_j, j\in J)$, that is, $p_i^{\mathfrak{C};\doop(j\leftarrow q_j, j\in J)}=p_i^{\mathfrak{C}}$ 
for $i\notin J$, and $p_i^{\mathfrak{C};\doop(j\leftarrow q_j, j\in J)}=q_i$ for $i\in J$.
In this work, we do not consider interventions that 
change the DAG $\mathcal{G}$.
We assume that interventions induce distributions different from the observational distribution, that is, they satisfy
\begin{align*}
    \mathcal{L}^{\mathfrak{C}; \doop(j\leftarrow q_j, j\in J)}(\rZ)\neq \mathcal{L}^{\mathfrak{C}}(\rZ).
\end{align*}
This, for example, rules out intervening only on source nodes\footnote{A node is a \textit{source node} if it has no parents.} without changing at least some of their marginal distributions.
We say a DAG $\mathcal{G}$ is \textit{complete}
if 
for every distinct
nodes $Z_j$ and $Z_i$, either $Z_j\to Z_i$ or $Z_j\gets Z_i$;
we say that a CBN is $\textit{complete}$ if its DAG is complete.
\end{definition}

\begin{definition}{\textbf{Single-node, multi-node, minimal, decomposable, and perfect interventions.}}
We say that an intervention $\doop(j\leftarrow q_j,j\in J)$ is a \textit{single-node intervention} if $|J|=1$ and a \textit{multi-node intervention} if $|J|>1$. We say that an intervention $\doop(j\leftarrow q_j,j\in J)$
is \textit{minimal} if for every nonempty proper subset $J^*\subsetneq J$ and kernels $\{q_j^*\}_{j\in J^*}$, $$\mathcal{L}^{\mathfrak{C};\doop(j\leftarrow q_j,j\in J)}(\rZ)\neq\mathcal{L}^{\mathfrak{C};\doop(j\leftarrow q_j^*,j\in J^*)}(\rZ),$$ that is, if no intervention on a proper subset of the nodes induces the same interventional distribution.
We say that an intervention $\doop(j\leftarrow q_j,j\in J)$ is \textit{decomposable} if $\doop(j\leftarrow q_j,j\in J^*)$ is an intervention for every nonempty subset $J^*\subseteq J$, that is, if $\mathcal{L}^{\mathfrak{C};\doop(j\leftarrow q_j,j\in J^*)}(\rZ)\neq\mathcal{L}^{\mathfrak{C}}(\rZ)$ for every nonempty subset $J^*\subseteq J$. 
We say that a kernel $\pa_i\mapsto 
p_i^{\mathfrak{C}}(\cdot \mid \pa_i) 
\cdot \nu_i$ is \textit{perfect} if the measure $p_i^{\mathfrak{C}}(\cdot \mid \pa_i) 
\cdot \nu_i$ is identical for all $\pa_i$. We say that an intervention $\doop(j\leftarrow q_j,j\in J)$ is \textit{perfect} if 
$q_j$
is perfect for all $j\in J$. 
\end{definition}
One of the novelties of this work is that we do not assume that the data-generating process is
a causal Bayesian network
or
a structural causal model.
Instead, we remain agnostic as to how the data is generated and then ask if a representation of the data-generating process can be modeled by a causal Bayesian network. 

\begin{definition}{\textbf{Data-generating process.}}\label{def: dgp}
	A \textit{data-generating process} is a tuple $\mathcal{D}=(\mathcal{A},\{\mathcal{L}^a(\rX^*)\}_{a\in \mathcal{A}})$ where
	\begin{enumerate}
		\p $\mathcal{A}$ is a set of \textit{actions},
  \p one action $\mathcal{O}\in \mathcal{A}$ is called the \textit{observational regime}, and
		\p $\rX^*\in \mathbb{R}^m$ is a multivariate random variable that we call \textit{low-level features} 
        with distributions given by $\mathcal{L}^a(\rX^*)$ for each action $a\in \mathcal{A}$. 
	\end{enumerate}
\end{definition}
 In general, we assume that observed data consists of low-level features, such as image pixels, which may not be directly suited for causal modeling (see, for example,~\citet{scholkopf2021toward}). Instead, we may aim to causally model a high-level representation given by functions of these pixels, rather than the individual pixels themselves.

 Unlike most works on causal representation learning, we do not focus on the challenges of learning or identifying such representations. 
Instead, we investigate the implications of
hypothesizing that a representation is modeled by a causal model.

\begin{definition} {\textbf{Representation.}}
        \label{def:representation}
	A \textit{representation} of a data-generating process $\mathcal{D}=(\mathcal{A},\{\mathcal{L}^a(\rX^*)\}_{a\in \mathcal{A}})$ is a 
    multivariate random variable %
    $\rZ^*=(Z_1^*,\dots, Z_n^*)=h(\rX^*)=(h_1(\rX^*),\dots, h_n(\rX^*))$ for measurable functions $h_1,\dots,h_n$.
\end{definition}
See \Cref{def: CRL} for a definition of `causal representation'.

\begin{figure}
 \centering
 \begin{tikzpicture}
 \node at (0,0) {\includesvg[width=\textwidth]{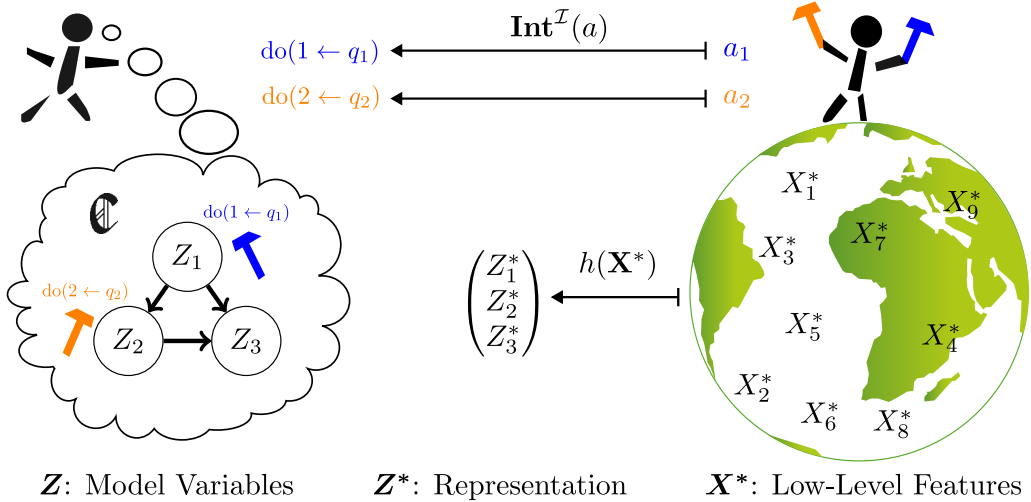}};
 \node[scale=2] at (-5.5,0.5) {$\mathfrak{C}$};
 \end{tikzpicture}

    \caption{
    The framework presented in this article has three main components. 1) The observed low-level features $\rX^*$, 2) a representation $\rZ^*:=h(\rX^*)$, and 3) a hypothesized causal model $\mathfrak{C}$ with variables $\rZ$. 
    }
    \label{fig: setting}
\end{figure}

We will consider a correspondence between variables $\bm{Z}=(Z_1,...,Z_n)$ in a causal Bayesian network and a representation $\bm{Z}^*=(Z^*_1,...,Z^*_{n})$ of a data-generating process.\footnote{%
We mark variables whose distributions are derived from the data-generating process with a superscript~$*$ and the corresponding variables in a CBN without the superscript. %
}
The observational distribution of the representation $\rZ^*$ is given by the push-forward measure ${\mathcal{L}^{\mathcal{O}}(\rZ^*)=h(\mathcal{L}^\mathcal{O}(\rX^*))}$.
We say that a CBN $\mathfrak{C}$ over nodes $\rZ$ is \textit{compatible} with representation $\rZ^*$ if $\mathcal{L}^{\mathfrak{C}}(\rZ)=\mathcal{L}^{\mathcal{O}}(\rZ^*)$, which in particular implies that $\rZ$ and $\rZ^*$ have the same dimension. In addition to the observational distribution, for each  action $a\in \mathcal{A}$, we have a distribution $\mathcal{L}^a(\rZ^*)=h(\mathcal{L}^a(\rX^*))$.
Likewise, the causal Bayesian network $\mathfrak{C}$ induces different interventional distributions $\mathcal{L}^{\mathfrak{C};\doop(j\leftarrow q_j,j\in J)}(\rZ)$ for different interventions $\doop(j\leftarrow q_j,j\in J)$. In \Cref{def: interventionally valid}, we make precise what it means for the CBN $\mathfrak{C}$ to be an interventionally valid model of a representation.
See \Cref{fig: setting} for a visual summary of the setting considered in this work.

Throughout this work, we assume that the distribution of $\rZ^*$ under action $a\in \mathcal{A}$, denoted by $\mathcal{L}^a(\rZ^*)$, has a density w.r.t.\ some product of $\sigma$-finite measures. 
We write  $\mathcal{L}^a(Z_j^* \mid \rY^*)\sim p$ to imply that the kernel $\bm{y}^*\mapsto p(\cdot \mid \bm{y}^*)\cdot \nu$ is a regular conditional probability distribution of $Z_j^*$ given $\rY^* \subseteq \rZ^*$ under distribution $\mathcal{L}^a(\rZ^*)$.

In \Cref{app: notation}, we provide a notation overview.

\subsection{Emulation and interventional validity}

Since CBNs are convenient to describe multiple distributions,
we often use a CBN to describe the distributions of a representation.

\begin{definition}{\textbf{Representation emulated by a CBN.}}\label{def: generation}
    Let a data-generating process $\mathcal{D}=(\mathcal{A},\{\mathcal{L}^a(\rX^*)\}_{a\in \mathcal{A}})$ and a CBN $\mathfrak{A}$ over nodes $\rZ=(Z_1,\dots Z_n)$ be given. We say that a representation $\rZ^*=(Z_1^*,\dots, Z_n^*)$ of $\mathcal{D}$ is \textit{emulated by $\mathfrak{A}$ and interventions $\mathcal{I}^*$} if $\mathcal{I}^*$ is a set of interventions in $\mathfrak{A}$ and there is a surjective function $g:\mathcal{A}\setminus \{\mathcal{O}\} \to \mathcal{I}^*$ such that  
    \begin{enumerate}
        \item  
        $\mathcal{L}^{\mathcal{O}}(\rZ^*)=\mathcal{L}^{\mathfrak{A}}(\rZ)$, and
    \item   
        $\mathcal{L}^{a}(\rZ^*)=\mathcal{L}^{\mathfrak{A};g(a)}(\rZ)$
     for all $a\in \mathcal{A}\setminus \{\mathcal{O}\}$.
    \end{enumerate}
    We refer to $g$ as the \textit{link}.
\end{definition}

Importantly, stating that a representation is emulated by a CBN $\mathfrak{A}$ and interventions $\mathcal{I}^*$
implies no other claims
about the data-generating process than the distributions it induces for the set of actions.
Moreover, for every data-generating process $\mathcal{D}=(\mathcal{A},\{\mathcal{L}^a(\rX^*)\}_{a\in \mathcal{A}})$ and representation $\rZ^*$, there exists a CBN $\mathfrak{A}$ and a set of interventions $\mathcal{I}^*$ in $\mathfrak{A}$ such that\ 
$\rZ^*$ is emulated by $\mathfrak{A}$ and $\mathcal{I}^*$. 
To see this, pick a complete CBN $\mathfrak{A}$ such that \ $\mathcal{L}^{\mathfrak{A}}(\rZ)=\mathcal{L}^{\mathcal{O}}(\rZ^*)$, and for each $a\in \mathcal{A}\setminus \{\mathcal{O}\}$, let $d^a$ be an intervention such that $\mathcal{L}^a(Z_i^*|\PA_i^*)\sim p_{i}^{\mathfrak{A};d^a}$ for all $i\in [n]$ (this is possible since $\mathcal{L}^a(\rZ^*)$ has density w.r.t.\ a product of $\sigma$-finite measures). Now $\mathcal{L}^a(\rZ^*)=\mathcal{L}^{\mathfrak{A};d^a}(\rZ)$ for every $a\in \mathcal{A}\setminus \{\mathcal{O}\}$  since $\mathfrak{A}$ is complete,
 so $\rZ^*$ is emulated by $\mathfrak{A}$ and $\mathcal{I}^*=\{d^a\mid a\in \mathcal{A}\setminus \{\mathcal{O}\}\}$. 
 Thus, while we may imagine that the data-generating process can be any physical mechanism, every representation of such a mechanism can be emulated by a CBN  (and is in fact emulated by several different CBNs if there is more than one node).\footnote{This argument generalizes one of the main points made in \citet{eberhardt2016green}, namely that the same system can seemingly be described equally well by different causal models.}
We rely on \Cref{def: generation} to simplify our presentation (avoiding other mathematical descriptions of data-generating processes in terms of, for example, stochastic differential equations or exhaustive enumerations of the distributions for each action).

The variables in causal discovery and the latent variables in causal representation learning
are high-level features
and it may be unclear what constitutes an intervention on those variables.
In particular, interventions are usually ambiguous \citep{Spirtes2004ambiguous,rubenstein2017causal}.
Consider performing a phacoemulsification cataract surgery. Assume that we have a high-level feature that indicates whether
the surgery was performed. Interventions on this variable do not correspond to unique physical processes. The surgery may be performed at different times of the day, in different locations, by various different doctors using slightly different types of equipment, etc. 
In particular, if we are given a representation $\rZ^*$ and a CBN compatible with variables $\rZ$,
whether or not the model correctly predicts the effects of actions
depends on which actions in the data-generating process correspond to which interventions in the CBN. 
Interpretations of actions formalize this correspondence
by assigning
subsets of the modeled interventions $\mathcal{I}$
to each action $a\in \mathcal{A}$.
This allows us to discuss the implications of different interpretations.

\begin{definition}{\textbf{Interpretations of actions.}}\label{def: interpretation}
    Let a data-generating process $\mathcal{D}$, representation $\rZ^*$, and a compatible CBN $\mathfrak{C}$ be given. An \textit{interpretation} $\mathbf{Int}$ is a mapping that takes an action $a\in \mathcal{A}$ and a set of modeled interventions $\mathcal{I}$ in $\mathfrak{C}$ as input and outputs a subset 
    of $\mathcal{I}$. For a given set of modeled interventions $\mathcal{I}$, an interpretation induces a function $\mathbf{Int}^{\mathcal{I}}: \mathcal{A}\to \mathcal{P}(\mathcal{I})$, where $\mathcal{P}(\mathcal{I})$ is the power set of $\mathcal{I}$. 
\end{definition}

Notice that we do not necessarily map every action to an intervention, that is, $\int^{\mathcal{I}}(a)$ can be the empty set. Likewise, for an intervention in $d\in \mathcal{I}$, there may be no action $a\in \mathcal{A}$ such that $d\in \int^{\mathcal{I}}(a)$.

Without committing to a specific interpretation,
it is unclear whether and how a CBN correctly predicts the distribution of a data-generating process's representation for a given action.
Consequently, it is then unclear what is causal about a causal model and what the criteria for falsifying it as a valid model of a representation should be.
Once
given an interpretation $\mathbf{Int}$, we can ask whether a CBN makes a correct prediction about the distribution of $\rZ^*$ under a given action. 
In particular, if we interpret action $a$ as  intervention $d$, that is, if $d \in \mathbf{Int}^{\mathcal{I}}(a)$, we can ask if $\mathcal{L}^a(\rZ^*)=\mathcal{L}^{\mathfrak{C};d}(\rZ)$.

\begin{definition}{\textbf{Interventional validity.}}\label{def: interventionally valid}
Let a data-generating process $\mathcal{D}$, representation $\rZ^*$, compatible CBN $\mathfrak{C}$, set of interventions $\mathcal{I}$ in $\mathfrak{C}$, and an interpretation $\mathbf{Int}$ be given. If for all $a\in \mathcal{A}$ and $d\in \mathcal{I}$
\begin{align*}
    d\in \mathbf{Int}^{\mathcal{I}}(a)\Rightarrow\mathcal{L}^{a}(\rZ^*)=\mathcal{L}^{\mathfrak{C}; d}(\rZ),
\end{align*}
we say that $\mathfrak{C}$ is an \textit{$\mathcal{I}-\mathbf{Int}$ valid model of $\rZ^*$}.
\end{definition}

\section{$\int_C:$ The seemingly natural interpretation of actions as interventions is circular}\label{sec: circular}

In a CBN,
an intervention $\doop(j\leftarrow q_j, j\in J)$ modifies the kernels of $Z_j$ given $\PA_j$ for $j\in J$, while keeping the kernels fixed for $j\notin J$. Since we do not consider interventions that
change the graph,
we have, for all interventions $d$, that $\mathcal{L}^{\mathfrak{C};d}(\rZ)$ is Markov w.r.t.\ the DAG $\mathcal{G}$ of the CBN $\mathfrak{C}$.
These considerations might compel us to consider the following interpretation.

\begin{definition}{\textbf{$\int_C$. The seemingly natural interpretation.
}}\label{def: tautology}
Let a data-generating process $\mathcal{D}$, representation $\rZ^*$, compatible CBN $\mathfrak{C}$, and set of interventions $\mathcal{I}$ in $\mathfrak{C}$ be given. We define interpretation $\mathbf{Int}_C$ 
by the following rule: An intervention $\doop(j\gets q_j, j\in J)\in \mathcal{I}$ is in $\mathbf{Int}_C^{\mathcal{I}}(a)$ if and only if the following 3 conditions hold:
\begin{enumerate}
\item[1)] $\mathcal{L}^a(Z^*_i \mid \PA_i^*)\sim q_i$
for all $i\in J$. That is, the action sets the conditionals of intervened nodes correctly. For example, if we interpret an action as intervention $\doop(Z_i=4)$, then $Z_i^*$ must have Dirac distribution with support $\{4\}$ under that action.
\item[2)]  $\mathcal{L}^a(Z^*_i \mid \PA_i^*)\sim p^{\mathfrak{C}}_{i}$
for all $i\notin J$. Intuitively, we do not intervene on nodes not in $J$.
\item[3)] $\mathcal{L}^{a}(\rZ^*)$ is Markov w.r.t.\ the DAG of $\mathfrak{C}$. That is, we do not introduce dependencies.
\end{enumerate}
\end{definition}

The $C$ in $\int_C$ is for `circular'. This interpretation interprets an action as an intervention $d$ in $\mathfrak{C}$ if and only if the action induces the interventional distribution given by $\mathfrak{C}$ and $d$.

\begin{proposition}\label{prop: tautology1}
    Let a data-generating process $\mathcal{D}$, representation $\rZ^*$, compatible CBN $\mathfrak{C}$, and set of interventions $\mathcal{I}$ in $\mathfrak{C}$ be given. Then $d\in \mathcal{I}$ is in $\mathbf{Int}^{\mathcal{I}}_C(a)$ if and only if $\mathcal{L}^{a}(\rZ^*)=\mathcal{L}^{\mathfrak{C};d}(\rZ)$.
\end{proposition}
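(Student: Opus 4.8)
The plan is to unfold the definition of $\int_C$ and match its three conditions directly against the product formula for the interventional density from \Cref{def: CGM}. Write $d=\doop(j\gets q_j,j\in J)$ throughout. The pivotal observation is that the interventional density $p_\rZ^{\mathfrak{C};d}$ is itself a Markov factorization with respect to the DAG $\mathcal{G}$, whose factors are $q_i$ for $i\in J$ and $p_i^{\mathfrak{C}}$ for $i\notin J$. Both directions then reduce to comparing this factorization against the factorization of $\mathcal{L}^a(\rZ^*)$, so the entire argument is really about the uniqueness of Bayesian-network factors.

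For the forward direction I would assume $d\in\int_C^{\mathcal{I}}(a)$, so conditions 1–3 hold, and show the distributions coincide. Condition 3 gives that $\mathcal{L}^a(\rZ^*)$ is Markov w.r.t.\ $\mathcal{G}$, so its density factorizes as $\prod_{i=1}^n p^a(z_i\mid\pa_i)$, where each $p^a(\cdot\mid\pa_i)$ is a version of the conditional of $Z_i^*$ given $\PA_i^*$ under $a$. Substituting condition 1 ($p^a(\cdot\mid\pa_i)=q_i$ for $i\in J$) and condition 2 ($p^a(\cdot\mid\pa_i)=p_i^{\mathfrak{C}}$ for $i\notin J$) yields exactly $\prod_{i\notin J}p_i^{\mathfrak{C}}(z_i\mid\pa_i)\prod_{i\in J}q_i(z_i\mid\pa_i)=p_\rZ^{\mathfrak{C};d}(\bm z)$, whence $\mathcal{L}^a(\rZ^*)=\mathcal{L}^{\mathfrak{C};d}(\rZ)$.

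For the converse I would assume $\mathcal{L}^a(\rZ^*)=\mathcal{L}^{\mathfrak{C};d}(\rZ)$ and recover conditions 1–3. Since interventions do not alter the graph, the right-hand side is Markov w.r.t.\ $\mathcal{G}$ (as noted at the start of this section), and therefore so is $\mathcal{L}^a(\rZ^*)$; this is condition 3. It remains to read off the conditionals: because both sides share the same joint density and the conditioning sets are the parent sets dictated by $\mathcal{G}$, the regular conditional of $Z_i^*$ given $\PA_i^*$ under $a$ agrees (for almost every $\pa_i$) with the corresponding factor of $p_\rZ^{\mathfrak{C};d}$, namely $q_i$ for $i\in J$ and $p_i^{\mathfrak{C}}$ for $i\notin J$. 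This is precisely conditions 1 and 2.

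The main obstacle is measure-theoretic rather than conceptual: one must justify that a joint density factorizing along $\mathcal{G}$ has factors equal, almost everywhere, to the conditionals of each node given its parents, and that these conditionals are determined uniquely up to null sets. Both are standard facts about Bayesian-network factorizations and regular conditional distributions, and they are available here because the standing assumption guarantees that $\mathcal{L}^a(\rZ^*)$ admits a density w.r.t.\ a product of $\sigma$-finite measures. I would take care that every equality of kernels is understood in this almost-everywhere sense, consistent with the paper's use of $\sim$.
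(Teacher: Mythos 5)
Your proposal is correct and follows essentially the same route as the paper: both directions reduce to identifying the factors of a Markov factorization along $\mathcal{G}$ with the conditionals of each node given its parents. The only cosmetic difference is that the paper packages the direction from $d\in\mathbf{Int}^{\mathcal{I}}_C(a)$ to distributional equality by constructing an auxiliary CBN $\mathfrak{A}$ and performing a full-node intervention, whereas you substitute conditions 1) and 2) directly into the factorized density — the same argument in different clothing.
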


\begin{proof}
    Let $d \in \mathcal{I}$ and $a\in \mathcal{A}$ be given. If $\mathcal{L}^{a}(\rZ^*)=\mathcal{L}^{\mathfrak{C};d}(\rX)$,
    this immediately implies 1)--3) in \Cref{def: tautology} and hence that
    $d\in \mathbf{Int}^{\mathcal{I}}_C(a)$.

     Assume that $d=\doop(j\gets q_j, j\in J)\in  \mathbf{Int}^{\mathcal{I}}_C(a)$. Since $\mathcal{L}^a(\rZ^*)$ is Markov w.r.t.\ the DAG of $\mathfrak{C}$, there exists some CBN $\mathfrak{A}$ that has the same DAG as $\mathfrak{C}$ such that $\mathcal{L}^a(\rZ^*)=\mathcal{L}^\mathfrak{A}(\rZ)$. We then have that
     \begin{align*}
         \mathcal{L}^a(\rZ^*)&=\mathcal{L}^\mathfrak{A}(\rZ)\\
         &=\mathcal{L}^{\mathfrak{A};\doop(j\gets q_j,j\in J; j\gets p_j^{\mathfrak{C}},j\notin J)}(\rZ)\\
         &=\mathcal{L}^{\mathfrak{C};\doop(j\gets q_j,j\in J)}(\rZ),
     \end{align*}
    where the second equality follows by using 1) and 2) of \Cref{def: tautology}. 
\end{proof}

The following is a formalization of a circularity that has been hinted at in previous works \citep{baumgartner2009interdefining,woodwardSEP,janzing2024phenomenological}.\footnote{For example: ``If, as Pearl apparently intends, we understand this [the notion of intervention] to include the requirement that an intervention on $X_i$
 must leave intact the causal mechanism if any, that connects $X_i$
 to its possible effects $Y$, then an obvious worry about circularity arises [...]'' \citep{woodwardSEP}.}
\begin{corollary}{\textbf{$\int_C$ is circular.}}\label{prop: tautology2}
    Let a data-generating process $\mathcal{D}$ and representation $\rZ^*$ be given. Then every compatible CBN $\mathfrak{C}$ 
    is an $\mathcal{I}-\mathbf{Int}_C$ valid model of $\rZ^*$ for every set of interventions $\mathcal{I}$ in $\mathfrak{C}$.
\end{corollary}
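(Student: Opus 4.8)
The plan is to derive \Cref{prop: tautology2} directly from \Cref{prop: tautology1}, which already does essentially all the work. The corollary asserts that for \emph{every} compatible CBN $\mathfrak{C}$ and \emph{every} set of interventions $\mathcal{I}$ in $\mathfrak{C}$, the model is $\mathcal{I}\text{-}\mathbf{Int}_C$ valid. Unpacking \Cref{def: interventionally valid}, this means I must show that for all $a\in\mathcal{A}$ and all $d\in\mathcal{I}$, the implication $d\in\mathbf{Int}_C^{\mathcal{I}}(a)\Rightarrow \mathcal{L}^a(\rZ^*)=\mathcal{L}^{\mathfrak{C};d}(\rZ)$ holds.

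The key observation is that \Cref{prop: tautology1} establishes the \emph{biconditional} $d\in\mathbf{Int}_C^{\mathcal{I}}(a)\iff \mathcal{L}^a(\rZ^*)=\mathcal{L}^{\mathfrak{C};d}(\rZ)$. In particular, the forward direction of this biconditional is exactly the implication required by the definition of interventional validity. So the proof is a one-line invocation: fix an arbitrary compatible CBN $\mathfrak{C}$ and an arbitrary set of interventions $\mathcal{I}$; then for any $a\in\mathcal{A}$ and $d\in\mathcal{I}$, whenever $d\in\mathbf{Int}_C^{\mathcal{I}}(a)$, \Cref{prop: tautology1} immediately yields $\mathcal{L}^a(\rZ^*)=\mathcal{L}^{\mathfrak{C};d}(\rZ)$, which is precisely the condition in \Cref{def: interventionally valid}. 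Since $\mathfrak{C}$ and $\mathcal{I}$ were arbitrary, the claim holds for every compatible CBN and every choice of modeled interventions.

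There is essentially no obstacle here — the content of the corollary is entirely front-loaded into \Cref{prop: tautology1}, and the corollary is really a reinterpretation of that proposition through the lens of the validity definition. The only thing to be careful about is quantifier bookkeeping: I must make sure I am genuinely proving the statement for arbitrary $\mathcal{I}$ (not just some canonical choice) and arbitrary $a,d$, which is automatic since \Cref{prop: tautology1} is itself stated for an arbitrary $\mathcal{I}$. The conceptual ``circularity'' flagged in the corollary's title is not something to be proved but rather the upshot: because $\mathbf{Int}_C$ was defined precisely to hold when the action reproduces the model's interventional distribution, validity becomes vacuous. I would keep the written proof to a sentence or two, explicitly citing \Cref{prop: tautology1} and \Cref{def: interventionally valid}.
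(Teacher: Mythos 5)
Your proposal is correct and matches the paper's (implicit) argument exactly: the statement is presented as an immediate corollary of \Cref{prop: tautology1}, whose forward direction is precisely the implication demanded by \Cref{def: interventionally valid}, with the quantification over $\mathfrak{C}$, $\mathcal{I}$, $a$, and $d$ handled just as you describe. Nothing further is needed.
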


\Cref{prop: tautology2} highlights the need for alternative interpretations of which actions constitute interventions since otherwise there is nothing `causal' about a causal Bayesian network: Observational and interventional validity are equivalent under interpretation $\int_C$
and interventional predictions do not help with falsification (in contrast to common assumptions, see also \Cref{fn:falsification}).
If we dropped condition 3) of \Cref{def: tautology}, 
then every compatible CBN with a complete DAG would still be interventionally valid.

\section{Impossibility result for non-circular interpretations}\label{sec: interpretations}

We now present five desiderata \textbf{D0}--\textbf{D4} for interpretations of actions as interventions. 
Since each desideratum appears intuitively reasonable, one might expect that a reasonable interpretation should satisfy all of them. 
We show in \Cref{prop: impossibiliy} that if an interpretation satisfies \textbf{D1}--\textbf{D4}, then it is 
the circular interpretation $\mathbf{Int}_C$ (\Cref{def: tautology}), which renders all compatible models interventionally valid.

\paragraph{\textbf{Desideratum} \textbf{D0}:}
\textbf{Correct conditionals on intervened nodes.}
If we interpret action $a$ as an intervention $\doop(j\leftarrow q_j, j\in J)$, then that action must set
the conditional distribution of intervened nodes given their parents correctly. 
Formally, an interpretation $\mathbf{Int}$ satisfies desideratum \textbf{D0} if
\begin{enumerate}
    \p For every set of modeled interventions $\mathcal{I}$ and every action $a\in \mathcal{A}$,\newline if $\doop(j\leftarrow q_j, j\in J)\in \mathbf{Int}^{\mathcal{I}}(a)$, then $\mathcal{L}^a(Z_i^*\mid \PA_i^*)\sim q_i$ for all $i\in J$. 
\end{enumerate}

 We believe that any reasonable interpretation satisfies \textbf{D0} and therefore do not consider interpretations that may violate \textbf{D0}.
  In the context of hard interventions, \textbf{D0} is sometimes referred to as `effectiveness' \citep{galles1998axiomatic,Bareinboim2022, Ibeling2023}. Effectiveness is similarly considered an axiom in \citet{park2023measure}.

\paragraph{\textbf{Desideratum} \textbf{D1}:} \textbf{If it behaves like an intervention, it is that intervention.}
If an action $a$ induces a 
distribution that equals a distribution induced by the model under an intervention in the intervention set, then we should interpret that action as that intervention. Formally, an interpretation $\mathbf{Int}$ satisfies desideratum \textbf{D1} if
\begin{enumerate}
    \p For every set of modeled interventions $\mathcal{I}$, every action $a\in \mathcal{A}$, and every intervention $d\in\mathcal{I}$, 
    if $\mathcal{L}^a(\rZ^*)=\mathcal{L}^{\mathfrak{C};d}(\rZ)$, then $d\in \mathbf{Int}^{\mathcal{I}}(a)$.\footnote{If an interpretation $\mathbf{Int}$ additionally satisfies the reverse implication of \textbf{D1}, 
such that ``For every set of modeled interventions $\mathcal{I}$, actions $a\in \mathcal{A}$, and $d\in \mathcal{I}$,  $\mathcal{L}^a(\rZ^*)=\mathcal{L}^{\mathfrak{C};d}(\rZ)$ if and only if $d\in \mathbf{Int}^{\mathcal{I}}(a)$'',
then $\mathbf{Int}^{\mathcal{I}}(a)=\mathbf{Int}^{\mathcal{I}}_C(a)$ for every action $a\in \mathcal{A}$ and every set of interventions $\mathcal{I}$, rendering any compatible CBN $\mathcal{I}-\mathbf{Int}$ valid for every set of interventions $\mathcal{I}$ in that CBN.} 
    
\end{enumerate}

\paragraph{\textbf{Desideratum} \textbf{D2}:} \textbf{An action should not be interpreted as distinct interventions.} If the action $a$ is interpreted as two distinct interventions, then these two interventions should induce the same interventional distribution.
Formally, an interpretation $\mathbf{Int}$ satisfies desideratum \textbf{D2} if
\begin{enumerate}
    \p For every set of modeled interventions $\mathcal{I}$,  every action $a\in \mathcal{A}$,
    and every interventions $b,d\in\mathcal{I}$, if $d\in \mathbf{Int}^{\mathcal{I}}(a)$ and $b\in \mathbf{Int}^{\mathcal{I}}(a)$, then $\mathcal{L}^{\mathfrak{C};d}(\rZ)=\mathcal{L}^{\mathfrak{C};b}(\rZ)$.
\end{enumerate}

\paragraph{\textbf{Desideratum} \textbf{D3}:} \textbf{Interpretations should not depend on the intervention set~$\mathcal{I}$.} Whether we interpret an action $a$ as an intervention $d\in \mathcal{I}$ should not depend on which other interventions are in $\mathcal{I}$. Formally, an interpretation $\mathbf{Int}$ satisfies desideratum \textbf{D3} if
\begin{enumerate}

    \p For every sets of modeled interventions $\mathcal{I}$ and $\mathcal{I}'$, every action $a\in \mathcal{A}$, and every intervention $d\in\mathcal{I}\cap\mathcal{I}'$, $d \in \mathbf{Int}^{\mathcal{I}}(a) \Leftrightarrow d\in \mathbf{Int}^{\mathcal{I}'}(a)$.
\end{enumerate}

\paragraph{\textbf{Desideratum} \textbf{D4}:} \textbf{An intervention does not create new dependencies.} If an action $a$ does not induce a distribution that is Markov w.r.t.\ the DAG, then we should not interpret $a$ as an intervention (in this work, as is common, we only consider interventions that do not introduce dependencies between variables).
 Formally, an interpretation $\mathbf{Int}$ satisfies desideratum \textbf{D4} if
\begin{enumerate}
    \p For every set of modeled interventions $\mathcal{I}$ and every action $a\in \mathcal{A}$, if $\mathcal{L}^a(\rZ^*)$ is not Markov w.r.t.\ $\mathcal{G}$, then $\mathbf{Int}^{\mathcal{I}}(a)=\emptyset$. 
\end{enumerate}

\- %
\begin{proposition}{\textbf{Impossibility result.}}\label{prop: impossibiliy}
    Let a data-generating process $\mathcal{D}$, representation $\rZ^*$, and a compatible CBN $\mathfrak{C}$ be given. Let $\mathbf{Int}$ be an interpretation that satisfies desiderata \textbf{D1}--\textbf{D4}. 
    Then, for every set of modeled interventions $\mathcal{I}$ in $\mathfrak{C}$ and for all actions $a\in \mathcal{A}$, $\mathbf{Int}^{\mathcal{I}}(a)= \mathbf{Int}_C^{\mathcal{I}}(a)$.
\end{proposition}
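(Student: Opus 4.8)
The plan is to prove the two inclusions $\mathbf{Int}_C^{\mathcal{I}}(a)\subseteq\mathbf{Int}^{\mathcal{I}}(a)$ and $\mathbf{Int}^{\mathcal{I}}(a)\subseteq\mathbf{Int}_C^{\mathcal{I}}(a)$ separately, using throughout \Cref{prop: tautology1} as a clean characterization of $\mathbf{Int}_C$, namely that $d\in\mathbf{Int}_C^{\mathcal{I}}(a)$ iff $\mathcal{L}^a(\rZ^*)=\mathcal{L}^{\mathfrak{C};d}(\rZ)$. The first inclusion is immediate from \textbf{D1}: if $d\in\mathbf{Int}_C^{\mathcal{I}}(a)$ then $\mathcal{L}^a(\rZ^*)=\mathcal{L}^{\mathfrak{C};d}(\rZ)$ by \Cref{prop: tautology1}, and \textbf{D1} then yields $d\in\mathbf{Int}^{\mathcal{I}}(a)$. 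Only the reverse inclusion requires work, and it is exactly the statement that membership in $\mathbf{Int}^{\mathcal{I}}(a)$ forces the distributional equality (the converse of \textbf{D1}).

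For the reverse inclusion I fix $d=\doop(j\gets q_j,j\in J)\in\mathbf{Int}^{\mathcal{I}}(a)$ and aim to show $\mathcal{L}^a(\rZ^*)=\mathcal{L}^{\mathfrak{C};d}(\rZ)$. Since $\mathbf{Int}^{\mathcal{I}}(a)$ is nonempty, the contrapositive of \textbf{D4} forces $\mathcal{L}^a(\rZ^*)$ to be Markov w.r.t.\ the DAG $\mathcal{G}$. The idea is then to realize $\mathcal{L}^a(\rZ^*)$ itself as an interventional distribution of $\mathfrak{C}$ and play it off against $d$ using \textbf{D2}. Because $\mathcal{L}^a(\rZ^*)$ is Markov and has a density w.r.t.\ a product of $\sigma$-finite measures, it factorizes as $\prod_i\mathcal{L}^a(Z_i^*\mid\PA_i^*)$, so the all-node intervention $d^*$ replacing each kernel $p_i^{\mathfrak{C}}$ with a kernel inducing $\mathcal{L}^a(Z_i^*\mid\PA_i^*)$ satisfies $\mathcal{L}^{\mathfrak{C};d^*}(\rZ)=\mathcal{L}^a(\rZ^*)$; this is the emulation construction given after \Cref{def: generation}, now run on the DAG of $\mathfrak{C}$ itself, which is legitimate precisely because of the Markov property just obtained.

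The delicate point, and what I expect to be the main obstacle, is certifying that $d^*$ is a genuine intervention, i.e.\ that $\mathcal{L}^a(\rZ^*)\neq\mathcal{L}^{\mathfrak{C}}(\rZ)$ (interventions are assumed non-observational). This is where the standing effectiveness assumption \textbf{D0} enters. If instead $\mathcal{L}^a(\rZ^*)=\mathcal{L}^{\mathfrak{C}}(\rZ)$, then the conditional of each node given its parents under $\mathcal{L}^a$ equals $p_i^{\mathfrak{C}}$; but \textbf{D0} applied to $d\in\mathbf{Int}^{\mathcal{I}}(a)$ gives $\mathcal{L}^a(Z_i^*\mid\PA_i^*)\sim q_i$ for $i\in J$, forcing $q_i=p_i^{\mathfrak{C}}$ on $J$ and hence $\mathcal{L}^{\mathfrak{C};d}(\rZ)=\mathcal{L}^{\mathfrak{C}}(\rZ)$, contradicting that $d$ is a genuine intervention. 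So this degenerate case cannot occur and $d^*$ is bona fide. This degenerate possibility is exactly what \textbf{D0} excludes and cannot be ruled out by \textbf{D1}--\textbf{D4} alone (one could otherwise map the observational regime $\mathcal{O}$, for which $\mathbf{Int}_C^{\mathcal{I}}(\mathcal{O})=\emptyset$, to a fixed intervention while satisfying \textbf{D1}--\textbf{D4} vacuously), which is why \textbf{D0} is imposed throughout.

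With $d^*$ in hand I finish by chaining the remaining desiderata on the enlarged set $\mathcal{I}'=\mathcal{I}\cup\{d^*\}$: \textbf{D1} gives $d^*\in\mathbf{Int}^{\mathcal{I}'}(a)$ since $\mathcal{L}^a(\rZ^*)=\mathcal{L}^{\mathfrak{C};d^*}(\rZ)$; \textbf{D3} transports $d\in\mathbf{Int}^{\mathcal{I}}(a)$ to $d\in\mathbf{Int}^{\mathcal{I}'}(a)$ since $d\in\mathcal{I}\cap\mathcal{I}'$; and \textbf{D2}, applied to $d,d^*\in\mathbf{Int}^{\mathcal{I}'}(a)$, yields $\mathcal{L}^{\mathfrak{C};d}(\rZ)=\mathcal{L}^{\mathfrak{C};d^*}(\rZ)=\mathcal{L}^a(\rZ^*)$. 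By \Cref{prop: tautology1} this is exactly $d\in\mathbf{Int}_C^{\mathcal{I}}(a)$, giving the reverse inclusion and hence equality for every $a$ and every $\mathcal{I}$ (when $\mathbf{Int}^{\mathcal{I}}(a)=\emptyset$ both sides are empty by the first inclusion). The crux remains the construction and non-degeneracy certification of the witness $d^*$, as that is what converts the soft membership hypothesis into the hard distributional equality.
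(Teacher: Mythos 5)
Your proof is correct and follows essentially the same route as the paper's: the inclusion $\mathbf{Int}_C^{\mathcal{I}}(a)\subseteq\mathbf{Int}^{\mathcal{I}}(a)$ from \textbf{D1} together with \Cref{prop: tautology1}, and the reverse inclusion by using \textbf{D4} to get Markovianity, constructing a witness intervention realizing $\mathcal{L}^a(\rZ^*)$, enlarging the intervention set, and chaining \textbf{D1}, \textbf{D3}, \textbf{D2}.

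The one place where you go beyond the paper is the non-degeneracy check for the witness $d^*$, and you are right to flag it: the paper's proof simply asserts that ``we can find an intervention $b$ such that $\mathcal{L}^{\mathfrak{C};b}(\rZ)=\mathcal{L}^a(\rZ^*)$,'' which fails when $\mathcal{L}^a(\rZ^*)=\mathcal{L}^{\mathfrak{C}}(\rZ)$, since \Cref{def: CGM} requires interventions to induce a distribution different from the observational one. In that degenerate case (which always occurs at $a=\mathcal{O}$) one can indeed define an interpretation satisfying \textbf{D1}--\textbf{D4} that maps $a$ to a nonempty set while $\mathbf{Int}_C^{\mathcal{I}}(a)=\emptyset$, so the desiderata as listed do not by themselves force the conclusion there; your appeal to \textbf{D0} (a standing assumption in the paper, though not among the proposition's stated hypotheses) closes this gap cleanly, since \textbf{D0} together with $\mathcal{L}^a(\rZ^*)=\mathcal{L}^{\mathfrak{C}}(\rZ)$ would make the kernels of $d$ agree almost everywhere with the observational ones and hence contradict $d$ being an intervention. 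So your write-up is, if anything, more careful than the paper's on this point; the rest is the same argument.
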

\begin{proof}
    Let $a\in \mathcal{A}$ and $\mathcal{I}$ be given.
    From \Cref{prop: tautology1} and \textbf{D1}, it follows that $\mathbf{Int}^{\mathcal{I}}_C(a)\subseteq \mathbf{Int}^{\mathcal{I}}(a)$.
    Assume that $d\in \mathbf{Int}^{\mathcal{I}}(a)$. By \textbf{D4} we have that $\mathcal{L}^a(\rZ^*)$ is Markov w.r.t.\ the DAG of $\mathfrak{C}$. Since $\mathcal{L}^a(\rZ^*)$ is Markov w.r.t.\ the DAG of $\mathfrak{C}$, we can find an intervention $b$ such that $\mathcal{L}^{\mathfrak{C};b}(\rZ)=\mathcal{L}^a(\rZ^*)$, namely an intervention $b$ such that $\mathcal{L}^a(Z_i^*\mid \PA_i^*)\sim p_{i}^{\mathfrak{C};b}$ for all $i\in [n]$.
    Consider $\widetilde{\mathcal{I}}=\mathcal{I}\cup \{b\}$. By condition \textbf{D1}, $b\in  \mathbf{Int}^{\widetilde{\mathcal{I}}}(a)$, and by condition \textbf{D3}, $d\in \int^{\widetilde{\mathcal{I}}}(a)$. By condition \textbf{D2}, this implies that $\mathcal{L}^{\mathfrak{C};d}(\rZ)=\mathcal{L}^{\mathfrak{C};b}(\rZ)$. \Cref{prop: tautology1} then gives us that $d\in  \mathbf{Int}^{\mathcal{I}}_C(a)$ since $\mathcal{L}^a(\rZ^*)=\mathcal{L}^{\mathfrak{C};d}(\rZ)$. 
\end{proof}

$\mathbf{Int}_C$ satisfies \textbf{D0}, so desiderata \textbf{D1}--\textbf{D4} together imply \textbf{D0}.

\section{Non-circular interpretations}\label{sec: non-circular}

We now consider, in turn, possible interpretations that may
violate either one of the desiderata \textbf{D1} and \textbf{D2} to avoid the circularity of \Cref{prop: tautology2} implied by satisfying all desiderata. In \Cref{app: intervetion set}  and \Cref{app: markov}, we consider interpretations that may violate \textbf{D3} and \textbf{D4}, respectively.
Taken together, this shows that no proper subset of the desiderata \textbf{D1}--\textbf{D4} implies any other of the desiderata \textbf{D1}--\textbf{D4}, so the desiderata can be considered separately.
In \Cref{sec: complexity}, we consider an interpretation that takes action complexity into account. See \Cref{table: interpretations} for an overview of all interpretations considered in this paper.

\begin{table}[!ht]
\centering
\caption{Overview of interpretations presented in this work.}\label{table: interpretations}
\centering
\scriptsize
\adjustbox{center=\textwidth}{
\begin{tabular}{m{2cm} m{4.7cm} m{3cm} m{4cm}} %
\toprule
\textbf{Interpretation} & \textbf{Definition: Interpret action $a$ as intervention $d$ if..} & \textbf{Violated desiderata} & \textbf{Can falsify a model?} \\ \midrule

\makecell[l]{$\int_C$ \\ \Cref{def: tautology} \\(Circular)}
& 
\vspace{0.3cm}
\begin{enumerate}[leftmargin=*, labelindent=0pt, itemindent=0pt]
    \item[..] $a$ sets the conditionals of intervened nodes correctly,
    \item[$\wedge$] $a$ changes no other conditionals,
    \item[$\wedge$] $a$ does not introduce dependencies.
\end{enumerate} & \textbf{None.}
&\begin{tabular}{@{}p{4cm}@{}}
\textbf{No.}\\ ($\int_C$ is circular, \Cref{prop: tautology2}.)
\end{tabular} 
 \\ \midrule

\makecell[l]{$\int_P$ \\ \Cref{def: perfect} \\(Perfect)}
& 
\vspace{0.3cm}
\begin{enumerate}[leftmargin=*, labelindent=0pt, itemindent=0pt]
    \item[..] $a$ sets the conditionals of intervened nodes correctly,
    \item[$\wedge$]  non-intervened nodes not independent of parents,
    \item[$\wedge$] $a$ does not introduce dependencies.
\end{enumerate}
& \vspace{-0.3cm}\begin{tabular}{@{}p{3cm}@{}}
\textbf{D1} \\(If it behaves like an intervention, it is that intervention.)
\end{tabular}
&\vspace{-0.3cm}\begin{tabular}{@{}p{4cm}@{}}
\textbf{Yes.} \\ (Falsified if an action behaves like an imperfect intervention, \Cref{prop: intA}.) 
\end{tabular} 
\\ \midrule

\makecell[l]{$\int_S$ \\ \Cref{def: single node} \\(Single-node)}
& 
\vspace{0.3cm}
\begin{enumerate}[leftmargin=*, labelindent=0pt, itemindent=0pt]
    \item[..] $a$ sets the conditionals of intervened nodes correctly,
    \item[$\wedge$] $a$ changes the distribution of intervened nodes
    \item[$\wedge$]  $a$ does not introduce dependencies.
\end{enumerate}
& \vspace{-0.3cm}\begin{tabular}{@{}p{3cm}@{}}
     \textbf{D2}\\ (An action should not be interpreted as distinct interventions).
\end{tabular}
& \vspace{-0.3cm}\begin{tabular}{@{}p{4cm}@{}}
\textbf{Yes.} \\(Falsified is an action behaves like a multi-node intervention, \Cref{prop: d2}.) 
\end{tabular} \\ \midrule

\makecell[l]{$\int_{\widetilde{\mathcal{I}},f}$ \\ \Cref{def: intervention set} \\($f$-least in $\widetilde{\mathcal{I}}$)}
&\vspace{0.3cm}\begin{enumerate}[leftmargin=*, labelindent=0pt, itemindent=0pt]
    \item[..] $d\in \int_C^{\mathcal{I}}(a)$,
    \item[$\vee$] $d$ is least element in $\widetilde{\mathcal{I}} \cap \int_S^{\mathcal{I}}(a)$, in the strict total order on $\widetilde{\mathcal{I}}$ induced by $f$.
\end{enumerate}
& \vspace{-0.3cm}\begin{tabular}{@{}p{3cm}@{}}
 \textbf{D3}\\ (Interpretations should not depend on the intervention set $\mathcal{I}$.)
\end{tabular}
& \begin{tabular}{@{}p{4cm}@{}}
\textbf{Yes.} (See \Cref{ex: intervention set}.) 
\end{tabular} \\ \midrule

\makecell[l]{$\int_M$ \\ \Cref{def: markov} \\(Markov)}
& \vspace{0.3cm}\begin{enumerate}[leftmargin=*, labelindent=0pt, itemindent=0pt]
    \item[..] $a$ sets the conditionals of intervened nodes correctly,
    \item[$\wedge$] $a$ changes no other conditionals.
\end{enumerate}
& \vspace{-0.3cm} \begin{tabular}{@{}p{3cm}@{}}
\textbf{D4} \\ (An intervention does not create new dependencies.)
\end{tabular}
& \vspace{-0.3cm}\begin{tabular}{@{}p{4cm}@{}}
\textbf{Yes, but} can only falsify CBNs with non-complete DAGs.\\ (Falsified if an action introduces dependencies between variables, \Cref{prop: intM}.)
\end{tabular} \\[.8cm] \midrule

\makecell[l]{$\int_K$ \\ \Cref{def: complexity} \\(Complexity $K$)\\[.1cm]}
& \vspace{0.3cm}
\begin{enumerate}[leftmargin=*, labelindent=0pt, itemindent=0pt]
    \item[..] $a\in \underset{a\in \mathcal{A}: d\in \int_S^{\mathcal{I}}(a)}{\arg \min}K(a)$.
\end{enumerate}
&\textbf{D1} and \textbf{D2}.
& \textbf{Yes.} (See \Cref{ex: RL}.) \\
\bottomrule
\end{tabular}}

\end{table}

\FloatBarrier

\subsection{$\int_P$: Letting imperfect interventions falsify a model is one way out of circularity}\label{sec: intA}

\paragraph{Informal overview of \Cref{sec: intA}.} If we insist that all actions correspond to perfect interventions, then it becomes possible to falsify a causal model: If we perform an action and the resulting observation cannot be explained by a perfect intervention, the causal model can be rejected. 
Explaining these observations by imperfect soft interventions may be a slippery slope leading to a circular interpretation where every observed distribution can be explained by some complex intervention in the model. In \Cref{ex: TC}, we show how to 
falsify a causal model under $\int_P$ (to be defined in \Cref{def: perfect}). 

Consider the following interpretation that may violate \textbf{D1},
but satisfies \textbf{D0} and \textbf{D2}--\textbf{D4}. 

\begin{definition}{\textbf{$\int_P$. An interpretation violating only D1.}}\label{def: perfect}
Let a data-generating process $\mathcal{D}$, representation $\rZ^
*$, compatible CBN $\mathfrak{C}$, and set of interventions $\mathcal{I}$ in $\mathfrak{C}$ be given. 
We define the interpretation $\mathbf{Int}_A$ by the following rule:
    An intervention $d=\doop(j \leftarrow q_j, j\in J)\in \mathcal{I}$ is in $\int_P^{\mathcal{I}}(a)$ if and only if the following four conditions hold:
    \begin{enumerate}
        \item[1)] $d$ is a perfect intervention.
        \item[2)] For all $i\in J$, $$\mathcal{L}^a(Z_i^*\mid \PA_i^*)\sim q_i.$$
        {That is, $\int_P$ satisfies \textbf{D0} (correct conditionals on intervened nodes).}
        \item[3)]  For all $i\notin J$,
        \begin{align*}
               & \text{$\PA^*_i$ is empty and } \mathcal{L}^a(Z_i^*)=\mathcal{L}^\mathcal{\mathfrak{C}}(Z_i) \text{, or}\\
               & \text{$\PA^*_i$ is nonempty and }Z_i^* \nindep \PA_i^* \text{ in } \mathcal{L}^a(\rZ^*).
        \end{align*}
        That is, nodes not intervened on are either source nodes with unchanged distributions or not independent of their parents.

         \item[4)] $\mathcal{L}^a(\rZ^*)$ is Markov w.r.t.\ the DAG of $\mathfrak{C}$.
         {That is, $\int_P$ satisfies \textbf{D4} (an intervention does not create new dependencies)}.\footnote{%
         \Cref{prop: intA} would still hold if we omitted condition 4) of \Cref{def: perfect}.
         }
    \end{enumerate}
\end{definition}

The $P$ in $\int_P$ is for `perfect'. 
Condition 3) ensures that $\int_P$ satisfies 
\textbf{D2} (an action should not be 
interpreted as distinct interventions), see \Cref{app: intP sat D2}.
{It is straightforward to verify that $\int_P$ satisfies \textbf{D3} (interpretations should not depend on the intervention set $\mathcal{I}$)}.  $\int_P$ may violate \textbf{D1} (if it behaves like an intervention, it is that intervention) because it may be that $\mathcal{L}^a(\rZ^*)=\mathcal{L}^{\mathfrak{C};d}(\rZ)$ for some $d\in \mathcal{I}$ that is not a perfect intervention and some $a\in\mathcal{A}$, and thus $d\notin \int_P^{\mathcal{I}}(a)$. Under a non-circular interpretation like $\int_P$, a CBN $\mathfrak{C}$ can be an invalid model of a representation even though the representation is emulated by $\mathfrak{C}$. We now provide a partial characterization for when this happens under interpretation $\int_P$.

\begin{proposition}\label{prop: intA}
    Let a data-generating process $\mathcal{D}$ be given. Assume that $\rZ^*$ is emulated by CBN $\mathfrak{C}$ and interventions $\mathcal{I}^*$. 

    (1) If $\mathcal{I}^*$ only contains perfect interventions, then $\mathfrak{C}$ is $\mathcal{I}-\int_P$ valid model of $\rZ^*$
    for every set of interventions $\mathcal{I}$ in $\mathfrak{C}$. 

    (2) On the other hand, if $\mathcal{I}^*$ contains a minimal and decomposable intervention $\doop(j\leftarrow q_j, j\in J)$ for which there exist $s,t\in J$ such that $Z_s \indep \PA_s$ and $Z_t \nindep \PA_t$ in $\mathcal{L}^{\mathfrak{C};\doop(j\leftarrow q_j, j\in J)}(\rZ)$, 
     then there exists a set of interventions $\mathcal{I}$ in $\mathfrak{C}$ such that $\mathfrak{C}$ is not an $\mathcal{I}-\int_P$ valid model of $\rZ^*$.  
\end{proposition}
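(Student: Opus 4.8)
The plan is to handle the two parts by quite different strategies: part (1) by a direct node-by-node comparison of Markov factorizations, and part (2) by constructing an explicit ``decoy'' perfect intervention that $\int_P$ is forced to accept for a suitable action yet which induces the wrong distribution.

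For part (1), I would first unpack the emulation hypothesis. For every $a\neq\mathcal{O}$ the link $g$ gives $\mathcal{L}^a(\rZ^*)=\mathcal{L}^{\mathfrak{C};g(a)}(\rZ)$ with $g(a)=\doop(k\leftarrow r_k,k\in K)$ perfect; since this distribution is Markov w.r.t.\ the DAG, the conditional of $Z_i^*$ given $\PA_i^*$ is $r_i$ for $i\in K$ and $p_i^{\mathfrak{C}}$ otherwise, and perfection of $g(a)$ forces $Z_i^*\indep\PA_i^*$ for every $i\in K$. Now suppose some $d=\doop(j\leftarrow q_j,j\in J)\in\int_P^{\mathcal{I}}(a)$; I would show the kernel of $d$ at each node coincides with that of $g(a)$, so that $\mathcal{L}^{\mathfrak{C};d}=\mathcal{L}^{\mathfrak{C};g(a)}=\mathcal{L}^a(\rZ^*)$. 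For $i\in J$, condition~2 of $\int_P$ identifies $q_i$ with the action's conditional, which is exactly the kernel $g(a)$ uses at $i$. The delicate case is $i\in K\setminus J$, where $d$ keeps $p_i^{\mathfrak{C}}$ while $g(a)$ uses $r_i$, so I must argue $r_i=p_i^{\mathfrak{C}}$: condition~3 splits this into a parent-having node, for which condition~3 demands $Z_i^*\nindep\PA_i^*$, contradicting the independence obtained from perfection (hence this subcase is vacuous), and a source node, for which condition~3 gives $\mathcal{L}^a(Z_i^*)=\mathcal{L}^{\mathfrak{C}}(Z_i)$, i.e.\ $r_i=p_i^{\mathfrak{C}}$. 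Finally I would dispose of $a=\mathcal{O}$ separately: there condition~2 forces $q_i=p_i^{\mathfrak{C}}$ on all intervened nodes, collapsing $\mathcal{L}^{\mathfrak{C};d}$ to the observational distribution and contradicting the standing assumption that interventions change the distribution, so no $d$ is accepted and validity holds vacuously.

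For part (2), I would use surjectivity of $g$ to pick an action $a$ with $g(a)=d^*:=\doop(j\leftarrow q_j,j\in J)$, the given minimal and decomposable intervention with $s,t\in J$, $Z_s\indep\PA_s$, $Z_t\nindep\PA_t$ in $\mathcal{L}^{\mathfrak{C};d^*}$ (note $s\neq t$). The governing idea is that $\int_P$ can only match $a$ to a perfect intervention, and since $Z_t^*\nindep\PA_t^*$, node $t$ can never lie in the intervened set of an accepted intervention. Concretely, I would set
\[
J':=\{\, i : Z_i^*\indep\PA_i^*\text{ in }\mathcal{L}^a(\rZ^*)\text{ and }(i\text{ non-source, or }\mathcal{L}^a(Z_i^*)\neq\mathcal{L}^{\mathfrak{C}}(Z_i))\,\}
\]
and take $d=\doop(i\leftarrow\kappa_i,i\in J')$, where $\kappa_i$ is the perfect conditional $\mathcal{L}^a(Z_i^*\mid\PA_i^*)$. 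A routine check against \Cref{def: perfect} shows $\int_P$ accepts $d$: condition~1 since $\kappa_i$ is perfect on $J'$, condition~2 by construction, condition~4 since $\mathcal{L}^a(\rZ^*)$ is Markov, and condition~3 on $[n]\setminus J'$ precisely by how $J'$ was defined (non-source nodes left out are parent-dependent, source nodes left out have unchanged marginal). Then I would identify $\mathcal{L}^{\mathfrak{C};d}$ with $\mathcal{L}^{\mathfrak{C};\doop(j\leftarrow q_j,\,j\in J\cap J')}(\rZ)$, because on $J'\setminus J$ and off $J'$ the kernel of $d$ is the observational $p_i^{\mathfrak{C}}$ anyway, so $d$ genuinely alters kernels only on $J\cap J'$.

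The crux, and the step I expect to require the most care, is showing $J\cap J'$ is a nonempty proper subset of $J$ and converting this into $\mathcal{L}^{\mathfrak{C};d}\neq\mathcal{L}^a(\rZ^*)$. Properness is immediate since $t\notin J'$. For nonemptiness I must show $s\in J'$: if $s$ is non-source this follows from $Z_s\indep\PA_s$, while if $s$ is a source node I would invoke minimality to rule out $q_s=p_s^{\mathfrak{C}}$ (otherwise dropping $s$ from $d^*$—legitimate by decomposability, as $t$ remains—would reproduce $\mathcal{L}^{\mathfrak{C};d^*}$), whence the marginal of $s$ is changed and again $s\in J'$. With $\emptyset\neq J\cap J'\subsetneq J$, decomposability gives $\mathcal{L}^{\mathfrak{C};d}\neq\mathcal{L}^{\mathfrak{C}}$ (so $d$ is a bona fide intervention that may be placed in any $\mathcal{I}$), while minimality gives $\mathcal{L}^{\mathfrak{C};d}=\mathcal{L}^{\mathfrak{C};\doop(j\leftarrow q_j,\,j\in J\cap J')}\neq\mathcal{L}^{\mathfrak{C};d^*}=\mathcal{L}^a(\rZ^*)$. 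Taking any $\mathcal{I}\ni d$ then yields $d\in\int_P^{\mathcal{I}}(a)$ with $\mathcal{L}^a(\rZ^*)\neq\mathcal{L}^{\mathfrak{C};d}$, so $\mathfrak{C}$ is not an $\mathcal{I}$-$\int_P$ valid model of $\rZ^*$, as desired.
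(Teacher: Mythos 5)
Your part (1) is correct and is essentially the paper's argument: compare the kernels of $d$ and of the emulating perfect intervention $g(a)$ node by node, using condition 2) of \Cref{def: perfect} on the intervened nodes and condition 3) together with perfection of $g(a)$ to make the case of a non-source node in $K\setminus J$ vacuous and to handle source nodes via the unchanged marginal; your separate treatment of $a=\mathcal{O}$ is a sensible extra. Your construction in part (2) also coincides with the paper's: your $J'$ is exactly the paper's $\overline{J}\cup C$, your $\kappa_i$ are the same perfect kernels with marginals matched to $\mathcal{L}^{\mathfrak{C};d^*}(Z_i)$, and your verification that $d\in\int_P^{\mathcal{I}}(a)$ and that $J\cap J'$ is a nonempty proper subset of $J$ is fine.

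The gap is the unconditional identity $\mathcal{L}^{\mathfrak{C};d}=\mathcal{L}^{\mathfrak{C};\doop(j\leftarrow q_j,\,j\in J\cap J')}(\rZ)$, on which both your appeal to decomposability and your appeal to minimality rest. Your stated justification is already off: on $J'\setminus J$ the kernel of $d$ is $\kappa_i$, not $p_i^{\mathfrak{C}}$. More substantively, $\kappa_i$ and $q_i$ (and, on $J'\setminus J$, $\kappa_i$ and $p_i^{\mathfrak{C}}$) are merely two versions of the conditional $\mathcal{L}^{\mathfrak{C};d^*}(Z_i\mid\PA_i)$, i.e.\ they agree only $\mathcal{L}^{\mathfrak{C};d^*}(\PA_i)$-almost everywhere; once the ambient intervention changes, so does the law (and support) of $\PA_i$, and the kernels may disagree on a set of positive mass. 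Concretely, take the chain $Z_1\to Z_2\to Z_3$ with $\mathcal{L}^{\mathfrak{C}}(Z_1)=\text{Ber}(1/2)$, $p_2^{\mathfrak{C}}(\cdot\mid z_1)=\delta_{z_1}$, $p_3^{\mathfrak{C}}(\cdot\mid z_2)=\text{Unif}(\{z_2,z_2+1\})$, and $d^*=\doop(2\leftarrow \widetilde q_2,3\leftarrow\widetilde q_3)$ with $\widetilde q_2(\cdot\mid z_1)=\delta_{z_1+2}$ and $\widetilde q_3(\cdot\mid z_2)=\delta_5$ for $z_2\geq 2$, $\delta_{17}$ for $z_2<2$: this $d^*$ is minimal and decomposable with $Z_3\indep Z_2$ and $Z_2\nindep Z_1$ under $d^*$, one gets $J'=\{3\}$ and $\kappa_3=\delta_5$, yet $\mathcal{L}^{\mathfrak{C};\doop(3\leftarrow\kappa_3)}(Z_3)=\delta_5\neq\delta_{17}=\mathcal{L}^{\mathfrak{C};\doop(3\leftarrow\widetilde q_3)}(Z_3)$, so your identity fails. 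The paper avoids this by arguing by contradiction: \emph{assume} $\mathcal{L}^{\mathfrak{C};d}=\mathcal{L}^{\mathfrak{C};d^*}$; then all the almost-everywhere identifications are with respect to one and the same distribution, the kernels on $C=J'\setminus J$ can be exchanged for $p_i^{\mathfrak{C}}$ without altering the joint, and one obtains $\mathcal{L}^{\mathfrak{C};d^*}=\mathcal{L}^{\mathfrak{C};\doop(j\leftarrow q_j,\,j\in J\cap J')}(\rZ)$ with $J\cap J'$ a nonempty proper subset of $J$, contradicting minimality. You should restructure your final step this way (when $J'\subseteq J$ your $d$ is already an intervention on a proper subset of $J$ and minimality applies directly to its kernels, but when $J'\setminus J\neq\emptyset$ the contradiction argument is what removes the extra nodes).
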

\begin{proof}
(1)
    Assume that $\mathcal{I}^*$ contains only perfect interventions. Consider some $a\in \mathcal{A}$ and some intervention $d=\doop(j\leftarrow q_j,j\in J)\in \int_P^{\mathcal{I}}(a)$. We need to show that $\mathcal{L}^{\mathfrak{C};d}(\rZ)=\mathcal{L}^{a}(\rZ^*)$. By assumption, $\rZ^*$ is emulated by $\mathfrak{C}$ and $\mathcal{I}^*$, so there is a perfect intervention $d^*=g(a)=\doop(j\gets \widetilde{q}_{j},j\in \widetilde{J})\in \mathcal{I}^*$ such that $\mathcal{L}^{a}(\rZ^*)=\mathcal{L}^{\mathfrak{C}; d^*}(\rZ)$. We have
\begin{align*}
    \mathcal{L}^a(\rZ^*)&=\mathcal{L}^{\mathfrak{C};d^*}(\rZ)\\
    &=\mathcal{L}^{\mathfrak{C};\doop(j\gets \widetilde{q}_j,j\in \widetilde{J})}(\rZ)\\
    &=\mathcal{L}^{\mathfrak{C};\doop(j\gets \widetilde{q}_j,j\in \widetilde{J}\setminus J; \  j\gets q_j,j\in J)}(\rZ)\\
    &=\mathcal{L}^{\mathfrak{C};\doop(j\gets q_j,j\in J)}(\rZ)\\
    &=\mathcal{L}^{\mathfrak{C};d}(\rZ)
\end{align*}
    The third equality follows from condition 2) of \Cref{def: perfect}. The fourth equality follows from condition 3) and the fact that $d^*$ is perfect: 
    For every node $i\notin J$, either (1) $\PA_i$ is empty and $\mathcal{L}^{\mathfrak{C};d^*}(Z_i)=\mathcal{L}^{\mathfrak{C}}(Z_i)$, or (2) $\PA_i$ is nonempty and $Z_i\nindep \PA_i$ in $\mathcal{L}^{\mathfrak{C};d^*}(\rZ)$, which implies that $i\notin \widetilde{J}$ since $d^*$ is perfect.  

(2)
We want to find an action $a$, a set of interventions $\mathcal{I}$, and an intervention $d \in \int_P^{\mathcal{I}}(a)$ such that $\mathcal{L}^{a}(\rZ^*)\neq \mathcal{L}^{\mathfrak{C};d}(\rZ)$. By assumption, $\mathcal{I}^*$ contains a minimal and decomposable intervention $d^*=\doop(j\leftarrow \widetilde{q}_j, j\in \widetilde{J})$ for which there exist $s,t\in \widetilde{J}$ such that $Z_s \indep \PA_s$ and $Z_t \nindep \PA_t$ in $\mathcal{L}^{\mathfrak{C};d^*}(\rZ)$. 
     Let $\overline{J}:=\{j\in \widetilde{J} \mid Z_j\indep \PA_j \text{ in } \mathcal{L}^{\mathfrak{C};d^*}(\rZ)\} \subseteq\widetilde{J}\setminus\{t\} \subsetneq \widetilde{J}$, and let  $C:=\{j\in [n]\setminus \widetilde{J}\mid Z_j\indep \PA_j \text{ in } \mathcal{L}^{\mathfrak{C};d^*}(\rZ) \text{ and } \PA_j\neq \emptyset   \}$.
     Let $\doop(j\gets q_j, j\in \overline{J}\cup C)$ be a perfect intervention such that  $\mathcal{L}^{\mathfrak{C};\doop(j\gets q_j, j\in \overline{J}\cup C)}(Z_i)=\mathcal{L}^{\mathfrak{C};d^*}(Z_i)$ for all $i\in \overline{J}\cup C$.  
     For every $i\in \overline{J}\cup C$, $\mathcal{L}^{\mathfrak{C};\doop(j\gets q_j, j\in \overline{J}\cup C)}(Z_i)=\mathcal{L}^{\mathfrak{C};d^*}(Z_i)$ implies that $\mathcal{L}^{\mathfrak{C};d^*}(Z_i\mid \PA_i)\sim p_i^{\mathfrak{C};\doop(j\gets q_j, j\in \overline{J}\cup C)}$ 
     since $Z_i\indep \PA_i$ in $\mathcal{L}^{\mathfrak{C};d^*}(\rZ)$ and $p_i^{\mathfrak{C};\doop(j\gets q_j, j\in \overline{J}\cup C)}$ is perfect.
     Let the set of modeled interventions consist of this intervention, that is, $\mathcal{I}=\{\doop(j\leftarrow q_j,j\in \overline{J}\cup C)\}$. 
     We have that $\doop(j\leftarrow q_j,j\in \overline{J}\cup C)\in \int_P^{\mathcal{I}}(a)$ since
     1) it is perfect, 2) sets the conditionals of intervened nodes correctly, 3) every node $i$ not 
     in $\overline{J}\cup C$ is either a source node not in $\widetilde{J}$, or $Z_i\nindep \PA_i$ in $\mathcal{L}^{\mathfrak{C};d^*}(\rZ)$,\footnote{Let $i$ be a node not in $\overline{J}\cup C$. Since $i$ is not in $\overline{J}$, either $Z_i\nindep \PA_i$ in $\mathcal{L}^{\mathfrak{C};d^*}(\rZ)$ or $i$ is not in $\widetilde{J}$. If $Z_i\indep \PA_i$ in $\mathcal{L}^{\mathfrak{C};d^*}(\rZ)$ and $i$ is not in $\widetilde{J}$, $i$ must be a source node, otherwise $i$ would be in $C$.} and condition 4) of \Cref{def: perfect} trivially holds.

     If $\mathcal{L}^{\mathfrak{C};d^*}(\rZ)=\mathcal{L}^{\mathfrak{C};\doop(j\leftarrow q_j,j\in \overline{J}\cup C)}(\rZ)$, then $\mathcal{L}^{\mathfrak{C};\doop(j\leftarrow q_j,j\in \overline{J}\cup C)}(Z_i \mid \PA_i)\sim p^{\mathfrak{C}}_i$ for all $i\in C$ since nodes in $C$ are not intervened upon by $d^*$. Therefore, if $\mathcal{L}^{\mathfrak{C};d^*}(\rZ)=\mathcal{L}^{\mathfrak{C};\doop(j\leftarrow q_j,j\in \overline{J}\cup C)}(\rZ)$, we would also have that $\mathcal{L}^{\mathfrak{C};d^*}(\rZ)=\mathcal{L}^{\mathfrak{C};\doop(j\leftarrow q_j,j\in \overline{J})}(\rZ)$, but this is a contradiction since $\bar{J}\subsetneq \widetilde{J}$ and $d^*$ is minimal. Therefore, we can conclude that $\mathcal{L}^{\mathfrak{C};d^*}(\rZ)\neq\mathcal{L}^{\mathfrak{C};\doop(j\leftarrow q_j,j\in \overline{J}\cup C)}(\rZ)$ and that $\mathfrak{C}$ is not an $\mathcal{I}-\int_P$ valid model of $\rZ^*$.
\end{proof}

 If there exists an action that induces a distribution like an intervention in \Cref{prop: intA}~(2), that is, a perfect intervention on some nodes and an imperfect intervention on others,
then a model can be falsified.
Since for a given representation $\rZ^*$, it may be difficult to rule out such actions,
\Cref{prop: intA} (2) highlights a limitation of causal modeling using $\int_P$.
We now show an example where we falsify a CBN under interpretation $\int_P$.

\begin{example}{\textbf{Falsifying a total cholesterol model under interpretation $\int_P$.}}\label{ex: TC}
\begin{figure}
    \begin{subfigure}[b]{0.45\textwidth}
        \centering
    \begin{tikzpicture}
     \node at (-1.2,0) {\scalebox{1.5}{$\mathfrak{A}$:}};
    \node[draw, circle] (LDL) at (0,-1) {$\text{LDL}$};
    \node[draw, circle] (HDL) at (0,1) {$\text{HDL}$};
    \node[draw, circle] (HD) at (2,0) {$\text{HD}$};
    \draw[->,line width=1.5pt] (LDL) -- (HD);
    \draw[->,line width=1.5pt] (HDL) -- (HD);
    \end{tikzpicture}
        \caption{Low-level representation}
        \label{fig:subfigure1}
    \end{subfigure}
    \hfill
    \begin{subfigure}[b]{0.45\textwidth}
        \centering

    \begin{tikzpicture}
    \node at (3.8,1.5) {\scalebox{1.5}{$\mathfrak{C}$:}};
    \node at (3.8,0) {\scalebox{2}{}};
    \node[draw, circle] (TC) at (5,1.5) {TC};
    \node[draw, circle] (HD2) at (7,1.5) {HD};
    \draw[->,line width=1.5pt] (TC) -- (HD2);
    \end{tikzpicture}
        \caption{High-level representation}
        \label{fig:subfigure2}
    \end{subfigure}
   \caption{In \Cref{ex: TC}, we assume that the low-level representation $(\text{LDL}^*,\text{HDL}^*,\text{HD}^*)$ is emulated by a CBN $\mathfrak{A}$, with graph given in \Cref{fig:subfigure1}, and single-node interventions in~$\mathfrak{A}$. Under $\int_P$, we falsify the $(\text{TC}^*,\text{HD}^*):=(\text{LDL}^*+\text{HDL}^*,\text{HD}^*)$-compatible CBN~$\mathfrak{C}$ with graph given in \Cref{fig:subfigure2}.
        }
\end{figure}
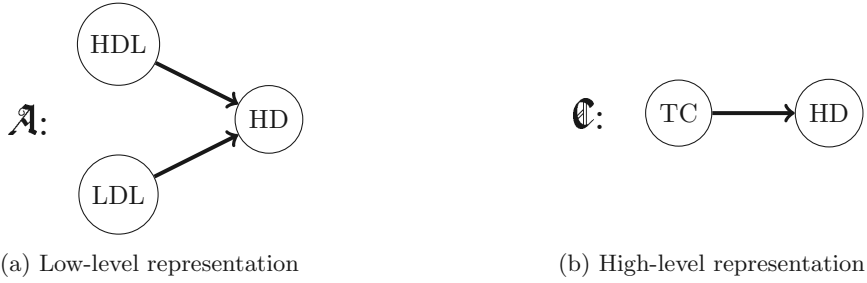
 We consider the causal effect of total cholesterol on heart disease \citep{Spirtes2004ambiguous}. Throughout this example, we use the shorthands LDL for low-density lipoprotein, HDL for high-density lipoprotein, HD for heart disease, and TC for total cholesterol. 
Let $\mathfrak{A}$ be a CBN given by the graph depicted in \Cref{fig:subfigure1} and kernels
\begin{align*}
    \mathcal{L}^{\mathfrak{A}}(\text{LDL})&= \mathcal{N}(0,1)\\
    \mathcal{L}^{\mathfrak{A}}(\text{HDL})&= \mathcal{N}(0,1)\\
    \mathcal{L}^{\mathfrak{A}}(\text{HD} \mid \text{LDL}=x,\text{HDL}=y)&= \mathcal{N}(2x-y,1).
\end{align*}
Assume that $\rX^*=(\text{LDL}^*,\text{HDL}^*,\text{HD}^*)$ is emulated by $\mathfrak{A}$ and perfect interventions $\mathcal{I}^*=\{\doop(\text{LDL}\leftarrow \mathcal{N}(y,1),\text{HDL}\leftarrow \mathcal{N}(x,1))\mid x,y\in \mathbb{R}\}$. %
Consider the representation given by
$\rZ^*=(\text{TC}^*,\text{HD}^*)=(\text{LDL}^*+\text{HDL}^*,\text{HD}^*)$. Let $\mathfrak{C}$ be the $\rZ^*$-compatible CBN with graph given in \Cref{fig:subfigure2} and kernels given by
\begin{align*}
    \mathcal{L}^{\mathfrak{C}}(\text{TC})&=\mathcal{N}(0,2)\\
    \mathcal{L}^{\mathfrak{C}}(\text{HD}\mid \text{TC}=t)&= \mathcal{N}\left(\frac{t}{2},\frac{11}{2}\right).
\end{align*}
We consider perfect shift interventions that change the mean of total cholesterol $\mathcal{I}=\{\doop(\text{TC}\leftarrow \mathcal{N}(t,2))\mid t\in \mathbb{R}\}$.
$\mathfrak{C}$ is not an 
$\mathcal{I}-\int_P$ valid
model of $\rZ^*$. 
To see this, consider an action $a'$ with $\mathcal{L}^{a'}(\text{TC}^*,\text{HD}^*)=\mathcal{L}^{\mathfrak{A};\doop(\text{LDL}\leftarrow \mathcal{N}(1,1),\text{HDL}\leftarrow \mathcal{N}(0,1))}(\text{LDL}+\text{HDL},\text{HD})$ (such an action exists by definition of $\mathcal{I}^*$).
We have that $\doop(\text{TC}\leftarrow \mathcal{N}(1,2))\in \int_P^{\mathcal{I}}(a')$ 
since $\mathcal{L}^{a'}(\text{TC}^*)=\mathcal{N}(1,2)$ and $\text{HD}^* {\nindep} \text{TC}^*$ in $\mathcal{L}^{a'}(\text{TC}^*,\text{HD}^*)$.
But $\mathcal{L}^{a'}(\text{TC}^*,\text{HD}^*) \neq \mathcal{L}^{\mathfrak{C};\doop(\text{TC}\leftarrow \mathcal{N}(1,2))}(\text{TC},\text{HD})$, 
for example, because the expected value $\mathbb{E}^{a'}(\text{HD}^*)=2$ is not equal to $\mathbb{E}^{\mathfrak{C};\doop(\text{TC}\leftarrow \mathcal{N}(1,2))}(\text{HD})=\frac{1}{2}$.
This falsifies CBN $\mathfrak{C}$ as an $\mathcal{I}-\int_P$ valid model of $(\text{TC}^*,\text{HD}^*)$, and $a'$ can be thought of as the falsifying experiment. Note that to falsify $\mathfrak{C}$, one need not have any knowledge about how total cholesterol is constituted of low-density lipoprotein and high-density lipoprotein. The model is falsified purely based on the distribution $(\text{TC}^*,\text{HD}^*)$ under action $a'$.

\paragraph{Under $\int_C$:} On the other hand, we cannot falsify $\mathfrak{C}$ as an $\mathcal{I}-\int_C$ valid model of $(\text{TC}^*,\text{HD}^*)$.
$\doop(\text{TC}\leftarrow \mathcal{N}(1,2))\notin \int_C^{\mathcal{I}}(a')$ since it is not the case that 
$\mathcal{L}^{a'}(\text{HD}^*\mid \text{TC}^*)\sim p_{\text{HD}\mid 
\text{TC}}^{\mathfrak{C};\doop(\text{TC}\leftarrow \mathcal{N}(1,2))}$. 
Under action $a'$, the conditional distribution of heart disease given total cholesterol is different than in the observational regime. Therefore, under interpretation $\int_C$, we could only interpret $a'$ as an imperfect multi-node intervention on total cholesterol and heart disease. Indeed, $\mathfrak{C}$ is an $\mathcal{I}-\int_C$ valid model of $\rZ^*$, where $\doop(\text{TC}\gets \mathcal{N}(t,2))\in \int^{\mathcal{I}}_C(a)$ if $\mathcal{L}^a(\text{TC}^*,\text{HD}^*)=\mathcal{L}^{\mathfrak{A};\doop\left(\text{LDL}\leftarrow \mathcal{N}(\frac{t}{2},1),\text{HDL}\leftarrow \mathcal{N}(\frac{t}{2},1) \right)}(\text{LDL}+\text{HDL},\text{HD})$, that is, an action is only interpreted as a single-node intervention on total cholesterol if the conditional distribution of heart disease given total cholesterol is the same as in the observational regime.
\end{example}

\subsection{$\int_S$: Letting multi-node interventions falsify a model is another way out of circularity}\label{sec: intB}

In this section, we discuss another option for avoiding circularity:
letting multi-node interventions falsify a causal model.
We structure the arguments analogous to those in \Cref{sec: intA}.

\paragraph{Informal overview of \Cref{sec: intB}.}
If we insist that all actions correspond to single-node interventions, then it becomes possible to falsify a causal model:  If we perform an action and the resulting distribution cannot be explained by a single-node intervention, the causal model can be rejected. Explaining these observations by multi-node interventions may be a slippery slope leading to a circular interpretation where every observed distribution can be explained by some complex intervention in the model. In \Cref{ex: opposite direction}, we show how to falsify a causal model under $\int_S$ (to be defined in \Cref{def: single node}).
Consider the following interpretation that may violate \textbf{D2},
but satisfies \textbf{D0}, \textbf{D1}, \textbf{D3}, and \textbf{D4}.

\begin{definition}{\textbf{$\int_S$. An interpretation violating only D2.}}\label{def: single node}
        Let a data-generating process $\mathcal{D}$, representation $\rZ^*$, compatible CBN $\mathfrak{C}$, and set of interventions $\mathcal{I}$ in $\mathfrak{C}$ be given. We define the interpretation $\mathbf{Int}_S$ by the following rule: An intervention $\doop(j\leftarrow q_j,j\in J)\in \mathcal{I}$ is in $\int_S^{\mathcal{I}}(a)$ if and only if the following three conditions hold:
	\begin{enumerate}
            \item[1)] 	$\mathcal{L}^a(Z^*_i \mid \PA_i^*)\sim q_i$ for all $i\in J$.
            That is, $\int_S$ satisfies \textbf{D0} (correct conditionals on intervened nodes).
            \item[2)]  $\mathcal{L}^a(Z^*_i \mid \PA_i^*)\nsim p_i^{\mathfrak{C}}$ for all $i\in J$. 
            That is, the kernels of the observational distribution are incompatible with the conditionals of intervened nodes under action $a$.\footnote{\label{fn: counterexamples}\Cref{prop: d2} (1) would not hold if we omitted condition 2) of \Cref{def: single node}, see \Cref{app: counterexample footnote} for a counterexample.
            \Cref{prop: d2} (2) would still hold if we omitted condition 2).}
		\item[3)] $\mathcal{L}^a(\rZ^*)$ is Markov w.r.t.\ the DAG of $\mathfrak{C}$. That is, $\int_S$ satisfies \textbf{D4} (an intervention does not create new dependencies).\footnote{%
        \Cref{prop: d2} would still hold if we omitted condition 3) of \Cref{def: single node}.
        }
	\end{enumerate} 
\end{definition}
The $S$ in $\int_S$ is for `single-node'. $\int_S$ may violate \textbf{D2}, that is, an action can be interpreted as distinct interventions,
because if $\doop(j\leftarrow q_j,j\in J)\in \int_S^{\mathcal{I}}(a)$ is minimal and decomposable, then $\doop(j\leftarrow q_j,j\in J^*)\in \int_S^{\mathcal{I}}(a)$ for every nonempty subset $J^*\subsetneq J$, but $\mathcal{L}^{\mathfrak{C};\doop(j\leftarrow q_j,j\in J)}(\rZ)\neq \mathcal{L}^{\mathfrak{C};\doop(j\leftarrow q_j,j\in J^*)}(\rZ)$.

\begin{proposition}\label{prop: d2}
        Let a data-generating process $\mathcal{D}$ be given. Assume that $\rZ^*$ is emulated by CBN $\mathfrak{C}$ and  interventions $\mathcal{I}^*$.
        
        (1) If $\mathcal{I}^*$ only contains single-node interventions, then $\mathfrak{C}$ is an $\mathcal{I}-\int_S$  valid model of $\rZ^*$ for every set of interventions $\mathcal{I}$ in $\mathfrak{C}$. 

        (2) On the other hand, if $\mathcal{I}^*$ contains a minimal and decomposable multi-node intervention, then there exists a set of interventions $\mathcal{I}$ in $\mathfrak{C}$ such that $\mathfrak{C}$ is not an $\mathcal{I}-\int_S$ valid model of $\rZ^*$. 
\end{proposition}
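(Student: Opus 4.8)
The plan is to mirror the two-part structure of \Cref{prop: intA}, exploiting that here the emulating CBN \emph{is} $\mathfrak{C}$ itself: for every $a\neq\mathcal{O}$ we have $\mathcal{L}^a(\rZ^*)=\mathcal{L}^{\mathfrak{C};g(a)}(\rZ)$, an interventional law whose kernels are explicit ($\widetilde q_k$ on the intervened nodes of $g(a)$, and $p_i^{\mathfrak{C}}$ elsewhere). In both parts the strategy is to read off these kernels and compare factorisations along the DAG.

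For (1), fix $a$ and $d=\doop(j\gets q_j,j\in J)\in\int_S^{\mathcal{I}}(a)$; I must show $\mathcal{L}^a(\rZ^*)=\mathcal{L}^{\mathfrak{C};d}(\rZ)$. The case $a=\mathcal{O}$ is immediate: since $\mathcal{L}^{\mathcal{O}}(\rZ^*)=\mathcal{L}^{\mathfrak{C}}(\rZ)$ has every conditional equal to $p_i^{\mathfrak{C}}$, condition 2) of \Cref{def: single node} can never hold, so $\int_S^{\mathcal{I}}(\mathcal{O})=\emptyset$ and validity is vacuous there. For $a\neq\mathcal{O}$, $g(a)$ is single-node by hypothesis, say $g(a)=\doop(k\gets\widetilde q_k)$, so $\mathcal{L}^a(\rZ^*)$ has $k$-conditional $\widetilde q_k$ and all other conditionals $p_i^{\mathfrak{C}}$. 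The key step is to force $J=\{k\}$: for any $i\in J$ with $i\neq k$ the $i$-conditional of $\mathcal{L}^a(\rZ^*)$ equals $p_i^{\mathfrak{C}}$, contradicting condition 2), which demands $\mathcal{L}^a(Z_i^*\mid\PA_i^*)\nsim p_i^{\mathfrak{C}}$; as $J\neq\emptyset$ this gives $J=\{k\}$. Then $\mathcal{L}^a(\rZ^*)$ is Markov with $k$-kernel $q_k$ (condition 1) and remaining kernels $p_i^{\mathfrak{C}}$, which is exactly the factorisation of $\mathcal{L}^{\mathfrak{C};\doop(k\gets q_k)}(\rZ)=\mathcal{L}^{\mathfrak{C};d}(\rZ)$, yielding the claim.

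For (2), take the minimal decomposable multi-node $d^*=\doop(j\gets\widetilde q_j,j\in\widetilde J)\in\mathcal{I}^*$ and an action $a$ with $g(a)=d^*$, so $\mathcal{L}^a(\rZ^*)=\mathcal{L}^{\mathfrak{C};d^*}(\rZ)$. The idea is to exhibit a single-node intervention on some $k\in\widetilde J$ that lies in $\int_S^{\mathcal{I}}(a)$ for $\mathcal{I}=\{\doop(k\gets\widetilde q_k)\}$ yet induces a law different from $d^*$. Minimality handles the last point directly: with the proper subset $\{k\}\subsetneq\widetilde J$ (proper since $|\widetilde J|>1$) and kernel $\widetilde q_k$, it gives $\mathcal{L}^{\mathfrak{C};\doop(k\gets\widetilde q_k)}(\rZ)\neq\mathcal{L}^{\mathfrak{C};d^*}(\rZ)=\mathcal{L}^a(\rZ^*)$, so the model is falsified as soon as $\doop(k\gets\widetilde q_k)\in\int_S^{\mathcal{I}}(a)$. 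Conditions 1) and 3) of \Cref{def: single node} are then immediate, since $\mathcal{L}^a(Z_k^*\mid\PA_k^*)\sim\widetilde q_k$ and $\mathcal{L}^a(\rZ^*)$ is an interventional law and hence Markov.

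The main obstacle is condition 2): I need a node $k\in\widetilde J$ whose intervened conditional $\widetilde q_k$ genuinely differs from $p_k^{\mathfrak{C}}$ as a regular conditional \emph{under $\mathcal{L}^{\mathfrak{C};d^*}(\rZ)$} — crucially measured against the interventional parent marginal, not the observational one. I would settle this by contradiction: if $\mathcal{L}^{\mathfrak{C};d^*}(Z_k\mid\PA_k)\sim p_k^{\mathfrak{C}}$ for every $k\in\widetilde J$, then every conditional of $\mathcal{L}^{\mathfrak{C};d^*}(\rZ)$ would agree almost everywhere with the observational kernel (the non-intervened nodes trivially so), and a straightforward induction along a topological order of $\mathcal{G}$ would give $\mathcal{L}^{\mathfrak{C};d^*}(\rZ)=\mathcal{L}^{\mathfrak{C}}(\rZ)$, contradicting that $d^*$ is an intervention. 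Hence such a $k$ exists, condition 2) holds, and the falsification is complete. Notably, this route uses minimality and that $d^*$ is multi-node, while decomposability is not needed.
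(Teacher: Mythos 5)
Part (1) of your proposal is correct and follows essentially the same route as the paper: use condition 2) of \Cref{def: single node} to pin the intervened set $J$ down to the single node touched by $g(a)$, then match kernels; your separate treatment of $a=\mathcal{O}$ is a careful (and correct) addition.

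Part (2) has a genuine gap, located exactly where you claim decomposability is not needed. For $\mathcal{I}=\{\doop(k\gets\widetilde q_k)\}$ to be a legitimate set of modeled interventions, $\doop(k\gets\widetilde q_k)$ must actually \emph{be} an intervention in the sense of \Cref{def: CGM}, i.e.\ $\mathcal{L}^{\mathfrak{C};\doop(k\gets\widetilde q_k)}(\rZ)\neq\mathcal{L}^{\mathfrak{C}}(\rZ)$ — and this is precisely what decomposability buys in the paper's proof. Your existence argument produces a $k\in\widetilde J$ with $\mathcal{L}^{\mathfrak{C};d^*}(Z_k\mid\PA_k)\nsim p_k^{\mathfrak{C}}$, but that inequality is measured against the parent marginal under $d^*$; it does not imply that $\widetilde q_k$ differs from $p_k^{\mathfrak{C}}$ on a set of positive measure under the \emph{observational} parent marginal, which is what the single-node intervention on $k$ alone sees. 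Concretely, take $Z_1\to Z_2$ with $\mathcal{L}^{\mathfrak{C}}(Z_1)=\delta_0$, $p_2^{\mathfrak{C}}(\cdot\mid z)=\text{Ber}(0.5)$, and $d^*=\doop(Z_1\gets\delta_1,\,Z_2\gets q_2)$ with $q_2(\cdot\mid 0)=\text{Ber}(0.5)$, $q_2(\cdot\mid 1)=\text{Ber}(0.9)$: this $d^*$ is minimal and multi-node, the node $k=2$ satisfies your condition 2) under $\mathcal{L}^{\mathfrak{C};d^*}$, yet $\doop(Z_2\gets q_2)$ induces the observational distribution and so is not an intervention; your argument gives you no control over which $k$ you land on. The paper instead fixes an arbitrary $j'\in J$, invokes decomposability to guarantee $\doop(j'\gets q_{j'})$ is well defined as an intervention, and derives condition 2) for that $j'$ from minimality (if $p_{j'}^{\mathfrak{C}}$ were a valid conditional under $\mathcal{L}^{\mathfrak{C};d^*}$, the intervention on $J\setminus\{j'\}$ would reproduce $\mathcal{L}^{\mathfrak{C};d^*}(\rZ)$, contradicting minimality). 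To repair your proof, reinstate decomposability and either adopt the paper's choice of $j'$ or show that your chosen $k$ yields a well-defined intervention.
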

\begin{proof}
        (1) Assume that $\mathcal{I}^*$ contains only single-node interventions. Consider some action $a$.
        Since $\rZ^*$ is emulated by $\mathfrak{C}$ and $\mathcal{I}^*$,
        there is a single-node intervention $d^*\in \mathcal{I}^*$ such that $\mathcal{L}^{a}(\rZ^*)=\mathcal{L}^{\mathfrak{C}; d^*}(\rZ)$.
        Consider some intervention $d=\doop(j\leftarrow q_j, j\in J)\in \int_S^{\mathcal{I}}(a)$.
        We want to show that $\mathcal{L}^{\mathfrak{C};d^*}(\rZ)=\mathcal{L}^{\mathfrak{C};d}(\rZ)$ and do this by arguing that $\mathcal{L}^{\mathfrak{C};d^*}(Z_i |  \PA_i)\sim p^{\mathfrak{C};d}_{i}$ for all $i\in [n]$.
        
        Per condition 1) of \Cref{def: single node}, we have for all $i\in J$ that
        \begin{align*}
            \mathcal{L}^{\mathfrak{C};d^*}(Z_i\mid \PA_i)\sim q_i = p^{\mathfrak{C};d}_{i}.
        \end{align*}
        By condition 2), $\mathcal{L}^{\mathfrak{C};d^*}(Z_i\mid \PA_i)\nsim p^{\mathfrak{C}}_i$ for all $i\in J$, so the single-node intervention $d^*$ must be a single-node intervention on a node in $J$.
        Therefore, $d^*$ does not intervene on nodes outside of $J$, and thus we have for all $i\notin J$ that
        \begin{align*}
            \mathcal{L}^{\mathfrak{C};d^*}(Z_i\mid \PA_i)\sim p^{\mathfrak{C}}_{i}=p^{\mathfrak{C};d}_{i}.
        \end{align*}
         In summary, we have established that $\mathcal{L}^{\mathfrak{C};d^*}(Z_i\mid \PA_i)\sim p^{\mathfrak{C};d}_{i}$ for all $i\in [n]$, which implies that $\mathcal{L}^{\mathfrak{C};d^*}(\rZ)=\mathcal{L}^{\mathfrak{C};d}(\rZ)$.
        (2) Assume that there is a minimal and decomposable multi-node intervention $d^*=\doop(j\leftarrow q_j, j\in J)\in \mathcal{I}^*$ and let $a$
        be such that $\mathcal{L}^a(\rZ^*)=\mathcal{L}^{\mathfrak{C};d^*}(\rZ)$.
        Fix some $j'\in J$ and let $\mathcal{I}=\{\doop(j'\gets q_{j'})\}$ (which is well-defined since $d^*$ is decomposable and thus $\mathcal{L}^{\mathfrak{C};\doop(j' \gets q_{j'})}(\rZ)\neq\mathcal{L}^\mathfrak{C}(\rZ)$). $\doop(j'\gets q_{j'})\in \int_S^{\mathcal{I}}(a)$ as 1) $\mathcal{L}^{a}(Z_{j'}^*\mid \PA_{j'}^*)\sim q_{j'}$, 2) $\mathcal{L}^{a}(Z_{j'}^*\mid \PA_{j'}^*)\nsim p_{j'}^{\mathfrak{C}}$ (since $d^*$ is minimal), and condition 3) trivially holds.  But $\mathcal{L}^a(\rZ^*)\neq \mathcal{L}^{\mathfrak{C};\doop(j'\gets q_{j'})}(\rZ)$ since $d^*$ is minimal, and therefore, $\mathfrak{C}$ is not an $\mathcal{I}-\int_S$ valid model of~$\rZ^*$.
\end{proof}

If there exists an action that induces a distribution like a multi-node intervention, then the model can be falsified. Since for a given representation $\rZ^*$, it may be difficult to rule out such actions, \Cref{prop: d2} (2) highlights a limitation of causal modeling using $\int_S$.

In \Cref{ex: TC}, we could have falsified the model $\mathfrak{C}$ under interpretation $\int_S$, like we did under interpretation $\int_P$.

The following example is more extreme:  $\rZ^*$ is emulated by $\mathfrak{A}$ and multi-node interventions $\mathcal{I}^*$ in a way that is fine-tuned to make the opposite causal direction interventionally valid.

\begin{example}{\textbf{a) Falsify a model under $\int_S$, and b) reverse the causal direction.}}\label{ex: opposite direction}
Assume that $\rZ^*=(Z_1^*,Z_2^*)$ is emulated by CBN $\mathfrak{A}$ and interventions $\mathcal{I}^*$. Let $\mathfrak{A}$ be given by
DAG $Z_1\to Z_2$ and kernels 
\begin{align*}
    \mathcal{L}^{\mathfrak{A}}(Z_1)&=\text{Ber}(0.5)\\
    \mathcal{L}^{\mathfrak{A}}(Z_2 \mid Z_1=1)&=\text{Ber}(0.6)\\
    \mathcal{L}^{\mathfrak{A}}(Z_2 \mid Z_1=0)&=\text{Ber}(0.4),
\end{align*}
and let 
\begin{align*}
    \mathcal{I}^*=\{&\doop(Z_1\gets \text{Ber}(0.6), Z_2=1),\\&\doop(Z_1\gets\text{Ber}(0.4), Z_2=0),\\&\doop(Z_1=1,Z_2\gets\text{Ber}(0.5)),\\&\doop(Z_1=0,Z_2\gets \text{Ber}(0.5))\}.
\end{align*}
\paragraph{a) Falsify $\mathfrak{A}$.}
Let $\mathcal{I}=\{\doop(Z_1=1),\doop(Z_1=0),\doop(Z_2=1),\doop(Z_2=0)\}$ be the set of modeled interventions.
Let $a$ be an action such that $\mathcal{L}^{a}(\rZ^*)=\mathcal{L}^{\mathfrak{A};\doop(Z_1\gets\text{Ber}(0.6), Z_2=1)}(\rZ)$ (which exists since $\doop(Z_1\gets\text{Ber}(0.6), Z_2=1)\in \mathcal{I}^*$).
Now $\mathfrak{A}$ is falsified as an $\mathcal{I}-\int_S$ valid model of $\rZ^*$, for example, because $\doop(Z_2=1)\in \int_S^{\mathcal{I}}(a)$, but $\mathcal{L}^{a}(Z_1^*)=\text{Ber}(0.6)\neq \text{Ber}(0.5)= \mathcal{L}^{\mathfrak{A};\doop(Z_2=1)}(Z_1)$. 
\paragraph{b) Reversing the causal direction.}
Instead, consider CBN $\mathfrak{C}$ with DAG $Z_1\leftarrow Z_2$ and kernels
\begin{align*}
    \mathcal{L}^{\mathfrak{C}}(Z_2)&=\text{Ber}(0.5)\\
    \mathcal{L}^{\mathfrak{C}}(Z_1 \mid Z_2=1)&=\text{Ber}(0.6)\\
    \mathcal{L}^{\mathfrak{C}}(Z_1 \mid Z_2=0)&=\text{Ber}(0.4).
\end{align*}
    $\mathfrak{C}$ is an $\mathcal{I}-\int_S$ valid model of $\rZ^*$ for every set of interventions $\mathcal{I}$ in $\mathfrak{C}$, 
    even though it has opposite causal direction than $\mathfrak{A}$. 
    This is true per \Cref{prop: d2} (1) because $\rZ^*$ is emulated by $\mathfrak{C}$ and a set of single-node interventions, namely:
    \begin{align*}
        \mathcal{L}^{\mathfrak{A}; \doop(Z_1\gets\text{Ber}(0.6), Z_2=1)}(Z_1,Z_2)=\mathcal{L}^{\mathfrak{C}; \doop(Z_2=1)}(Z_1,Z_2)\\
        \mathcal{L}^{\mathfrak{A}; \doop(Z_1\gets\text{Ber}(0.4), Z_2=0)}(Z_1,Z_2)=\mathcal{L}^{\mathfrak{C}; \doop(Z_2=0)}(Z_1,Z_2)\\
        \mathcal{L}^{\mathfrak{A}; \doop(Z_1=1, Z_2\gets\text{Ber}(0.5))}(Z_1,Z_2)=\mathcal{L}^{\mathfrak{C}; \doop(Z_1=1)}(Z_1,Z_2)\\
        \mathcal{L}^{\mathfrak{A}; \doop(Z_1=0, Z_2\gets\text{Ber}(0.5))}(Z_1,Z_2)=\mathcal{L}^{\mathfrak{C}; \doop(Z_1=0)}(Z_1,Z_2).
    \end{align*}
In particular, $\rZ^*$ is emulated by both $\mathfrak{A}$ and $\mathfrak{C}$,
but only
$\mathfrak{C}$
is a
$\mathcal{I}-\int_S$ valid model of $\rZ^*$
for
 $\mathcal{I}=\{\doop(Z_1=1),\doop(Z_1=0),\doop(Z_2=1),\doop(Z_2=0)\}$.
The interventions  $\mathcal{I}^*$ in $\mathfrak{A}$ are fine-tuned to mimic single-node interventions in $\mathfrak{C}$. The lack of fine-tuning between interventions has previously been suggested as a possible fundamental property of causal models \citep{janzing2010causal,janzing2016algorithmic}.
\end{example}

\subsection{$\int_K$: Interventions as simple actions}\label{sec: complexity}

The larger the set of actions the more conditions must be satisfied for a causal model to be interventionally valid.
More formally,
there exists interpretations $\int$ such that
if $\mathfrak{C}$ is an
$\mathcal{I}-\int$
valid model of $\rZ^*$ with set of actions $\mathcal{A}$,
then $\mathfrak{C}$ may not be an $\mathcal{I}-\int$
valid model of $\rZ^*$ with set of actions $\mathcal{A}'\supsetneq \mathcal{A}$.
For example, 
under $\int_P$,
the model $\mathfrak{C}$ in \Cref{ex: TC} is not interventionally valid
but it would be for a sufficiently small subset of $\mathcal{A}$; and every compatible model is interventionally valid if $\mathcal{A}=\{\mathcal{O}\}$.
More generally, \Cref{prop: d2} (2) and \Cref{prop: intA} (2)
show that
while
$\int_S$ and $\int_P$
avoid the circularity of interpretation $\int_C$ (\Cref{prop: tautology2} and \Cref{prop: impossibiliy}),
these interpretations may prevent interventionally valid causal modeling of sensible representations if the set of actions is large.

\paragraph{Informal overview of \Cref{sec: complexity}.}
In this section, we present an interpretation $\int_K$
which has intermediate restrictiveness
between
$\int_S$ and $\int_C$:
$\mathcal{I}-\int_S$ validity implies $\mathcal{I}-\int_K$ validity
and, since every compatible CBN is $\mathcal{I}-\int_C$ valid, $\mathcal{I}-\int_K$ validity implies $\mathcal{I}-\int_C$ validity
(while the reverse implications do not hold).
We accomplish this by
considering the complexity of actions and 
disqualifying actions that are not the most simple implementations of an intervention.
The following thought experiment motivates
why the complexity of actions is relevant to deciding which actions should be considered as which interventions.

\paragraph{Thought experiment.}  Suppose I am leading a sedentary lifestyle and am considering taking up smoking during my vacation.
I have the following question: `How would smoking one pack of cigarettes over a week affect my heart health?'
What kind of experiment could be relevant to answer this question?
Following \citet{Dawid2021} and setting ethical issues aside, one approach might be the following study:
Pay participants who are similar to me (including having a sedentary lifestyle) to smoke a pack of cigarettes over a week and then measure an indicator of heart health before and after.
However, if participants started a rigorous exercise routine to offset the negative effects of smoking, the study would no longer capture how smoking alone impacts heart health; rather, the before-after measurements would reflect the combined effects of smoking and exercising on heart health.
Naively, we want the participants to take up smoking while keeping everything else fixed.
This is both a) impossible and b) undesirable. 
a) It is impossible because taking up smoking will necessarily affect other things as well. As \citet{Lewis1973} explains ``If we try too hard for exact similarity to the actual world in one respect, we will get excessive differences in some other respect.''\footnote{\citet[page 9]{Lewis1973} puts it like this, considering what would happen if kangaroos had no tails:
    ``We might think it best to confine our attention to worlds where
kangaroos have no tails and everything else is as it actually is; but there
are no such worlds. Are we to suppose that kangaroos have no tails
but that their tracks in the sand are as they actually are? Then we
shall have to suppose that these tracks are produced in a way quite
different from the actual way. Are we to suppose that kangaroos have
no tails but that their genetic makeup is as it actually is? Then we shall
have to suppose that genes control growth in a way quite different from
the actual way (or else that there is something, unlike anything there
actually is, that removes the tails). And so it goes; respects of similarity
and difference trade-off. If we try too hard for exact similarity to the
actual world in one respect, we will get excessive differences in some
other respect.''}
In our case, smoking will inevitably impact other aspects of participants' lives such as taking smoking breaks or carrying a lighter.
b) It is undesirable because we do not want to keep heart health fixed.
The problem seems to be that taking up smoking and exercise is not the most simple modification of the participants' daily routine that involves taking up smoking, and this might disqualify the participants' behavior as being an intervention on smoking.

Another way to understand the problem above is that the study participants are implementing interventions in a way that is counter to a commonsense consensus on how interventions should be implemented.
As pointed out in \citet[page 121]{Peters2017}, the notion of falsification ``includes the assumption that there is an agreement about what a randomized experiment should look like''.
In this work, we attempt to move beyond human intuition toward mathematical precision.
We do not think that human intuition is never good enough to do useful causal modeling.
 However, we believe that in many circumstances, such as those encountered in causal representation learning and causal abstraction, it is entirely unclear how human intuition could serve as a secure foundation, and we doubt that there is an implicit commonsense consensus on what constitutes a valid action to implement some intervention.

We now provide an interpretation that is based on a given notion of complexity of actions.
For this, assume that there is some complexity measure $K: \mathcal{A}\to \mathbb{R}_+$ that
assigns a positive real number to each action.

\begin{definition}{\textbf{$\int_K$. An interpretation violating D1 and D2.}} \label{def: complexity}
    Let a data-generating process $\mathcal{D}$, representation $\rZ^*$, compatible CBN $\mathfrak{C}$, set of interventions $\mathcal{I}$ in $\mathfrak{C}$, and complexity measure $K: \mathcal{A}\to \mathbb{R}_+$ be given. We define the interpretation $\mathbf{Int}_K$ by the following rule: An intervention $d\in \mathcal{I}$ is in  $\int_K^{\mathcal{I}}(a)$ if and only if $a\in \underset{a\in \mathcal{A}: d\in \int_S^{\mathcal{I}}(a)}{\arg \min}K(a)$. 
\end{definition}
Under this interpretation, the complexity measure of actions affects which actions are interpreted as implementations of which interventions and thereby whether a model is interventionally valid. 
If a representation $\rZ^*$ is emulated by $\mathfrak{C}$ and $\mathcal{I}^*$ with link $g: \mathcal{A}\setminus \{\mathcal{O}\}\to \mathcal{I}^*$,
we show that $\mathfrak{C}$ is an $\mathcal{I}^*-\int_K$ valid model of $\rZ^*$
if the complexity measure $K$ is strictly increasing in the number of nodes intervened upon by $g(a)$ (and some regularity conditions are met). We use $|g(a)|$ to denote the number of nodes intervened upon by $g(a)$ in $\mathfrak{C}$.\footnote{%
Formally, if $\rZ^*$ is emulated by $\mathfrak{C}$ and $\mathcal{I}^*$ with link $g:\mathcal{A}\setminus\{\mathcal{O}\}\to\mathcal{I}^*$, we define $|g(a)| = |J|$ for $g(a) = \doop(j\gets q_j,j\in J)$.
}
The complexity of actions ought to be given by some consideration that is external to the model;
defining the complexity of actions in terms of $|g(a)|$ would be circular since $|g(a)|$ depends on the CBN $\mathfrak{C}$.

\begin{proposition}\label{prop: complexity}
   Let a data-generating process $\mathcal{D}$ be given. Assume that $\rZ^*$ is emulated by CBN $\mathfrak{C}$ and interventions $\mathcal{I}^*$ 
   with link $g: \mathcal{A}\setminus \{\mathcal{O}\}\to \mathcal{I}^*$.
   Assume that
    \begin{enumerate}
        \item  $K(a)=t(|g(a)|)$ for some strictly increasing $t: \mathbb{R}\to \mathbb{R}$. 
        \item  $\mathcal{L}^a(\rZ^*)$ has the same finite support for every $a\in \mathcal{A}$.
        \item  Every $d\in \mathcal{I}$ is minimal and  $\mathcal{I}\subseteq \mathcal{I}^*$.
    \end{enumerate}
    Then $\mathfrak{C}$ is an $\mathcal{I}-\int_K$ valid model of $\rZ^*$. 
\end{proposition}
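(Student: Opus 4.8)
The plan is to prove the implication defining interventional validity (\Cref{def: interventionally valid}) directly: fix an action $a\in\mathcal{A}$ and an intervention $d\in\mathcal{I}$ with $d\in\int_K^{\mathcal{I}}(a)$, and show $\mathcal{L}^a(\rZ^*)=\mathcal{L}^{\mathfrak{C};d}(\rZ)$. Unfolding \Cref{def: complexity}, membership $d\in\int_K^{\mathcal{I}}(a)$ means that $d\in\int_S^{\mathcal{I}}(a)$ and that $K(a)\le K(a')$ for every action $a'$ with $d\in\int_S^{\mathcal{I}}(a')$. The first step is to produce a convenient competitor: since $d\in\mathcal{I}\subseteq\mathcal{I}^*$ (third assumption) and the link $g$ is surjective, there is an action $a_d\in\mathcal{A}\setminus\{\mathcal{O}\}$ with $g(a_d)=d$, and by emulation (\Cref{def: generation}) $\mathcal{L}^{a_d}(\rZ^*)=\mathcal{L}^{\mathfrak{C};d}(\rZ)$. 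I would then verify $d\in\int_S^{\mathcal{I}}(a_d)$: conditions 1 and 3 of \Cref{def: single node} are immediate because $\mathcal{L}^{a_d}(\rZ^*)$ equals the interventional law $\mathcal{L}^{\mathfrak{C};d}(\rZ)$, whose node-$i$ conditional is exactly $q_i$ for $i\in J$ and which is Markov w.r.t.\ $\mathcal{G}$; condition 2, that $\mathcal{L}^{a_d}(Z_i^*\mid\PA_i^*)\nsim p_i^{\mathfrak{C}}$ for all $i\in J$, is where minimality of $d$ is used, since if some $q_i$ agreed with $p_i^{\mathfrak{C}}$ one could drop $i$ from $J$ without changing $\mathcal{L}^{\mathfrak{C};d}(\rZ)$, contradicting minimality (and when $|J|=1$ this instead uses that an intervention never reproduces the observational law).

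With the competitor $a_d$ available, it lies in the set over which the $\arg\min$ is taken, so $K(a)\le K(a_d)$. Writing $g(a)=\doop(j\gets q_j',j\in J')$, the complexity assumption $K=t\circ|g|$ together with strict monotonicity of $t$ turns $t(|g(a)|)=K(a)\le K(a_d)=t(|J|)$ into the cardinality bound $|J'|=|g(a)|\le|J|$. Next I would pin down $J'$. Because $d\in\int_S^{\mathcal{I}}(a)$, condition 2 forces the parent-conditionals at the nodes of the nonempty set $J$ to differ from the observational kernels, so $a$ cannot be $\mathcal{O}$ (whose law reproduces $\mathcal{L}^{\mathfrak{C}}$); hence $g(a)$ is defined and $\mathcal{L}^a(\rZ^*)=\mathcal{L}^{\mathfrak{C};g(a)}(\rZ)$. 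Now for any $i\in J$ with $i\notin J'$, the node-$i$ conditional of $\mathcal{L}^{\mathfrak{C};g(a)}$ is the unchanged kernel $p_i^{\mathfrak{C}}$, giving $\mathcal{L}^a(Z_i^*\mid\PA_i^*)\sim p_i^{\mathfrak{C}}$ and contradicting condition 2 of \Cref{def: single node}; therefore $J\subseteq J'$, and combined with $|J'|\le|J|$ this forces $J=J'$.

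Finally I would assemble the conditionals and invoke the DAG factorization. Since $J=J'$, the law $\mathcal{L}^a(\rZ^*)$ has node-$i$ conditional $\sim q_i$ for $i\in J$ (condition 1 for $a$) and $\sim p_i^{\mathfrak{C}}$ for $i\notin J$ (from $g(a)$, as $i\notin J'$), and it is Markov w.r.t.\ $\mathcal{G}$ (condition 3). A distribution that is Markov w.r.t.\ $\mathcal{G}$ equals the product of its parent-conditionals along $\mathcal{G}$, and this product is precisely the density $\prod_{i\in J}q_i\prod_{i\notin J}p_i^{\mathfrak{C}}$ defining $\mathcal{L}^{\mathfrak{C};d}(\rZ)$; hence $\mathcal{L}^a(\rZ^*)=\mathcal{L}^{\mathfrak{C};d}(\rZ)$, which is the desired interventional validity.

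I expect the main obstacle to be the measure-theoretic bookkeeping in the two places where the relation $\sim$ (equality of kernels only up to almost-sure agreement with respect to the relevant parent marginal) must be upgraded to an equality of joint laws: establishing condition 2 for the competitor $a_d$ from minimality, and the closing factorization step. This is exactly where the finite-support assumption does the work, since it lets one evaluate the kernels and conditionals pointwise on the common finite support shared by all $\mathcal{L}^a(\rZ^*)$ and thereby convert the almost-sure matchings into genuine equalities of the factorized densities; I would thread the finite-support reduction through each $\sim$/$\nsim$ manipulation rather than leaving it implicit.
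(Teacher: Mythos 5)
Your proposal is correct and follows essentially the same route as the paper's proof: pull back $d$ through the surjective link to obtain a competitor action, use the complexity bound to get $|g(a)|\le|J|$, use condition 2) of $\int_S$ together with minimality to force $g(a)$ to intervene on exactly the nodes in $J$, and conclude by matching the factorized densities on the common finite support. You are in fact somewhat more explicit than the paper in verifying $d\in\int_S^{\mathcal{I}}(a_d)$ for the competitor and in handling the edge cases $a=\mathcal{O}$ and $|J|=1$, which the paper leaves implicit.
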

\begin{proof}
    Let $a'\in \mathcal{A}$, $\mathcal{I}\subseteq \mathcal{I}^*$, and $d=\doop(j\gets q_j, j\in J)\in \int_K^{\mathcal{I}}(a')$ be given. We want to show that $\mathcal{L}^{a'}(\rZ^*)=\mathcal{L}^{\mathfrak{C};d}(\rZ)$.

    Since $d\in \int_S^{\mathcal{I}}(a')$ (by definition of $\int_K$), and since $\mathcal{L}^a(\rZ^*)$ has the same finite support for every $a$, we have for every $i\in J$ that $p^{\mathfrak{C};g(a')}(z_i\mid \pa_i)=p^{\mathfrak{C};d}(z_i\mid \pa_i)$ for every $z_i$ and $\pa_i$ in the support.

  Since $g^{-1}(\{d\})$ is nonempty (because $g: \mathcal{A}\setminus \{\mathcal{O}\} \to \mathcal{I}^*$ is surjective and $d\in\mathcal{I}\subseteq \mathcal{I}^*$) and $d\in \int^\mathcal{I}_S(a)$ for every $a\in g^{-1}(\{d\})$, we have that $K(a')\leq t(|J|)$, which implies that $|g(a')|\leq |J|$.
    
    Since every intervention in  $\mathcal{I}$ 
    is minimal, $d$ is minimal, and we have for every $i\in J$ that $p^{\mathfrak{C};g(a')}(z_i\mid \pa_i)\neq p^{\mathfrak{C}}(z_i\mid \pa_i)$ for some $z_i$ and $\pa_i$ in the support. Since $|g(a')|\leq |J|$, this means that $g(a')$ is an intervention on the nodes in $J$, and only those.
    Therefore,
    for $i\notin J$, we must have $p^{\mathfrak{C};g(a')}(z_i\mid \pa_i)= p^{\mathfrak{C}}(z_i\mid \pa_i)$ for every $z_i$ and $\pa_i$ in the support as these nodes are not intervened upon by $g(a')$. In summary, we have that $p^{\mathfrak{C};g(a')}(\bm{z})=p^{\mathfrak{C};d}(\bm{z})$ for all $\bm{z}$ in the support.
\end{proof}

We now provide a simple example with two different complexity measures and investigate how they affect the interventional validity of a model. 
\begin{example}{\textbf{Making an $\int_S$ invalid model valid by specifying a complexity measure.}}\label{ex: RL}
    Consider CBN $\mathfrak{A}$ with graph $P\to R$ and kernels
    \begin{align*}
        P&\sim \text{Unif}([6])\\
        R&:=\begin{cases}
        +1 & P \in\{1, 3, 5\}\\
        -1 & P \in\{2, 4, 6\}
        \end{cases}.
    \end{align*}

\begin{figure}
        \centering
    \begin{tikzpicture}
    \draw[step=1cm,gray,very thin] (-3,-3) grid (2,0);
    \node at (-0.5,-1.5) {\includesvg[width=0.7cm]{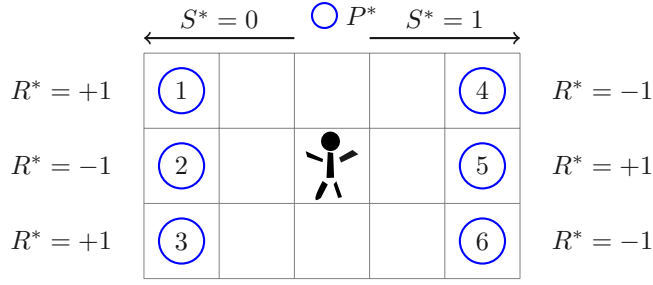}};

    \draw[thick, ->] (-1,0.2) -- (-3,0.2) node[midway, above] {$S^*=0$};
    \draw[thick, ->] (0,0.2) -- (2,0.2) node[midway, above] {$S^*=1$};
    \node at (-2.5,-0.5) {1};
    \draw[blue,thick] (-2.5,-0.5) circle (0.3cm);
    \node[left] at (-3.3,-0.5) {$R^*=+1$};
    \draw[blue,thick] (-2.5,-1.5) circle (0.3cm);
    \node at (-2.5,-1.5) {2};
    \node[left] at (-3.3,-1.5) {$R^*=-1$};
    \draw[blue,thick] (-2.5,-2.5) circle (0.3cm);
    \node at (-2.5,-2.5) {3};
     \node[left] at (-3.3,-2.5) {$R^*=+1$};
    \draw[blue,thick] (1.5,-0.5) circle (0.3cm);
    \node at (1.5,-0.5) {4};
    \node[right] at (2.3,-0.5) {$R^*=-1$};
    \draw[blue,thick] (1.5,-1.5) circle (0.3cm);
    \node at (1.5,-1.5) {5};
     \node[right] at (2.3,-1.5) {$R^*=+1$};
    \draw[blue,thick] (1.5,-2.5) circle (0.3cm);
    \node at (1.5,-2.5) {6};
     \node[right] at (2.3,-2.5) {$R^*=-1$};
     \node [draw, circle, thick, blue, align=left,text=black] at(-0.6,0.5) (p) {};
       \node at(-0.1,0.5)  {$P^*$};
\end{tikzpicture}
        \caption{Depiction of the situation in \Cref{ex: RL}. There are $6$ different locations,
        three on each the left and right side $S^*$,
        with associated rewards $R^*$ ($+1$ or $-1$). Observationally, the stick figure picks a position $P^*$
        using a uniform distribution over the $6$ locations. Whether the causal model $S\to R$ makes correct predictions about interventions on left/right, depends on which specific actions are interpreted as interventions.
        }
        \label{fig: RL}
    \end{figure}

    Assume that $(P^*,R^*)$ is emulated by CBN $\mathfrak{A}$ and intervention set
    $$\mathcal{I}^*=\{d\mid d \text{ is a single-node intervention on $P$ such that $\mathbb{P}^{\mathfrak{A};d}(P\in [6])=1$ }\}.$$ $P^*$ stands for position and $R^*$ for reward,  see \Cref{fig: RL}. Consider the feature
    left/right given by $S^*=\mathbbm{1}(P^*\in \{4,5,6\})$, which is $1$ if the position is on the right and $0$ if the position is on the left.
    We now investigate if $\mathfrak{C}$, given by $S\to R$ and kernels
    \begin{align*}
        &{\mathbb{P}}^{\mathfrak{C}}(S=1)= \frac{1}{2}\\
        &{\mathbb{P}}^{\mathfrak{C}}(S=0)=\frac{1}{2}\\
        &{\mathbb{P}}^{\mathfrak{C}}(R=+1\mid S=0)=\frac{2}{3}\\
        &{\mathbb{P}}^{\mathfrak{C}}(R=-1\mid S=0)=\frac{1}{3}\\
        &{\mathbb{P}}^{\mathfrak{C}}(R=+1\mid S=1)=\frac{1}{3}\\
        &{\mathbb{P}}^{\mathfrak{C}}(R=-1\mid S=1)=\frac{2}{3},
    \end{align*}        
    is an interventionally valid model of $(S^*, R^*)$ under interventions $\mathcal{I}=\{\doop(S=0),\doop(S=1)\}$. Notice that we have defined the kernels so that $\mathfrak{C}$ is compatible with $(S^*,R^*)$.

    $\mathfrak{C}$ is not an $\mathcal{I}-\int_S$ valid model of $(S^*, R^*)$. 
    To see this, consider action $a$ such that $\mathbb{P}^a(P^*=1,R^*=1)=1$ (such an action exists since it corresponds to an intervention in $\mathcal{I}^*$).
    Now $\doop(S=0)\in \int_S^{\mathcal{I}}(a)$ as $\mathcal{L}^a(S^*)=\delta_0$ (correct conditionals on intervened nodes) and $\mathcal{L}^a(S^*)\nsim p^{\mathfrak{C}}_S$. However, $\mathbb{P}^a(R^*=1)=1\neq \frac{2}{3}=\mathbb{P}^{\mathfrak{C};\doop(S=0)}(R=1)$.

    \paragraph{Making $\mathfrak{C}$ interventionally valid by using reverse entropy as complexity.}
    If we define the complexity of actions $a$ by the inverse entropy of $P^*$ in $\mathcal{L}^a(P^*)$,
    $K(a)=\frac{1}{H^a(P^*)}$, then $\mathfrak{C}$ is an $\mathcal{I}-\int_{K}$ valid model of $(S^*, R^*)$. Using the reverse entropy as complexity may be reasonable since higher entropy of $P^*$ intuitively means less specificity and thus intuitively corresponds to a less complex action.
    Using this measure of complexity, for example, implies that the least complex action that goes left with probability $1$, is an action $a$ such that $\mathcal{L}^a(P^*)=\text{Unif}(\{1,2,3\})$, 
    which implies that $\mathcal{L}^a(R^*\mid S^*=0)=\mathcal{L}^{\mathfrak{C}}(R\mid S=0)$.
    Therefore, for all $a$, $\doop(S=0)\in \int_{K}^{\mathcal{I}}(a)$ only if
    $\mathcal{L}^a(R^*\mid S^*=0)=\mathcal{L}^\mathfrak{C}(R\mid S=0)$
    and $\mathbb{P}^a(S^*=0)=1$;
    and likewise $\doop(S=1)\in \int_{K}^{\mathcal{I}}(a)$ only if $\mathcal{L}^a(R^*\mid S^*=1)=\mathcal{L}^\mathfrak{C}(R^*\mid S^*=1)$
    and $\mathbb{P}^a(S^*=1)=1$. 
    This implies that for all $a$, $\doop(S=0)\in \int_{K}^{\mathcal{I}}(a)$ only if $\mathcal{L}^a(S^*,R^*)=\mathcal{L}^{\mathfrak{C};\doop(S=0)}(S,R)$ and $\doop(S=1)\in \int_{K}^{\mathcal{I}}(a)$ only if $\mathcal{L}^a(S^*,R^*)=\mathcal{L}^{\mathfrak{C};\doop(S=1)}(S,R)$, so $\mathfrak{C}$ is an $\mathcal{I}-\int_K$ valid model of $(S^*,R^*)$.

    \paragraph{Falsifying $\mathfrak{C}$ if distance is used as complexity measure.}
    If we instead assume that complexity is given by the number of steps
    required to get from the starting position to the respective field, that is,
    $$K(a)=\mathbb{E}^a\left(3\cdot\mathbbm{1}(P^*\in \{1,3,4,6\})+2\cdot\mathbbm{1}(P^*\in \{2,5\})\right),$$ then $\mathfrak{C}$  is not an $\mathcal{I}-\int_{K}$ valid model of $(S^*,R^*)$. To see this, consider an action $a$ such that $\mathbb{P}^a(P^*=2,R^*=-1)=1$
    (such an action exists since it corresponds to an intervention in $\mathcal{I}^*$).
    Then $\doop(S=0)\in \int_{K}^{\mathcal{I}}(a)$ since the least complex action that goes left with probability 1 is the one that always goes to position $P^*=2$, but $\mathbb{P}^a(R^*=-1)=1\neq \frac{1}{3}=\mathbb{P}^{\mathfrak{C};\doop(S=0)}(R=-1)$.

    That there exist actions such that $\mathcal{L}^a(R^* \mid  S^*)\nsim p^{\mathfrak{C}}_{R\mid S}$ even though intuitively all actions seem to be modifying side rather than the reward mechanism, is an example of what \citet[Definition 6]{zhu2024meaningful} call `macro-confounding', here between $R^*$ and $S^*$. 
\end{example}

\section{Discussion of implications for related research}\label{sec: related work}

In this section, we will examine causal representation learning, causal discovery, and causal abstraction in light of the previous sections. Most notably, we argue that identifiability is not sufficient for interventional validity and that causal abstraction rests on an infinite regress. We also discuss connections to the philosophical literature on the logic of conditionals and, in \Cref{sec: pheno}, related work by \citet{janzing2024phenomenological}.

\subsection{Causal representation learning}\label{sec: CRL}

\subsubsection{Identifiability \& interventional validity in causal representation learning}\label{sec: CRL-identifiability}
\label{sec:latent-intervention}

Works on interventional causal representation learning  
\citep{squires2023linear, buchholz2024learning, jin2024learning,  varici2024general, von2024nonparametric, zhang2024identifiability} often consider the following setting: %
There are
some latent variables $\rZ^*=(Z_1^*,\dots Z_n^*)$ emulated by a causal Bayesian network $\mathfrak{A}$ (with DAG $\mathcal{G}$) and interventions $\mathcal{I}^*$;\footnote{The data-generating process is often described in terms of a latent or augmented 
structural causal model --- see, for example, \citet[Section 2.1]{von2024nonparametric}, \citet[Assumption 2]{buchholz2024learning}, and \citet[Definition 2.1]{lidisentangled}.
}
the observed data $\rX=(X_1^*,\dots, X^*_m)$ is given by some mixing function $f: \mathbb{R}^n\to \mathbb{R}^m$,  $\rX^*=f(\rZ^*)$, where $f$ is commonly assumed (at least) to be a diffeomorphism onto its image (e.g., \citet{von2024nonparametric, varici2024general}).
In this setting, the goal is to recover $\rZ^*$ and $\mathcal{G}$ from the distribution of $\rX^*$ in different environments $E\subseteq \mathcal{A}$, that is, from $\{\mathcal{L}^a(\rX^*)\}_{a\in E}$.%
\footnote{Some works assume a known graph and focus only on learning the unmixing function~\citep{wendong2024causal, lidisentangled}.
Other works consider additional assumptions on the mixing and latent distribution \citep{ahuja2023interventional}. %
In this section, we focus on works that address the problem of learning causal representations based on multiple interventions or domains, rather than on counterfactual and multiview data~\citep{brehmer2022weakly,von2021self, daunhawer2023identifiability, xusparsity, yaomulti}, or purely observational data~\citep{xie2020generalized, kivva2021learning,  welch2024identifiability}, or settings based on temporal structure~\citep{ahuja2022properties,lachapelle2022disentanglement, lippe2022citris}.%
}
Even under strong assumptions, this can usually only be done up to certain ambiguities. At most, we can identify the unmixing function $f^{-1}: \text{Im}(f)\to \mathbb{R}^n$ up to an equivalence class.  Definition 2.6 of \citet{von2024nonparametric} introduces one such equivalence class.
\begin{definition}{\textbf{$\sim_{\text{CRL}}$ \textbf{\citep[Definition 2.6]{von2024nonparametric}}.}} Let $\mathcal{H}$ be a set of unmixing functions $h:\text{Im}(f)\to\mathbb{R}^n$, and let $\mathfrak{G}$ be the set of DAGs over $n$ nodes. Let $\sim_{\text{CRL}}$ be the equivalence relation over $\mathcal{H}\times \mathfrak{G}$ defined as 
\begin{equation*}
    (h_1,\mathcal{G}_1) \sim_{\text{CRL}} (h_2,\mathcal{G}_2) \quad\iff\quad (h_2,\mathcal{G}_2)=(\rP_{\pi^{-1}}\circ \phi \circ h_1,\pi(\mathcal{G}_1))
\end{equation*}
for some element-wise diffeomorphism $\phi(\bm{v})=(\phi_1(v_1),\dots, \phi_n(v_n))$ of $\mathbb{R}^n$ and
graph isomorphism $\pi:\mathcal{G}_1\mapsto\mathcal{G}_2$, where $\rP_\pi$ is the corresponding permutation matrix over $[n]$.
\end{definition}

\paragraph{$\sim_{\text{CRL}}$ preserves causal validity if the latent representation is emulated by single-node interventions.}
The $\sim_\text{CRL}$ equivalence class
is quite special: If $\rZ^*=f^{-1}(\rX^*)$ is emulated by a CBN $\mathfrak{A}$ and single-node interventions in $\mathfrak{A}$ 
and $(h,\widetilde{\mathcal{G}})\sim_{\text{CRL}} (f^{-1},\mathcal{G})$, then there exists a CBN $\widetilde{\mathfrak{A}}$ such that $h(\rX^*)$ is emulated by single-node interventions in $\widetilde{\mathfrak{A}}$.\footnote{This follows from Proposition B.1 in \citet{von2024nonparametric}.}
This implies 
that $\widetilde{\mathfrak{A}}$ is an $\mathcal{I}-\int_S$ valid model of $h(\rX^*)$ for every set of interventions $\mathcal{I}$ in $\widetilde{\mathfrak{A}}$, see \Cref{prop: d2} (1).
 From this point of view, there is no sense in which $f^{-1}(\rX^*)$ is more a `causal representation' or the `ground truth' than $h(\rX^*)$,
and thus the choice of one representation as the assumed latent ground-truth is arbitrary.\footnote{%
\citet{von2024nonparametric} posit that ``since the scale of the variables is arbitrary, we clearly cannot predict the exact outcomes of interventions.'' 
This statement may be too strong if we adopt the perspective that there is no unique `true' causal representation, see \Cref{def: CRL}.  
When do we interpret an action $a$ as corresponding to intervention $\doop(H_1:=3)$? Under interpretation $\int_S$, for example, this would simply be an action such that $\mathcal{L}^a(h(\rX^*)_1)=\delta_3$ (assuming that condition 2) and 3) of \Cref{def: single node} are also satisfied). It is no more difficult to predict the outcome of interventions using the representation $h(\rX^*)$ than the representation $f^{-1}(\rX^*)$.}
\paragraph{$\sim_\text{CRL}$-identifiability does not imply interventional validity.}
Suppose that we have assumptions in place such that $f^{-1}$ is 
identifiable up to $\sim_\text{CRL}$ based on $\{\mathcal{L}^a(\rX^*)\}_{a\in E}$, and assume that $(h,\widetilde{\mathcal{G}})\sim_{\text{CRL}} (f^{-1},\mathcal{G})$.
Let $\widetilde{\mathfrak{A}}$ (with graph $\widetilde{\mathcal{G}}$) and $\widetilde{\mathcal{I}}^*$ be such that  $h(\rX^*)$ is emulated by $\widetilde{\mathfrak{A}}$ and $\widetilde{\mathcal{I}}^*$ with link $\widetilde{g}: \mathcal{A}\setminus \{ \mathcal{O}\}\to \widetilde{\mathcal{I}}^*$.
Then, it may be the case that $\widetilde{\mathfrak{A}}$ is not a $\{d\in \widetilde{\mathcal{I}}^*\mid d \text{ is single-node intervention}\}-\int_S$ valid model of $h(\rX^*)$;
in particular, \Cref{prop: d2} (1) only establishes validity if all interventions in $\widetilde{\mathcal{I}}^*$ are single-node.
In fact, as we will illustrate in \Cref{ex: CRL}, $\widetilde{g}(E)-\int_S$ can be an invalid model of $h(\rX^*)$ even if $\widetilde{g}(a)$ were a single-node intervention for all $a\in E$.
Therefore, to ensure interventional validity
we need to not only make assumptions about the observed environments (often necessary for identifiability), but also about the entire set of possible actions (this type of extrapolation to hypothetical (future) environments is discussed in \citet{buhlmann2020invariance}).
Conversely, even if the identified latent model is interventionally valid,
other models outside the equivalence class can also be interventionally valid (see \Cref{sec: CRL-def} and \Cref{ex: abstraction}).

We now present an example that illustrates the distinction between identifiability and interventional validity. 

\begin{example}{\textbf{Identifiability does not imply interventional validity.}} \label{ex: CRL}
    Assume a data-generating process where $\mathcal{A}=\{\mathcal{O},a_1,a_2,a_3\}$ and that the latent variables $(A^*,B^*)$ have the following distributions
    \begin{align*}
        \mathcal{L}^\mathcal{O}(A^*,B^*)&=\mathcal{N}\left(\begin{pmatrix}
            0\\
            0
        \end{pmatrix}, \begin{pmatrix}
            1 & \frac{1}{2}\\
            \frac{1}{2} & 1
        \end{pmatrix}\right)\\
        \mathcal{L}^{a_1}(A^*,B^*)&=\mathcal{N}\left(\begin{pmatrix}
            1\\
            \frac{1}{2}
        \end{pmatrix}, \begin{pmatrix}
            1 & \frac{1}{2}\\
            \frac{1}{2} & 1
        \end{pmatrix}\right)\\
        \mathcal{L}^{a_2}(A^*,B^*)&=\mathcal{N}\left(\begin{pmatrix}
            0\\
            1
        \end{pmatrix}, \begin{pmatrix}
            1 & 0\\
            0 & 1
        \end{pmatrix}\right)\\
            \mathcal{L}^{a_3}(A^*,B^*)&=\mathcal{N}\left(\begin{pmatrix}
            1\\
            1
        \end{pmatrix}, \begin{pmatrix}
            1 & \frac{1}{2}\\
            \frac{1}{2} & 1
        \end{pmatrix}\right)
    \end{align*} 
    Let $\mathfrak{A}$ be a CBN with graph $A\to B$ and $\mathcal{I}^*_{\mathfrak{A}}$, consisting only of minimal interventions, be such that $(A^*,B^*)$ is emulated by $\mathfrak{A}$ and $\mathcal{I}^*_\mathfrak{A}$  with link $g_{\mathfrak{A}}:\mathcal{A}\setminus\{\mathcal{O}\}\to \mathcal{I}^*_{\mathfrak{A}}$.
    From $\mathcal{L}^{a_1}(A^*,B^*)$, $\mathcal{L}^{a_2}(A^*,B^*)$, and the fact that $\mathcal{I}^*_\mathfrak{A}$ only contains minimal interventions, we can deduce that $g_{\mathfrak{A}}(a_1)=\doop(A\gets \mathcal{N}(1,1))$ and $g_{\mathfrak{A}}(a_2)=\doop(B\gets \mathcal{N}(1,1))$ are perfect single-node interventions, and that
    $g_{\mathfrak{A}}(a_3)$ is a multi-node intervention.
    Assume that we observe $\rX^*=f(A^*,B^*)$ for a linear injective
    mixing function $f$ in environments $E=\{\mathcal{O},a_1,a_2\}$. Following \citet{squires2023linear}, we assume that the non-observational environments, $\{a_1,a_2\}$, correspond to perfect single-node interventions.\footnote{Specifically $\mathcal{L}^{a_1}(A^*,B^*)=\mathcal{L}^{\mathfrak{A};\doop(A\gets \mathcal{N}(1,1))}(A,B)$ and $\mathcal{L}^{a_2}(A^*,B^*)=\mathcal{L}^{\mathfrak{A};\doop(B\gets \mathcal{N}(1,1))}(A,B)$.}
    Since this assumption is met here, we can identify $\mathfrak{A}$ up to permutation and scaling \citep{squires2023linear}.   
    But the CBN $\mathfrak{A}$ with graph $A\to B$ is neither $\{g_{\mathfrak{A}}(a_1),g_{\mathfrak{A}}(a_2)\}-\int_S$ nor $\{g_{\mathfrak{A}}(a_1),g_{\mathfrak{A}}(a_2)\}-\int_P$ valid.\footnote{%
    To see this, notice, for example, that $g_{\mathfrak{A}}(a_1)=\doop(A\gets \mathcal{N}(1,1))\in \int_S^{\{g_{\mathfrak{A}}(a_1),g_{\mathfrak{A}}(a_2)\}}(a_3)$ and $g_{\mathfrak{A}}(a_1)=\doop(A\gets \mathcal{N}(1,1))\in \int_P^{\{g_{\mathfrak{A}}(a_1),g_{\mathfrak{A}}(a_2)\}}(a_3)$, that is, $a_3$ is interpreted as the intervention $g_\mathfrak{A}(a_1)$, while the distributions $\mathcal{L}^{a_3}(A^*,B^*)\neq \mathcal{L}^{\mathfrak{C};g(a_1)}(A,B)$ do not match.}
    Therefore, to ensure that identifiability implies interventional validity, we need additional assumptions.
\end{example}

\subsubsection{A definition of causal representations}\label{sec: CRL-def}

Using the notion of interventional validity (\Cref{def: interventionally valid}), we now provide a formalization of what it means for a representation to be a causal representation. To the best of our knowledge, this is the first definition in the literature that defines a causal representation based on its properties rather than stipulating that a certain representation is the `ground truth' causal one.%
\footnote{\citet{cohen2022towards} raised the related point that existing
frameworks for causal modeling ``give no guidance regarding variable
[...] representation, and [...] no indication as to which behaviour policies or
physical transformations of state space shall count as interventions.''}

\begin{definition}{\textbf{Causal Representation.}} \label{def: CRL}
    Let a data-generating process $\mathcal{D}$ be given. We say that a representation $\rZ^*$ of $\mathcal{D}$ is \textit{an $\mathcal{I}-\int$ causal representation of $\mathcal{D}$} if there exists a CBN $\mathfrak{C}$ and a set of interventions $\mathcal{I}$ in $\mathfrak{C}$ such that $\mathfrak{C}$ is an $\mathcal{I}-\int$ valid model of~$\rZ^*$.
\end{definition}

The larger $\mathcal{I}$ is, the more interventions the model purports to make predictions about, and the interpretation $\int$ specifies which actions are interpreted as which interventions.
The interpretation $\int_C$ is special as every representation $\rZ^*$ of a data-generating process is an $\mathcal{I}-\int_C$ causal representation if $\mathcal{I}$ is a set of interventions in a $\rZ^*$-compatible CBN;
that is, without committing to certain non-circular interpretations,
there is apparently nothing distinctively causal about causal representations.

A consequence of \Cref{def: CRL} is that there is no unique causal representation, which clarifies how we may model a data-generating process at different levels of abstraction, see \Cref{sec: abstraction}, and, as suggested by \citet{sadeghi2024axiomatization}, that ``[t]here is no need to take the true causal
graph as the primitive object''.
It also suggests an approach to causal representation learning that focuses on learning representations that satisfy interventional validity
rather than recovering some `true' latent representation emulated by a CBN.
In fact, a key insight of the present work is that a representation $\rZ^*$ can be emulated by a CBN $\mathfrak{C}$ without $\mathfrak{C}$ being an $\mathcal{I}-\int$ valid model of $\rZ^*$ (for non-circular interpretation $\int$ and nonempty set of interventions $\mathcal{I}$, see, for example, \Cref{prop: intA,prop: d2,prop: complexity}).
Furthermore, the assumptions that render a latent CBN representation identifiable (up to some equivalence class) need not ensure it is a causal representation with desirable properties
such as interventional validity, see \Cref{sec: CRL-identifiability}.

\paragraph{Toward a new approach to causal representation learning.}
We can formulate the task as follows:  Let a data-generating process $\mathcal{D}$ and observed low-level features 
$\rX^*$ be given. Find a non-circular interpretation $\int$ and a transformation $h$ such that $h(\rX^*)$ is an $\mathcal{I}-\int$ causal representation for a suitable set of interventions $\mathcal{I}$. 

Requiring that $h(\rX^*)$ is a causal representation is not sufficient to get an interesting representation.
For example, the trivial representation $Z^*=h(\rX^*):=0$ is a causal representation, according to~\Cref{def: CRL}, since the CBN with one node $Z$ and distribution $\mathbbm{P}(Z=0)=1$ is an $\mathcal{I}-\int$ valid model of $Z^*=0$ for every set of interventions $\mathcal{I}$ in $\mathfrak{C}$ if $\int$ is an interpretation that satisfies \textbf{D0}.
Therefore in addition to requiring interventional validity, we probably want our representations to
satisfy further criteria
(some of which have, sometimes implicitly, motivated representation learning approaches); for example:
\begin{enumerate}
    
    \p We may want interventions in $\mathcal{I}$ to be implementable, that is, that for every intervention $d \in \mathcal{I}$ there exists an action $a$ such that $d \in \int^{\mathcal{I}}(a)$.\\
     This criterion is reminiscent of the idea that causes must be manipulable \citep{cartwright2007hunting,glymour2014commentary,PearlObesity2018,PearlDo2019}.

    \p We may want that for each variable there exists a single-node intervention in $\mathcal{I}$ that intervenes on it. 
    \newline
    This criterion is reminiscent of the idea of autonomy \citep{aldrich1989autonomy}.
     See \citet{janzing2024phenomenological} for a related requirement that we discuss in \Cref{sec: pheno}.

    \p  We may want to model only some interventions on specific nodes, that is, $\mathcal{I}$ should be restricted in some way.\\
    \citep{Dawid2021}  
    \p We may want the transformation $h$ to disregard little or no information.
    \newline
    See \citet{bengio2013representation} for a review of representation learning 
    where this is discussed as a desirable criterion.
    \filbreak
    \p We may want the representation to have a prespecified number of nodes.
    \newline
    \citep{kekic2024targeted}
    
    \p We may want the representation to contain a prespecified aspect of the data-generating process, that is, parts of $h$ may be prespecified.
    \newline
\citep{chalupka2015,weichwald2016merlin}

    \p We may want the resulting representation to be useful for (computationally or statistically efficiently) solving a prespecified set of downstream tasks.
    \newline
    See, for example, \citet{kinney2020causal,gultchin21a, lachapelle2023synergies,dyer2024a,saengkyongam2024identifying, christgau2024efficient}.
\end{enumerate}
What precise properties to require of
a causal representation and what it may be used for,
is in our opinion a neglected question; see \citep{woodward2016problem,bing2024invariance,cadei2024smoke} for discussions.
While some existing representation learning approaches
strive for representations that satisfy criteria similar to the ones listed above,
the resulting causal model does not make precise empirical predictions
when there is no explicit choice of how to interpret actions as interventions.
Since many transformations of the observed data may yield causal representations,
how to choose between 
representations and which properties to impose 
is a pressing question for future research.

\paragraph{Learning objectives and identifiability.}
Once the requirements for a causal representation are well specified,
an important question for future research is the precise formulation of a learning objective for causal representations in the sense of~\Cref{def: CRL}.
Existing learning objectives in causal representation learning
may
yield causal representations
but will require an explicit choice of a suitable interpretation and additional assumptions.
Furthermore, our work suggests a 
different perspective on identifiability theory:
Rather than focusing on identifiability as a guarantee for the learned representation to be disentangled with respect to some `true' latent variables, an identifiability analysis could assess whether all maximizers of a given learning objective share properties we deem desirable (similar in spirit to, for example, the analysis by \citet{marconato2024all}).
For example%
,
in~\Cref{sec:latent-intervention}, we show that the $\sim_{\text{CRL}}$ equivalence class preserves interventional validity %
under the  interpretation $\int_S$~(\Cref{def: single node}). 
If one is willing to make assumptions on the set of all possible actions and not only those that induce the distributions used for learning the representation,
then
an identifiability result proving that maximizers of a given objective are in the $\sim_{\text{CRL}}$ equivalence class
would thus provide a sufficient (but not necessary) condition for all (or none) of the corresponding representations to be causal representations.
 Since coarser equivalence classes than $\sim_\text{CRL}$ may preserve interventional validity (depending on the chosen interpretation and assumptions), it may be possible to simplify the learning problem.

\subsection{Causal discovery}

\paragraph{What do we assume when we assume that observed variables are described by a CBN?}
The starting point of causal discovery is, in the words of  \citet[Assumption 6.1]{dawid2010beware}, the assumption that ``[t]here exists some DAG [$\mathcal{G}$] that is a causal DAG representation of the system.'' 
\citet{dawid2010beware} considers this a ``strong [a]ssumption'', suggesting that the assumption does not amount to mere emulation, which is always possible (see \Cref{def: generation} and discussion below it).
Our work makes precise what this assumption may amount to other than emulation, namely
 the existence of a CBN $\mathfrak{C}$ and a set of interventions $\mathcal{I}$ in $\mathfrak{C}$ such that $\mathfrak{C}$ is an $\mathcal{I}-\int$ valid model of the given representation $\rZ^*$ (see \Cref{def: CRL}).
Whether this assumption holds depends both on the interpretation $\int$ and the modeled interventions $\mathcal{I}$. 
We clarify that this assumption can fail to be true, namely for a given representation $\rZ^*$, non-circular interpretation $\int$, and set of interventions $\mathcal{I}$, there may not exist an
$\mathcal{I}-\int$ valid model of $\rZ^*$.
We demonstrate this by the classical example of the effect of total cholesterol on heart disease. Here, our framework enables us to make the claim formally precise that this representation does not admit a causal model.

\begin{example}{\textbf{Total cholesterol \Cref{ex: TC} continued. Maybe no causal graph is adequate.}}  \label{ex: causal discovery}
Assume again that $(\text{LDL}^*,\text{HDL}^*,\text{HD}^*)$ is emulated by $\mathfrak{A}$ and interventions $\mathcal{I}^*=\{\doop(\text{LDL}\leftarrow \mathcal{N}(y,1),\text{HDL}\leftarrow \mathcal{N}(x,1))\mid x,y\in \mathbb{R}\}$, and let $(\text{TC}^*,\text{HD}^*)=(\text{LDL}^*+\text{HDL}^*,\text{HD}^*)$.
In this example, we argue that the representation $(\text{TC}^*,\text{HD}^*)$ is not a causal representation given interpretation $\int_S$ and set of perfect interventions $\mathcal{I}=\{\doop(\text{TC}\gets \mathcal{N}(1,2)),\doop(\text{HD}\gets \mathcal{N}(1,6))\}$ that change 
the mean of total cholesterol or heart disease, respectively. Observationally, total cholesterol and heart disease are correlated. Therefore, every compatible CBN must have an edge between those two nodes, that is, either $\text{TC}\to \text{HD}$ or  $\text{TC}\gets \text{HD}$.

Let $\mathfrak{C}$ be a compatible CBN with graph $\text{TC}\to \text{HD}$. Then, similar to  \Cref{ex: TC},  we have an action $a$ such that $\doop(\text{TC}\gets \mathcal{N}(1,2))\in \int_S^{\mathcal{I}}(a)$, but $\mathbb{E}^{a}(\text{HD}^*)=2\neq \frac{1}{2}=\mathbb{E}^{\mathfrak{C};\doop(\text{TC}\gets \mathcal{N}(1,2))}(\text{HD})$. 

We now argue that having heart disease cause total cholesterol also results in an invalid model. Let $\mathfrak{H}$ be a compatible CBN with graph $\text{TC}\gets \text{HD}$. Let $a$ be an action such that $\mathcal{L}^a(\text{LDL}^*,\text{HDL}^*,\text{HD}^*)=\mathcal{L}^{\mathfrak{A};\doop(\text{LDL}\gets \mathcal{N}\left(\frac{1}{2},1\right),\text{HDL}\gets \mathcal{N}(0,1)\})}(\text{LDL},\text{HDL},\text{HD})$. Now, $\doop(\text{HD}\gets \mathcal{N}(1,6))\in \int_S^{\mathcal{I}}(a)$, but the distributions do not match, for example, because 
$\mathbb{E}^a(\text{TC}^*)=\frac{1}{2}$, while
\begin{align*}
    &\mathbb{E}^{\mathfrak{H};\doop(\text{HD}\gets \mathcal{N}(1,6))}(\text{TC})\\
    &=\mathbb{E}^{\mathfrak{H};\doop(\text{HD}\gets \mathcal{N}(1,6))}\left(\mathbb{E}^{\mathfrak{H};\doop(\text{HD}\gets \mathcal{N}(1,6))}(\text{TC}\mid \text{HD})\right)\\
    &\overset{*}{=}\mathbb{E}^{\mathfrak{H};\doop(\text{HD}\gets \mathcal{N}(1,6))}\left(\frac{\text{HD}}{6}\right) \\
    &=\frac{1}{6},
    \end{align*}
    where $\overset{*}{=}$ follows by calculating the conditional mean in the joint normal distribution
    $\mathcal{L}^{\mathfrak{H};\doop(\text{HD}\gets\mathcal{N}(1,6))}(\text{TC}, \text{HD})=\mathcal{N}\left(\begin{pmatrix}
        0\\
        1
    \end{pmatrix},\begin{pmatrix}
        2 & 1\\
        1 & 6
    \end{pmatrix}\right).$

  In summary, there does not exist an
  $\mathcal{I}-\int_S$ valid model of $(\text{TC}^*,\text{HD}^*)$. 
\end{example}

\paragraph{Identifiability does not imply interventional validity in causal discovery.}
In \Cref{ex: causal discovery}, we argue that the representation $(\text{TC}^*,\text{HD}^*)$ does not admit an interventionally valid causal model (for a given interpretation and set of interventions). In this example, it is also unclear which assumptions one would use to identify the causal direction.  Unfortunately, identifiability does not ensure interventional validity. 
For example, if the representation $\rZ^*$ is emulated by $\mathfrak{C}$ and $\mathcal{I}^*$ with link $g$ and the observed environments $\{\mathcal{L}^{\mathfrak{C};g(a)}(\rZ^*)\}_{a\in E}$, $E\subseteq \mathcal{A}$, correspond to single-node interventions in $\mathfrak{C}$, then we can identify $\mathfrak{C}$ \citep{eberhardt2006}. But this does not guarantee that $\mathfrak{C}$ is a $\{g(E)\}-\int$ valid model of $\rZ^*$ for the non-circular interpretations $\int_S$ and $\int_P$, see \Cref{ex: CRL}.
Work on causal discovery focuses on identifiability of an emulating CBN $\mathfrak{C}$
since it is implicitly assumed that $\mathfrak{C}$ is a valid model of $\rZ^*$ if $\rZ^*$ is emulated by $\mathfrak{C}$ and interventions in $\mathfrak{C}$.
One of the main contributions of this paper is to clearly distinguish the notions of `emulated by' (\Cref{def: generation}) and `interventional validity' (\Cref{def: interventionally valid}); see also \Cref{prop: intA,prop: d2,prop: complexity}.

\subsection{Causal abstraction}\label{sec: abstraction}

Causal abstraction is about transforming one causal model into another causal model. Which constraints such transformations ought to satisfy has been up for debate \citep{rubenstein2017causal,beckers2019abstracting,otsuka2022,massidda2023causal,otsuka2024process}. The argumentation has been rooted in intuitions about which models can intuitively be considered abstractions of other models. The aim then has been to find mathematical formalizations that capture these intuitions. In this work, we take a different approach: Instead of considering when one model is a `valid abstraction' of another model, we ask when a model is an interventionally valid model of a representation. Rather than relying on intuitions about what models ought to count as abstractions of other models, our approach suggests that model transformations and abstractions should preserve or induce interventional validity. In this section, we argue that existing notions of abstraction do not necessarily align with the goal of preserving or inducing interventional validity, and that we need  interpretations of actions as interventions to avoid an
infinite regress.

\paragraph{Existing notions of abstractions may not preserve interventional validity.
} The following example shows that  $\tau$-abstractions \citep{beckers2019abstracting} do not preserve interventional validity. We focus on $\tau$-abstractions since this is the strictest notion among those in the literature that considers a restricted set of low-level interventions. It follows that exact transformations, as presented in \citet{rubenstein2017causal}, also do not preserve interventional validity. 
\begin{example}{\textbf{Transforming an interventionally valid model into an invalid model by a $\tau$-abstraction.}}\label{ex: valid to invalid intS}
    Consider the SCM
    $M_{\rX}$ given by
    \begin{align*}
        X_1&:=U_1\sim \text{Unif}([4]).
    \end{align*}
    Let $\mathfrak{A}$ be a single-node CBN with observational distribution induced by $M_{\rX}$. Assume that $\rX^*$ is emulated by $\mathfrak{A}$ and interventions $\mathcal{I}^*=\{\doop(X_1=x) \mid x\in [4]\}$. Since $\mathcal{I}^*$ only has single-node interventions, $\mathfrak{A}$ is an $\mathcal{I}-\int_S$ valid model of $\rX^*$ for every set of interventions $\mathcal{I}$ in $\mathfrak{A}$, see \Cref{prop: d2} (1). We now present a $\tau$-abstraction
    of $(M_{\rX},\mathcal{I}^*)$ that does not preserve interventional validity. 

    Consider SCM $M_{\rY}$ given by
    \begin{align*}
        Y_1&:=N_1 \\
        Y_2&:=N_2 
    \end{align*}
    with $N_1,N_2\overset{\text{iid}}{\sim}\text{Unif}(\{0,1\})$,
    and let $\tau: [4] \to \{0,1\}^2$ be given by $$x\mapsto (\mathbbm{1}(x=3)+\mathbbm{1}(x=4), \mathbbm{1}(x=2)+\mathbbm{1}(x=4)).$$
    $\tau$ can be viewed as mapping from the integers $[4]$ to a binary representation of those integers. 
    We can verify that $M_{\rY}$ is a $\tau$-abstraction of $(M_{\rX},\mathcal{I}^*)$
    as defined by \citet{beckers2019abstracting}, see \Cref{app: check tau-abstraction}.
    Let $\mathfrak{C}$ be a CBN with graph and observational distribution induced by $M_{\rY}$, and let $\mathcal{I}$ be a set of interventions in $\mathfrak{C}$ such that $\doop(Y_1=0)\in \mathcal{I}$. $\mathfrak{C}$ is compatible with $\rY^*:=\tau(X_1)$ but is not 
    an $\mathcal{I}-\int_S$ valid model of $\rY^*$. To see this, consider $a$ such that $\mathcal{L}^a(X_1^*)=\mathcal{L}^{\mathfrak{A};\doop(X_1=1)}(\rX)$ (such an action exists since $\doop(X_1=1)\in \mathcal{I}^*$).
    Now, $\doop(Y_2=0)\in \int_S^\mathcal{I}(a)$
     (since $\mathbb{P}^a(Y^*_{2} = 0) = 1$)
    but $\mathcal{L}^a(\rY^*)\neq \mathcal{L}^{\mathfrak{C};\doop(Y_2=0)}(\rY)$, for example, because
    $\mathbb{P}^{a}(Y_{1}^*=0)=1$
    while
    $\mathcal{L}^{\mathfrak{C};\doop(Y_{2}=0)}(Y_{1})=\text{Unif}\{0,1\}$.  
\end{example}

Similarly, in \Cref{app: beckers -- intA also not preserved} we show that interventional validity is not preserved by constructive soft abstractions
\citep{massidda2023causal}
under interpretation $\int_P$. While it is possible that some notions of abstractions preserve validity for some interpretations, this would be by coincidence rather than per definition.
In \Cref{ex: abstraction}, we show that we can preserve or induce interventional validity
 by a transformation that is not a valid $\tau$-abstraction.

\paragraph{Existing notions of abstraction may disallow transforming invalid models into valid models.}
 It may seem puzzling why it would ever be useful to have a high-level model if a low-level model is known. While one motivation may be interpretability, our framework highlights another reason: Maybe we can transform an interventionally invalid model into an interventionally valid model.
 We show an example of this below, providing a new formal argument for why high-level models may be preferable to low-level models (for other motivations for high-level causal models see \citet{hoel2013quantifying,hoel2017map,anand2023,zennaro2024causally}).

\begin{example}{\textbf{Transforming an interventionally invalid model into an interventionally valid model.}} \label{ex: abstraction}
    Consider the SCM $M_{\rX}$ given by 
    \begin{align*}
        X_1&:=U_1\\
        X_2 &:=U_2\\
        X_3&:= X_1+X_2+U_3,
    \end{align*}
    where $U_1,U_2,U_3 \overset{\text{iid}}{\sim} \mathcal{N}(0,1)$. Let $\mathfrak{A}$ be a CBN with graph and observational distribution induced by $M_{\rX}$. Assume that $\rX^*$ is emulated by $\mathfrak{A}$ and
    interventions $\mathcal{I}^*=\{\doop(X_1=x_1,X_2=x_2), \doop(X_1=x_1), \doop(X_2=x_2) \mid x_1,x_2\in \mathbb{R}\}$. $\mathfrak{A}$ is not $\mathcal{I}^*-\int_S$ valid model of $\rX^*$. To see this, fix $x_1,x_2\in \mathbb{R}$ and consider an action $a$ such that $\mathcal{L}^a(\rX^*)=\mathcal{L}^{\mathfrak{C};\doop(X_1=x_1,X_2=x_2)}(\rX)$ (which exists since $\doop(X_1=x_1,X_2=x_2)\in\mathcal{I}^*$). Now $\doop(X_1=x_1)\in \int_S^{\mathcal{I}^*}(a)$, but $\mathcal{L}^{a}(\rX^*)\neq \mathcal{L}^{\mathfrak{C};\doop(X_1=x_1)}(\rX)$, for example, because $\mathcal{L}^{a}(X_2^*)$ has point mass while $\mathcal{L}^{\mathfrak{C};\doop(X_1=x_1)}(X_2)$ is a normal distribution.  

    Instead, consider now
    the transformation $\tau: \mathbb{R}^3\to \mathbb{R}^2, (x_1,x_2,x_3)\mapsto (x_1+x_2,x_3)$, and the SCM $M_{\rY}$ given by
    \begin{align*}
        Y_1&:=\sqrt{2}N_1\\
        Y_2&:=Y_1+N_2,%
    \end{align*}
    where $N_1, N_2\overset{\text{iid}}{\sim} \mathcal{N}(0,1)$. Let $\mathfrak{C}$ be a CBN with graph and observational distribution induced by $M_{\rY}$. $\mathfrak{C}$ is an  $\mathcal{I}-\int_S$ (and $\mathcal{I}-\int_P$)  valid model of $\tau(\rX^*)$ for every set of interventions $\mathcal{I}$ in $\mathfrak{C}$.
    But $M_{\rY}$ is not a $\tau$-abstraction of  $(M_\rX,\mathcal{I}^*)$
    as defined by \citet{beckers2019abstracting}, see \Cref{app: beckers}.
    This shows that sometimes a transformation can induce interventional validity without being a $\tau$-abstraction.
    
    If instead $\mathcal{I}^*=\{\doop(X_1=x_1),\doop(X_2=x_2) \mid x_1,x_2\in \mathbb{R}\}$, then $\mathfrak{A}$ would be an  $\mathcal{I}-\int_S$ (and $\mathcal{I}-\int_P$) valid model of $\rX^*$ for every set of interventions $\mathcal{I}$ in $\mathfrak{A}$. But $M_{\rY}$ would still not be a $\tau$ abstraction of $(M_{\rX},\mathcal{I}^*)$. 
    This shows that sometimes a transformation can preserve interventional validity without being a $\tau$-abstraction.
\end{example}

We think that the perspective of preserving
or inducing interventional validity is useful to rigorously ground the notion of valid model transformations.
We now argue that the foundation of causal abstraction is dubious without an explicit interpretation.

\paragraph{Causal abstraction rests on an infinite regress.}
In existing works on abstraction \citep{rubenstein2017causal, beckers2019abstracting, beckers2020approximate,rischel2021compositional,massidda2023causal,xia2024neural} there is a map $\omega: \mathcal{I}_L\to \mathcal{I}_H$ between interventions in the low-level model and the high-level model. Implicitly, this suggests an interpretation $\int$ such that for all  $a\in \mathcal{A}$, $d_H\in \int^{\mathcal{I}_{H}}(a)$ if and only if $d_L\in \int^{\mathcal{I}_{L}}(a)$ for some $d_L\in \omega^{-1}(\{d_H\})$. But to determine if $d_L\in \int^{\mathcal{I}_{L}}(a)$ we would presumably need yet another model on an even lower level, leading to an infinite regress. The definition of interventions by \citet{woodward2005making} suffers from an analogous problem as explained by \citet{baumgartner2009interdefining}. In some concrete applications, this potentially infinite regress may come to a halt at a level of abstraction where there is no ambiguity about which actions constitute interventions. One example of this is the work on causal abstraction of artificial neural networks, where there seems to be no ambiguity about what constitutes an intervention on the level of neuron activations \citep{geiger2021causal,geiger2022inducing,geiger2024causal,geiger2024finding}. We think this is the exception rather than the rule; in most applications, there is no level of abstraction where interventions are non-ambiguous. The case of neural networks is peculiar because the network is implemented to literally be a causal model
and the structure is given by the network topology. This means that on the level of neurons, we can adopt the circular interpretation $\int_C$ (\Cref{def: tautology}); it does not matter that other causal models of the neural network are also interventionally valid under interpretation $\int_C$ because we have prior justification to regard one of them as the causal model. Since this case is an exception, we usually need an interpretation that does not lead to a (potentially infinite) regress. The interpretations presented in this work avoid the regress by depending only on the distribution of the variables on the one given modeling level (and potentially taking the complexity of actions into account).

\subsection{Logic of conditionals}

In this section, we clarify the connections between causal models and the logic of conditionals.
The connection between causal models and (counterfactual) conditionals has received attention from researchers questioning the use of causal models in algorithmic fairness \citep{Hu2020, Kasirzadeh2021}.
Our work is, as far as we know, the first to spell out a precise connection between interpretations of causal models and different analyses of conditionals. 
It turns out that considerations from the philosophical literature on conditionals are relevant for how we interpret causal models,
contrary to what is suggested, for example, in \citet{Pearl2009} (see below).

\paragraph{The material condition.}
In \Cref{sec:what-wrong} we considered the proposition \Paste{prop}
Since we assume that Sofia is right that $A$ causes $B$,
the CBN implies $\mathcal{L}^{\doop(B=5)}(A,B) =  \mathcal{N}(0,1) \otimes \delta_5$,
so (P) should be a true proposition.
The proposition has the form of a conditional, that is, a proposition of the form `If $p$, then $q$' \citep{sep-logic-conditionals}. 
In mathematics, it is common to interpret conditionals as the material condition `$p\Rightarrow q$',
where $p\Rightarrow q \equiv \neg p \vee q$.\footnote{We use `$\Rightarrow$' rather than the more commonly used `$\to$' to distinguish from graph notation.} The material condition does not provide the correct analysis of propositions like (P); this can be seen by considering the analogous proposition
    \begin{enumerate}
    \item[(P')] If you intervene $\doop(B=6)$, then you will observe the distribution $\mathcal{N}(0,1)\otimes \delta_5$ over $(A,B)$.
\end{enumerate}
Since $\mathcal{L}^{\doop(B=6)}(B) = \delta_6 \neq \delta_5$, (P') should be a false proposition.
Furthermore, (P') should be false regardless of whether anyone intervened $\doop(B=6)$.
For example, if someone intervened $\doop(B=7)$ that would not make (P') true. 
However, if we analyze (P') as a material condition, then (P') is true if you do not perform the intervention $\doop(B=6)$, that is, $p\Rightarrow q$ is true if $p$ is false.

Another way to see that (P') is not a material condition is that both (P') and the reverse
\begin{enumerate}
    \item[(P'')] If you observe the distribution $\mathcal{N}(0,1)\otimes \delta_5$ over $(A,B)$, then you had intervened $\doop(B=6)$. 
\end{enumerate}
should be false. But this is not possible if (P') is a material condition since it is a tautology that $(p\Rightarrow q) \vee (q \Rightarrow p)$. These are some of the `paradoxes of material conditions'.
Since (P') is apparently not a material condition, by analogy, (P) also is not a material condition.\footnote{That causal statements are not to be analyzed as material conditions has, in the words of \citet{shoham1990nonmonotonic}, ``been taken into account by all philosophers interested in the subject.''}

\paragraph{Strict implication.} An alternative interpretation of propositions like (P) is that they express strict implication \citep{Lewis1912implication, Zach2019}. Using the notation of modal logic, strict implication $\strictif$ is defined by $p \strictif q \equiv \square (p \Rightarrow q)$ which means that the material condition $p \Rightarrow q$ is true in every `accessible world' \citep{sep-logic-conditionals}. Consider the interpretation of (P) as (P1):
\Paste{p1} 
We regard the `accessible worlds' as the distributions $\{\mathcal{L}^a(A,B)\}_{a\in \mathcal{A}}$ induced by the available actions $\mathcal{A}$. If we interpret (P1) as a strict implication 
with this set of  accessible worlds, then (P1) means that 

\begin{enumerate}[label=\hspace{0.1cm}(P-strict), leftmargin=*, align=left]
    \item For every $a\in \mathcal{A}$, $(\mathcal{L}^a(B)=\delta_5) \Rightarrow (\mathcal{L}^a(A,B)=\mathcal{N}(0,1)\otimes \delta_5).$
\end{enumerate}

In the introductory example, see \Cref{sec: dialogue}, this turned out to be false since there apparently was an action $a$ such that $\mathcal{L}^a(A,B)=\delta_0\otimes \delta_5$.
Interpreting (P1) as 
(P-strict) is analogous to interpretation $\int_S$.
Someone might object that (P1) should not be interpreted as a strict implication and below we provide arguments against interpreting (P1) as (P-strict).

\paragraph{Why (P-strict) is probably not the correct analysis of (P).} Assume for the sake of argument that (P-strict) provides the correct analysis of (P). Then, by analogy,
\begin{enumerate}[label=\hspace{0.1cm}(D-strict), leftmargin=*, align=left]
    \item For every $a\in \mathcal{A}$, $(\mathcal{L}^a(A,B)=\delta_0 \otimes \delta_5) \Rightarrow (\mathcal{L}^a(A,B)=\mathcal{N}(0,1)\otimes \delta_5).$
\end{enumerate}
provides the correct analysis of 
\begin{enumerate}
    \item[(D)] If you intervene $\doop(A=0,B = 5)$, then you will observe the distribution
$\mathcal{N}(0,1) \otimes \delta_5$ over $(A, B)$.
\end{enumerate}
Since we assume that $A\to B$,
(P) is true. Since we assume that (P-strict) provides the correct analysis of (P), (P-strict) must also be true. (D) is false and since (D-strict) is supposed to provide an analysis of (D), (D-strict) must also be false. But now we have a contradiction since (P-strict) implies (D-strict). This follows from monotonicity of strict implication: $p \strictif q \models (p\wedge d)\strictif q$. Here, (P-strict) implies (D-strict) because $\mathbb{P}(A=0,B=5)=1$ if and only if $\mathbb{P}(A=0)=1 \wedge \mathbb{P}(B=5)=1$. Therefore, if (P1) is to provide the correct analysis of (P), we must interpret (P1) as a conditional that does not satisfy monotonicity.\

\paragraph{Minimal change semantics.}
\citet{Stalnaker1968} and \citet{Lewis1973}
used this type of argument to show that subjunctive conditionals are generally not strict implications
(though this is not undisputed \citep{kai2001,gillies2007counterfactual,williamson2020suppose}). 
Here is an example from \citet{Lewis1973}:
\begin{quote}
 If Otto had come, it would have been a lively party; but if both Otto and Anna had come, it would have been a dreary party; but if Waldo had come as well, it would have been lively; but ... \citep[Page 10]{Lewis1973}
\end{quote}
Since these sentences are felicitous, they cannot express strict implications. As an alternative \citet{Stalnaker1968} and \citet{Lewis1973} proposed an analysis based on minimal change semantics. In essence, this approach considers a subjunctive conditional to be true if the consequent is true in the closest possible world where the antecedent is true, see \citet{Zach2019} for a simple exposition. If we measure `closeness of possible worlds' by the complexity $K$ of the action, then this approach is analogous to using interpretation $\int_K$. In contrast to $\int_S$, $\int_K$ is a non-monotonic interpretation: Consider two interventions $\doop(j\leftarrow q_j,j \in J), \doop(j\leftarrow q_j,j \in J')\in \mathcal{I}$ 
with $J\supsetneq J'$. Then $\doop(j\leftarrow q_j,j \in J)\in \int_S^{\mathcal{I}}(a)$ implies that $\doop(j\leftarrow q_j,j \in  J') \in \int_S^{\mathcal{I}}(a)$.
This inference is not valid for $\int_K$ since the actions that change the conditional distributions for all nodes in $J$ may be more complex than actions that change the conditional distribution only for nodes in $J'$. The drawback of using an interpretation like $\int_K$ is that we need a theory of complexity of actions, or, alternatively, of similarity among worlds.
This approach is rife with difficulties \citep{ Fine1975, Lewis1979,Hajek2014}.\footnote{One seemingly simple way to define similarity of worlds is using the causal model. This is suggested in 
 \citet{galles1998axiomatic}, where they write ``In essence, causal models define an obvious distance measure among worlds $d(w,w')$, given by the minimal number of local interventions needed for transforming $w$ into $w'$.'' This idea is problematic: If the distances among worlds are given by the causal model, then it amounts to using the circular interpretation $\int_C$.} According to Pearl 
\begin{quote}
Such difficulties do not enter the structural account. In contrast with Lewis's theory, counterfactuals [including interventional claims] are not based on an abstract notion of similarity among hypothetical worlds; instead, they rest directly on the mechanisms [...] that produce those worlds and on the invariant properties of those mechanisms. [...] [S]imilarities and priorities -- if they are ever needed --  may be read into the \textit{do}($\cdot$) operator as an afterthought [...], but they are not basic to the analysis. \citep[][page 239-240]{Pearl2009}
\end{quote}

If one relies on an interpretation like $\int_K$ these notions are in fact basic to the analysis. 
Using the $\doop(\cdot)$ operator to refer both to operations in a mathematical model and to actions in the world,
risks obfuscating these fundamental issues, rather than solving them.%
\footnote{One example of this conflation appears in \citet{galles1998axiomatic}: ``[D]efine the action $\doop(X=x)$ as the minimal change in [the causal model] $M$  required to make $X=x$ [...].''}

\section{Conclusion}\label{sec:conclusion}

Without specifying which real-world actions correspond to which interventions in the mathematical model, it is unclear 
what it means for a causal Bayesian network to be a valid causal model of a representation.
We develop a formal framework for reasoning about the interventional validity of a causal model, which depends on the chosen interpretation of actions as interventions.
We discuss different interpretations by considering five desiderata, \textbf{D0}--\textbf{D4}, some of which must be violated to escape circularity.
Only when the interpretation is made precise can
causal models make testable predictions about future observations of a real-world system, which is crucial to enable falsification.
We submit that rigorous thinking about the relationship between real-world systems and causal models is critical to facilitate the use of causal models in practice.
Otherwise,
it is unclear how to use causal models to
predict the effect of an action unless the system has already been observed under that very action, echoing the conclusion in ``Use and Abuse of Regression'' \citep{box1966use} that
``[t]o find out what happens to a system when you interfere with it you have to interfere with it (not just passively observe it).''

\paragraph{Acknowledgments.}
LG was supported by the Danish Data Science Academy, which is funded by the Novo Nordisk Foundation (NNF21SA0069429).

\newpage
\bibliography{bib_file}

\appendix

\newpage
\section{Notation}\label{app: notation}

\vspace{\baselineskip}

\adjustbox{center=\textwidth}{
\begin{tabular}{p{3cm} p{15cm}} \toprule
\textbf{Symbol} & \textbf{Description} \\ \midrule
$\mathcal{A}$ & A set of actions, see \Cref{def: dgp}. \\ \midrule
$\mathcal{O}$ & $\mathcal{O}\in \mathcal{A}$ denotes the action corresponding to the observational regime, see \Cref{def: dgp}. \\ \midrule
$\mathcal{G}$ &  
Directed acyclic graph over nodes $\{1, \dots, n\}$. \\ \midrule
$[n]$ &  
 The set $\{1,\dots, n\}$. \\ \midrule
$\mathfrak{C}$, $\mathfrak{A}$, $\mathfrak{H}$ &  Causal Bayesian networks (CBNs), see \Cref{def: CGM}.  \\ \midrule
$h$ & Function $\mathbb{R}^m\to \mathbb{R}^n$ transforming low-level features, see \Cref{def:representation}.  \\ \midrule
$\rZ$ & Multivariate random variable of CBN variables
$\rZ\in \mathbb{R}^n$.   \\ \midrule
$\rX^*$ & Multivariate random variable of low-level features, $\rX^*\in \mathbb{R}^m$, see \Cref{def: dgp}.   \\ \midrule
$\rZ^*$ & Multivariate representation of a data-generating process, $\rZ^*=h(\rX^*)\in \mathbb{R}^n$, see \Cref{def:representation}.   \\ \midrule
$\mathcal{L}$ & Denotes a distribution. For example, $\mathcal{L}^a(h(\rX^*))$ is the distribution induced by action $a\in \mathcal{A}$ over $h(\rX^*)$, see \Cref{def:representation}, and $\mathcal{L}^{\mathfrak{C};d}(\rZ)$ denotes the distribution induced by CBN $\mathfrak{C}$ over variables $\rZ$ and intervention $d$, see \Cref{def: CGM}.  \\ \midrule
$\mathcal{L}^a(Z_j^*\mid \bm{Y}^*)\sim p$ & Denotes the claim that the kernel $\bm{y}^*\mapsto p(\cdot \mid \bm{y}^*)\cdot \nu$ is a regular conditional probability distribution of $Z_j^*$ given $\bm{Y}^*\subseteq \rZ^*$ under distribution $\mathcal{L}^a(\rZ^*)$ (for some fixed $\sigma$-finite measure $\nu$, which is suppressed in the notation).%
\\ \midrule
$\mathcal{I}$ & Set of interventions in a CBN.  \\ \midrule
$\mathcal{I}^*$ & Set of interventions in a CBN. Used to denote interventions in a CBN that emulate a representation, see \Cref{def: generation}.  \\ \midrule
$\doop(j\gets q_j,j\in J)$ & An intervention on nodes $J$, see \Cref{def: CGM}.  \\ \midrule
$d$, $d^*$ & Denotes interventions, $d\in \mathcal{I}$, $d^*\in \mathcal{I}^*$.  \\ \midrule
$p_i^{\mathfrak{C}}$, $p_i^{\mathfrak{C};d}$ & 
$p_i^{\mathfrak{C}}$ denotes the $i$'th kernel given by CBN $\mathfrak{C}$. $p_i^{\mathfrak{C};d}$ denotes $i$'th kernel given by CBN $\mathfrak{C}$ and intervention $d=\doop(j\gets q_j, j\in J)$, that is, $p_i^{\mathfrak{C};d}=p_i^{\mathfrak{C}}$ for $i\notin J$, and $p_i^{\mathfrak{C};d}=q_i$ for $i\in J$.\\ \midrule
$\int$ & An interpretation that takes a set of interventions $\mathcal{I}$ in a CBN and an action $a\in \mathcal{A}$ and outputs a subset $\int^\mathcal{I}(a)$ of $\mathcal{I}$, see \Cref{def: interpretation} and \Cref{table: interpretations}. \\ \midrule
$\int^{\mathcal{I}}$ & Given a set of interventions $\mathcal{I}$ in a CBN, an interpretation induces a mapping $\int^{\mathcal{I}}: \mathcal{A}\to \mathcal{P}(\mathcal{I})$, see \Cref{def: interpretation} and \Cref{table: interpretations}. \\ \midrule
$\mathcal{N}(\bm{\mu},\bm{\Sigma})$ &  Joint normal distribution with mean $\bm{\mu}$ and covariance $\bm{\Sigma}$.\\ \midrule
$\delta_x$ &  
Dirac distribution with support $\{x\}$ ($x\in \mathbb{R}$).\\ \midrule
$\text{Unif}(S)$ & Uniform distribution over finite set $S$.  \\ \midrule
$\text{Ber}(p)$ & Bernoulli distribution with mean $p$.  \\ \midrule
$\otimes$ &  
$\nu_1 \otimes \nu_2$ denotes the product measure of two $\sigma$-finite measures $\nu_1$ and $\nu_2$.
\\
\bottomrule 
\end{tabular}  
}

\section{Interpretation $\int_P$ satisfies \textbf{D2}}\label{app: intP sat D2}
\begin{proposition}
    $\int_P$ satisfies desideratum \textbf{D2}.
\end{proposition}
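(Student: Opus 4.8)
The plan is to take two interventions $d=\doop(j\gets q_j,\,j\in J)$ and $b=\doop(j\gets r_j,\,j\in K)$, both lying in $\int_P^{\mathcal{I}}(a)$, and to show $\mathcal{L}^{\mathfrak{C};d}(\rZ)=\mathcal{L}^{\mathfrak{C};b}(\rZ)$. Since $d$ and $b$ are interventions in the same CBN $\mathfrak{C}$, both interventional distributions factor over the common DAG $\mathcal{G}$, so it suffices to show that the two intervened kernels agree node-by-node: for every $i\in[n]$, $p_i^{\mathfrak{C};d}(\cdot\mid\pa_i)\cdot(\text{base measure})=p_i^{\mathfrak{C};b}(\cdot\mid\pa_i)\cdot(\text{base measure})$ for all $\pa_i$. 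Once the conditional kernels coincide, the two joint densities are products of identical factors and the distributions are equal. I will work with these kernels as conditional probability distributions rather than densities, to avoid confusion coming from the fact that perfect interventions may also change the dominating measures.

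The key step is to show that $J$ and $K$ agree on all non-source nodes. First I would use that $d$ and $b$ are perfect (condition~1): for a non-source node $i\in J$, condition~2 gives $\mathcal{L}^a(Z_i^*\mid\PA_i^*)\sim q_i$ with $q_i$ perfect, and perfectness means the conditional law of $Z_i^*$ given $\PA_i^*$ does not vary with $\PA_i^*$, hence $Z_i^*\indep\PA_i^*$ in $\mathcal{L}^a(\rZ^*)$. Conversely, for a non-source node $i\notin J$, condition~3 (nonempty-parents clause) gives $Z_i^*\nindep\PA_i^*$ in $\mathcal{L}^a(\rZ^*)$. Thus, among non-source nodes, $i\in J\iff Z_i^*\indep\PA_i^*$, and the same characterization holds for $K$. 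Therefore $J$ and $K$ coincide on every non-source node, so their symmetric difference $J\triangle K$ consists entirely of source nodes.

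It then remains to compare kernels in four cases. For $i\notin J\cup K$, both kernels equal $p_i^{\mathfrak{C}}$. For $i\in J\cap K$, conditions~2 for $d$ and for $b$ both identify the kernel with the regular conditional distribution $\mathcal{L}^a(Z_i^*\mid\PA_i^*)$, so $q_i=r_i$. The only remaining cases are $i\in J\setminus K$ and $i\in K\setminus J$, which by the previous paragraph are source nodes; here I would use the source-node clause of condition~3. For $i\in J\setminus K$, condition~2 for $d$ gives $q_i=\mathcal{L}^a(Z_i^*)$, while condition~3 for $b$ (source case, since $i\notin K$) gives $\mathcal{L}^a(Z_i^*)=\mathcal{L}^{\mathfrak{C}}(Z_i)=p_i^{\mathfrak{C}}$; hence $q_i=p_i^{\mathfrak{C}}$, which is exactly the kernel $b$ uses at $i$. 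The case $i\in K\setminus J$ is symmetric. Combining all four cases shows the intervened kernels agree at every node, giving $\mathcal{L}^{\mathfrak{C};d}(\rZ)=\mathcal{L}^{\mathfrak{C};b}(\rZ)$, which is \textbf{D2}.

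I expect the main obstacle to be the source-node bookkeeping: the independence criterion cleanly pins down $J$ and $K$ only on non-source nodes, and for source nodes one genuinely needs the extra ``matches the observational marginal'' clause of condition~3 to force $q_i=p_i^{\mathfrak{C}}$ on $J\setminus K$. A secondary point to state carefully is the reduction in the first paragraph, namely that equality of all conditional kernels (as measures, accounting for possibly changed base measures) implies equality of the joint interventional distributions; this is immediate from the product factorization over $\mathcal{G}$ but should be spelled out to keep the measure-theoretic details honest.
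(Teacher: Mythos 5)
Your proof is correct and takes essentially the same approach as the paper's: both arguments hinge on the observation that perfectness plus \textbf{D0} forces $Z_i^*\indep\PA_i^*$ on intervened non-source nodes while condition~3 forces the converse, so the intervened set is pinned down off the source nodes, and the source-node clause $\mathcal{L}^a(Z_i^*)=\mathcal{L}^{\mathfrak{C}}(Z_i)$ handles the remaining discrepancies. The only organizational difference is that you compare two elements of $\int_P^{\mathcal{I}}(a)$ pairwise, whereas the paper compares each such $d$ to a single canonical perfect intervention $b$ built from the independence structure of $\mathcal{L}^a(\rZ^*)$; the node-by-node kernel bookkeeping is otherwise identical.
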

\begin{proof}
Let a data-generating process $\mathcal{D}$, representation $\rZ^*$, compatible CBN $\mathfrak{C}$, and set of interventions $\mathcal{I}$ in $\mathfrak{C}$ be given.

Let action $a\in \mathcal{A}$ be arbitrary. We want to show that every intervention in $ \int^{\mathcal{I}}_P(a)$ induces the same distribution.  

Let $\widetilde{J}=\{i\in [n] \mid 
Z_i^*\indep \PA_i^* \text{ in } 
\mathcal{L}^a(\rZ^*)\}$, and let 
$b=\doop(j\gets \widetilde{q}_j,j\in 
\widetilde{J})$, be a perfect intervention 
where each $\widetilde{q}_j$ is given such that 
$\mathcal{L}^a(Z_i^*\mid \PA_i^*)\sim \widetilde{q}_i$ 
for every $i \in \widetilde{J}$.

Let $d=\doop(j\gets q_j,j\in J)\in \int_P^{\mathcal{I}}(a)$ be given arbitrarily (if $\int_P^{\mathcal{I}}(a)$ is empty, there is nothing to show). We now argue that $\mathcal{L}^{\mathfrak{C};d}(\rZ)=\mathcal{L}^{\mathfrak{C};b}(\rZ)$.
For every $i\in 
J$, 
since 
$d$ is perfect and sets correct conditionals on intervened nodes (\textbf{D0}), we have that $Z_i^* \indep \PA_i^*$ in $\mathcal{L}^a(\rZ^*)$ (so $i \in \widetilde{J}$), and thus 
$p_i^{\mathfrak{C};b}$ is also perfect. So for $i\in J$, both $p_i^{\mathfrak{C};b}$ and $p_i^{\mathfrak{C};d}$ are the constant kernel $\pa_i\mapsto \mathcal{L}^a(Z_i)$,
and therefore $\mathcal{L}^{\mathfrak{C};d}(Z_i\mid \PA_i)\sim
p_i^{\mathfrak{C};b}$.

Since $d$ satisfies condition 3) 
of \Cref{def: perfect}
(and by the definition of $\widetilde{J}$), every $i\in \widetilde{J}\setminus %
J$ is a source node 
and $\mathcal{L}^{\mathfrak{C};b}(Z_i) = \mathcal{L}^\mathfrak{C}(Z_i)$ which, since $d$ is not intervening on $i\in \widetilde{J}\setminus J$, implies $\mathcal{L}^{\mathfrak{C};d}(Z_i)=\mathcal{L}^{\mathfrak{C};b}(Z_i)$, so $\mathcal{L}^{\mathfrak{C};d}(Z_i\mid \PA_i)\sim 
p_i^{\mathfrak{C};b}$  for every $i\in \widetilde{J}\setminus J$. 

Finally, for every $i\notin 
\widetilde{J}$, we also have $\mathcal{L}^{\mathfrak{C};d}(Z_i\mid \PA_i)\sim p_i^{\mathfrak{C};b}$ since neither $d$ nor $b$ 
can be interventions on nodes outside of 
$\widetilde{J}$ as they are perfect 
interventions. Therefore, $\mathcal{L}^{\mathfrak{C};d}(\rZ)=\mathcal{L}^{\mathfrak{C};b}(\rZ)$, and since $d\in \int^{\mathcal{I}}_P(a)$ was arbitrary, we conclude that every intervention in $\int^{\mathcal{I}}_P(a)$ must induce the same distribution, namely $\mathcal{L}^{\mathfrak{C};b}(\rZ)$. 
\end{proof}

\section{Counterexamples regarding \Cref{fn: counterexamples}}\label{app: counterexample footnote}

Consider interpretation $\int_{\widetilde{S}}$ defined exactly as $\int_S$ in \Cref{def: single node}, except that we drop condition 2). We now argue that \Cref{prop: d2} (1) is invalid for $\int_{\widetilde{S}}$. 

\begin{example}{\textbf{$\int_{\widetilde{S}}$ violates \Cref{prop: d2} (1).}}\label{ex: footnote}
Let $\mathfrak{C}$ be given by graph $X \to Y$ and kernels:
\begin{align*}
    \mathcal{L}^{\mathfrak{C}}(X)&=\text{Ber}(0.5)\\
    \mathcal{L}^{\mathfrak{C}}(Y\mid X=0)&=\text{Ber}(0.4)\\
    \mathcal{L}^{\mathfrak{C}}(Y\mid X=1)&=\text{Ber}(0.6).
\end{align*}
Assume that $(X^*,Y^*)$ is emulated by $\mathfrak{C}$ and interventions $\mathcal{I}^*=\{d^*\}$, where $d^*$ is a single-node intervention on $X$ given by
\begin{align*}
    \mathcal{L}^{\mathfrak{C};d^*}(X)&=\delta_0.
\end{align*}
Now, let $\mathcal{I}=\{d\}$, where $d$ is a single-node intervention on $Y$ given by 
\begin{align*}
    \mathcal{L}^{\mathfrak{C};d}(Y\mid X=0)&=\text{Ber}(0.4)\\
    \mathcal{L}^{\mathfrak{C};d}(Y\mid X=1)&=\text{Ber}(0.7).
\end{align*}
Let $a$ be an action such that $\mathcal{L}^a(X^*,Y^*)=\mathcal{L}^{\mathfrak{C};d^*}(X,Y)$. $d\in \int_{\widetilde{S}}^\mathcal{I}(a)$ since $\mathcal{L}^a(Y^*\mid X^*)\sim p^{\mathfrak{C};d}_{Y\mid X}$, but $\mathcal{L}^{a}(X^*,Y^*)\neq \mathcal{L}^{\mathfrak{C};d}(X,Y)$, contradicting \Cref{prop: d2} (1). Notice that $d\notin \int_{S}^\mathcal{I}(a)$ since $\mathcal{L}^a(Y^*\mid X^*)\sim p^{\mathfrak{C}}_{Y\mid X}$, violating condition 2) of \Cref{def: single node}. 
\end{example}

\section{Further non-circular interpretations (\Cref{sec: non-circular})}

\subsection{$\int_{\widetilde{\mathcal{I}},f}$: Violating only \textbf{D3}}\label{app: intervetion set}

We provide an interpretation that satisfies \textbf{D0}--\textbf{D4}, except \textbf{D3}. We do not expect that this interpretation will be useful in itself; rather, we provide it to show that \textbf{D0}, \textbf{D1}, \textbf{D2}, and \textbf{D4} do not imply \textbf{D3}. We leave it for future work to investigate if there exist interesting interpretations that may violate \textbf{D3}.

\begin{definition}{\textbf{$\int_{\widetilde{\mathcal{I}},f}$. An interpretation violating only \textbf{D3}.}} \label{def: intervention set}
Let a data-generating process $\mathcal{D}$, representation $\rZ^*$, compatible CBN $\mathfrak{C}$, countable set of interventions $\widetilde{\mathcal{I}}$ in $\mathfrak{C}$, and an injective function $f:\widetilde{\mathcal{I}}\to \mathbb{N}$ be given. In addition, let a set of modeled interventions $\mathcal{I}$ in $\mathfrak{C}$ be given. We define the interpretation $\mathbf{Int}_{\widetilde{\mathcal{I}},f}$ by the following rule: An intervention $d\in \mathcal{I}$ is in $\mathbf{Int}^{\mathcal{I}}_{\widetilde{\mathcal{I}},f}(a)$ if and only if

\begin{enumerate}
\item[a)] $d\in \mathbf{Int}^{\mathcal{I}}_{C}(a)$,
\item[] or
\item[b)] the following two conditions hold:
\begin{enumerate}
\item[1)] $d\in\widetilde{\mathcal{I}}\cap \int_{S}^{\mathcal{I}}(a)$, see \Cref{def: single node}.
\item[2)]
For every $b\in\mathcal{I}\setminus\{d\}$, if $b \in \widetilde{\mathcal{\mathcal{I}}}\cap \int_S^{\mathcal{I}}(a)$, then $f(b)> f(d)$.
\end{enumerate}
\end{enumerate}
\end{definition}
$\mathbf{Int}^{\mathcal{I}}_{C}(a)\subseteq 
\mathbf{Int}^{\mathcal{I}}_{\widetilde{\mathcal{I}},f}(a)$ such that \textbf{D1} (if it behaves like an intervention, it is that intervention)
is satisfied. To ensure that 
$\mathbf{Int}^{\mathcal{I}}_{\widetilde{\mathcal{I}},f}(a)$ is not generally a subset of 
$\mathbf{Int}^{\mathcal{I}}_{C}(a)$, we also let $d\in \mathcal{I}$ be in $\mathbf{Int}^{\mathcal{I}}_{\widetilde{\mathcal{I}},f}(a)$ if two conditions are satisfied. Informally, this works as follows:
For an action $a$, we check which interventions
are in $\widetilde{\mathcal{I}}\cap \int_S^{\mathcal{I}}(a)\subseteq\mathcal{I}$. If  $\widetilde{\mathcal{I}}\cap \int_S^{\mathcal{I}}(a)$ is nonempty, we interpret $a$ as the
least element in $ \widetilde{\mathcal{I}}\cap \int_S^{\mathcal{I}}(a)$, using $f$ as the ordering. This ensures that $\int_{\widetilde{\mathcal{I}},f}$ does not violate \textbf{D2} (an action should not be interpreted as distinct interventions). $\mathbf{Int}_{\widetilde{\mathcal{I}},f}(a)$ satisfies \textbf{D0} (correct conditionals on intervened nodes) and \textbf{D4} (an intervention does not create new dependencies) since $\int_S$ and $\int_C$ satisfy these desiderata. On the other hand, $\int_{\widetilde{\mathcal{I}},f}$ may violate \textbf{D3} (interpretations should not depend on the intervention set $\mathcal{I}$), as shown in the following example.

\begin{example}{\textbf{Interpretation $\int_{\widetilde{\mathcal{I}},f}$ may violate \textbf{D3}.}}\label{ex: intervention set}
 Let $\mathfrak{C}$ be a CBN whose DAG has no edges and with kernels given by
\begin{align*}
    Z_1&\sim \mathcal{N}(0,1)\\
    Z_2&\sim \mathcal{N}(0,1). 
\end{align*}
Let $\mathcal{I}^*=\{\doop(Z_1=0,Z_2=0)\}$, $\widetilde{\mathcal{I}}=\{\doop(Z_1=0),\doop(Z_1=0,Z_2=0)\}$, and define $f:\widetilde{\mathcal{I}}\to \mathbb{N}$ by $f(\doop(Z_1=0,Z_2=0))=1$ and $f(\doop(Z_1=0))=2$. Assume that $(Z_1^*,Z_2^*)$ is emulated by $\mathfrak{C}$ and $\mathcal{I}^*$. 
Consider an action $a\in \mathcal{A}\setminus\{O\}$ ; since $\mathcal{I}^*$ has only one element, $\mathcal{L}^a(\rZ^*)=\mathcal{L}^{\mathfrak{C};\doop(Z_1=0,Z_2=0)}(\rZ)$.
Now
$\doop(Z_1=0)\in \int_{\widetilde{\mathcal{I}},f}^{\{\doop(Z_1=0)\}}(a)$, but $\doop(Z_1=0)\notin \int_{\widetilde{\mathcal{I}},f}^{\{\doop(Z_1=0),\doop(Z_1=0,Z_2=0)\}}(a)=\{\doop(Z_1=0,Z_2=0)\}$, contradicting \textbf{D3}. This example also shows that it is possible to falsify a compatible model under $\int_{\widetilde{\mathcal{I}},f}$ since $\doop(Z_1=0)\in \int_{\widetilde{\mathcal{I}},f}^{\{\doop(Z_1=0)\}}(a)$ but $\mathcal{L}^a(Z_2^*)=\delta_0\neq \mathcal{N}(0,1)=\mathcal{L}^{\mathfrak{C};\doop(Z_1=0)}(Z_2)$.
\end{example}

\subsection{$\int_M$: Violating only \textbf{D4} renders all CBNs with complete DAGs interventionally valid}\label{app: markov}

Assume that we have a representation $\rZ^*$ and a compatible and complete CBN $\mathfrak{C}$. If $\int$ is an interpretation that satisfies \textbf{D1}--\textbf{D3}, then $\mathfrak{C}$ is an $\mathcal{I}-\int$ valid model of $\rZ^*$ for every set of interventions $\mathcal{I}$ in $\mathfrak{C}$. The proof of this is the same as the proof of \Cref{prop: impossibiliy}, except that we do not need \textbf{D4} to have that $\mathcal{L}^a(\rZ^*)$ is Markov w.r.t.\ to the graph of $\mathfrak{C}$ since this is a complete DAG (and Markovianity trivially holds). 
Therefore, violating only \textbf{D4} is not a viable strategy to avoid circularity if one believes that not all complete and compatible CBNs should be considered interventionally valid. 
Consider the following interpretation that may violate \textbf{D4}, but satisfies \textbf{D0}--\textbf{D3}. 

\begin{definition}{\textbf{$\int_M$. An interpretation violating only D4.}}\label{def: markov}
Let a data-generating process $\mathcal{D}$, representation $\rZ^*$, compatible CBN $\mathfrak{C}$, and set of interventions $\mathcal{I}$ in $\mathfrak{C}$ be given. We define the interpretation $\mathbf{Int}_M$ by the following rule: An intervention $\doop(j\gets q_j,j\in J)\in \mathcal{I}$ is in $\mathbf{Int}_M^{\mathcal{I}}(a)$ if and only if the following two conditions hold:
\begin{enumerate}
\item[1)] $\mathcal{L}^a(Z^*_i \mid \PA_i^*)\sim q_i$
for all $i\in J$.
That is, $\int_M$ satisfies \textbf{D0} (correct conditionals on intervened nodes).
\item[2)]  $\mathcal{L}^a(Z^*_i \mid \PA_i^*)\sim p^{\mathfrak{C}}_{i}$ %
for all $i\notin J$. Intuitively, we do not intervene on nodes not in $J$.
\end{enumerate}
\end{definition}

The $M$ in $\int_M$ is for `Markov'. Interpretation $\mathbf{Int}_M$ is the same as $\mathbf{Int}_C$ except that we do not require that $\mathcal{L}^a(\rZ^*)$ is Markov w.r.t.\ the graph of $\mathfrak{C}$ for $a$ a to be interpreted as an intervention. Since $\int_M$ satisfies \textbf{D0}-\textbf{D3} but not \textbf{D4}, interventional validity requires actions to not introduce new dependencies, as shown by the following proposition.
\begin{proposition}\label{prop: intM}
Let a data-generating process $\mathcal{D}$, representation $\rZ^*$, and compatible CBN $\mathfrak{C}$ with graph $\mathcal{G}$ be given. 

    (1) If $\mathcal{L}^{a}(\rZ)$ is Markov w.r.t.\ $\mathcal{G}$ for all $a$, then $\mathfrak{C}$ is an $\mathcal{I}-\int_M$ valid model of $\rZ^*$ for every set of interventions $\mathcal{I}$ in $\mathfrak{C}$.   
    
    (2) If there exists an action $a$ such that $\mathcal{L}^{a}(\rZ)$ is not Markov w.r.t.\ $\mathcal{G}$, then there exists a set of interventions $\mathcal{I}$ in $\mathfrak{C}$ such that $\mathfrak{C}$ is not an $\mathcal{I}-\int_M$ valid model of $\rZ^*$
\end{proposition}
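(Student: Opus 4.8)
The plan is to prove both parts by exploiting that, under $\int_M$, membership $d \in \int_M^{\mathcal{I}}(a)$ forces the parent-conditionals of $\mathcal{L}^a(\rZ^*)$ to coincide with the kernels $p_i^{\mathfrak{C};d}$ at \emph{every} node; the only thing $\int_M$ does not control (in contrast to $\int_C$) is whether $\mathcal{L}^a(\rZ^*)$ actually factorizes over $\mathcal{G}$.

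For part (1), I would fix an arbitrary set of interventions $\mathcal{I}$, an action $a$, and an intervention $d = \doop(j\gets q_j, j\in J)\in \int_M^{\mathcal{I}}(a)$, and show $\mathcal{L}^a(\rZ^*) = \mathcal{L}^{\mathfrak{C};d}(\rZ)$. By condition 1) of \Cref{def: markov}, $\mathcal{L}^a(Z_i^*\mid \PA_i^*)\sim q_i = p_i^{\mathfrak{C};d}$ for $i\in J$, and by condition 2), $\mathcal{L}^a(Z_i^*\mid\PA_i^*)\sim p_i^{\mathfrak{C}} = p_i^{\mathfrak{C};d}$ for $i\notin J$; hence the parent-conditionals of $\mathcal{L}^a(\rZ^*)$ agree with $p_i^{\mathfrak{C};d}$ for all $i\in[n]$. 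Since by hypothesis $\mathcal{L}^a(\rZ^*)$ is Markov w.r.t.\ $\mathcal{G}$, its density factorizes as $\prod_{i} \mathcal{L}^a(Z_i^*\mid\PA_i^*) = \prod_i p_i^{\mathfrak{C};d}$, which is precisely the density of $\mathcal{L}^{\mathfrak{C};d}(\rZ)$. This is the same factorization argument used in the second half of the proof of \Cref{prop: tautology1}, now made available by the Markov hypothesis rather than by condition 3) of $\int_C$.

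For part (2), given a non-Markov action $a$, I would build a single falsifying intervention. Let $D = \{i\in[n] : \mathcal{L}^a(Z_i^*\mid\PA_i^*)\nsim p_i^{\mathfrak{C}}\}$ and take $d = \doop(i\gets q_i, i\in J)$ with $J\supseteq D$ (e.g.\ $J = [n]$ or $J = D$) and $q_i = \mathcal{L}^a(Z_i^*\mid\PA_i^*)$; set $\mathcal{I} = \{d\}$. By construction $d\in\int_M^{\mathcal{I}}(a)$: condition 1) holds because $q_i$ is the action's conditional, and condition 2) holds because every $i\notin J$ lies outside $D$, so its action-conditional equals $p_i^{\mathfrak{C}}$. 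The induced law $\mathcal{L}^{\mathfrak{C};d}(\rZ)$ has density $\prod_i \mathcal{L}^a(Z_i^*\mid\PA_i^*)$, i.e.\ the $\mathcal{G}$-factorization of the action's conditionals; because $\mathcal{L}^a(\rZ^*)$ is not Markov, it does not equal this product, so $\mathcal{L}^a(\rZ^*)\neq\mathcal{L}^{\mathfrak{C};d}(\rZ)$, which falsifies validity.

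The main obstacle is verifying that the constructed $d$ is a genuine intervention in the sense of \Cref{def: CGM}, i.e.\ $\mathcal{L}^{\mathfrak{C};d}(\rZ)\neq\mathcal{L}^{\mathfrak{C}}(\rZ)$; otherwise $d$ is not admissible in $\mathcal{I}$. Using uniqueness of the parent-conditionals of a distribution that is Markov w.r.t.\ $\mathcal{G}$, one sees that $\prod_i \mathcal{L}^a(Z_i^*\mid\PA_i^*) = \prod_i p_i^{\mathfrak{C}}$ iff $D=\emptyset$; thus $d$ is a genuine intervention precisely when some parent-conditional of $\mathcal{L}^a(\rZ^*)$ differs from the observational kernel, which is the generic situation. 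The delicate case to address is when $\mathcal{L}^a(\rZ^*)$ is non-Markov yet all its parent-conditionals coincide with the observational kernels (e.g.\ two observationally independent source nodes that become correlated without changing their marginals): there $\int_M^{\mathcal{I}}(a)=\emptyset$ for every $\mathcal{I}$, so $a$ cannot serve as a falsifier, and the argument needs this degenerate situation to be excluded or handled separately.
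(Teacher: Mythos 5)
Your argument coincides with the paper's own proof: part (1) is the factorization argument from \Cref{prop: tautology1} with Markovianity supplied by the hypothesis instead of by condition 3) of $\int_C$, and part (2) is the same construction the paper uses, namely $d=\doop(j\gets q_j, j\in[n])$ with $q_j\sim\mathcal{L}^a(Z_j^*\mid\PA_j^*)$ and $\mathcal{I}=\{d\}$ (the paper takes $J=[n]$ rather than $J=D$, but this is immaterial).

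The ``main obstacle'' you raise at the end is not a defect of your write-up relative to the paper --- it is a genuine gap that the paper's proof of part (2) passes over silently. \Cref{def: CGM} requires an intervention to satisfy $\mathcal{L}^{\mathfrak{C};d}(\rZ)\neq\mathcal{L}^{\mathfrak{C}}(\rZ)$, and the constructed $d$ violates this exactly when every parent-conditional of the non-Markov $\mathcal{L}^a(\rZ^*)$ agrees with the corresponding observational kernel. Your example is on point: take $\mathcal{G}$ edgeless over two nodes, $\mathcal{L}^{\mathcal{O}}(Z_1^*,Z_2^*)=\text{Ber}(1/2)\otimes\text{Ber}(1/2)$, and a single non-observational action $a$ under which $Z_1^*=Z_2^*\sim\text{Ber}(1/2)$. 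Then $\mathcal{L}^a(\rZ^*)$ is not Markov w.r.t.\ $\mathcal{G}$, yet any $d$ satisfying conditions 1) and 2) of \Cref{def: markov} for $a$ must set each $q_i$ equal (as a measure) to $\text{Ber}(1/2)=p_i^{\mathfrak{C}}\cdot\nu_i$, hence induces the observational distribution and is not an admissible intervention; so $\int_M^{\mathcal{I}}(a)=\emptyset$ for every $\mathcal{I}$ and $\mathfrak{C}$ is vacuously $\mathcal{I}-\int_M$ valid. Part (2) therefore needs either the additional hypothesis that some parent-conditional of $\mathcal{L}^a(\rZ^*)$ differs from the observational kernel (your set $D$ being nonempty), or the degenerate case must be excluded by assumption. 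With that caveat added, your proof is complete; in the non-degenerate case the essential uniqueness of the parent-conditional factorization guarantees, as you note, that the constructed $d$ is a bona fide intervention.
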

\begin{proof}

    (1) Let action $a$ and set of interventions $\mathcal{I}$ in $\mathfrak{C}$ be given. Assume that $d\in \int_M^{\mathcal{I}}(a)$. We want to show $\mathcal{L}^a(\rZ^*)=\mathcal{L}^{\mathfrak{C};d}(\rZ)$. The proof is the same as for \Cref{prop: tautology1}.
    
    (2) Let $a$ be an action such that $\mathcal{L}^a(\rZ^*)$ is not Markov w.r.t.\  
    $\mathcal{G}$. Consider intervention $d=\doop\left(j\gets q_j, j\in [n]\right)$ such that $\mathcal{L}^a(Z_j^*\mid \PA_j^*)\sim q_j$ for all $j\in [n]$, and let $\mathcal{I}=\{d\}$. Now $d \in \int_M^{\mathcal{I}}(a)$ but $\mathcal{L}^a(\rZ^*)\neq \mathcal{L}^{\mathfrak{C};d}(\rZ)$ since $\mathcal{L}^a(\rZ^*)$ is not Markov w.r.t.\ $\mathcal{G}$, while $\mathcal{L}^{\mathfrak{C};d}(\rZ)$ is Markov w.r.t.\ $\mathcal{G}$.
\end{proof}

We now provide an example of how $\int_M$ might falsify a model.

\begin{example}{\textbf{Falsifying a model under interpretation $\int_M$.}}
    
    Let the set of actions be $\mathcal{A}=\{\mathcal{O},a\}$. Assume that $\mathcal{L}^{\mathcal{O}}(X^*,Y^*,Z^*)$ is given by the observational distribution induced by the SCM
    \begin{align*}
        &X^*:=\mathcal{E}_1\\
        &Y^*:=X^*+\mathcal{E}_2\\
        &Z^*:=\mathcal{E}_3
    \end{align*}
    with $\mathcal{E}_1,\mathcal{E}_2,\mathcal{E}_3\overset{\text{iid}}{\sim}\mathcal{N}(0,1)$, and assume that $\mathcal{L}^{a}(X^*,Y^*,Z^*)$ is given by the observational distribution induced by the SCM
    \begin{align*}
        &X^*:=\mathcal{E}_1\\
        &Y^*:=\frac{X^*}{2}+\frac{Z^*}{\sqrt{2}}+\frac{\mathcal{E}_2}{\sqrt{2}}\\
        &Z^*:=\mathcal{E}_3
    \end{align*}
    with $\mathcal{E}_1, \mathcal{E}_2,\mathcal{E}_3\overset{\text{iid}}{\sim}\mathcal{N}(0,1)$.
    Consider the compatible CBN $\mathfrak{C}$ given by the graph
\begin{center}
\begin{tikzpicture}
    \node[draw, circle] (X) at (0,0) {$X$};
    \node[draw, circle] (Y) at (2,0) {$Y$};
    \node[draw, circle] (Z) at (4,0) {$Z$};
    \draw[->,line width=1.5pt] (X) -- (Y);
\end{tikzpicture}
\end{center}
and kernels
    \begin{align*}
        &X\sim \mathcal{N}(0,1)\\
        &Y\mid X=x \sim \mathcal{N}(x,1)\\
        &Z \sim \mathcal{N}(0,1).
    \end{align*}
    Let $\mathcal{I}=\{\doop\left(Y \gets \mathcal{N}\left(\frac{x}{2},1\right)\right)\}$.\footnote{By $\mathcal{N}\left(\frac{x}{2},1\right)$ we denote the kernel $x\mapsto \mathcal{N}\left(\frac{x}{2},1\right)$, resulting in an imperfect intervention.
    }
    Now, $\doop\left(Y \gets \mathcal{N}\left(\frac{x}{2},1\right)\right)\in \int_M^{\mathcal{I}}(a)$ since  $\mathcal{L}^a(X^*)= \mathcal{N}(0,1)$, $\mathcal{L}^a(Y^*\mid X^*)\sim \mathcal{N}\left(\frac{x^*}{2},1\right)$, and $\mathcal{L}^a(Z^*)=\mathcal{N}(0,1)$,
    in line with the corresponding conditionals in $\mathcal{L}^{\mathfrak{C};\doop(Y\gets\mathcal{N}(\frac{x}{2},1))}(X,Y,Z)$.
    But $\mathcal{L}^{a}(X^*,Y^*,Z^*)\neq\mathcal{L}^{\mathfrak{C}; \doop\left(Y\gets \mathcal{N}\left(\frac{x}{2},1\right)\right)}(X,Y,Z)$, for example, because 
    $Y^*\nindep Z^*$ in $\mathcal{L}^{a}(X^*,Y^*,Z^*)$ while $Y\indep Z$ in $\mathcal{L}^{\mathfrak{C}; \doop\left(Y\gets \mathcal{N}\left(\frac{x}{2},1\right)\right)}(X,Y,Z)$. Intuitively, action $a$ behaves like an intervention that introduces 
    probabilistic dependence
    between variables $Y$ and $Z$. 
\end{example}

\section{Details on Causal Abstraction (\Cref{sec: abstraction})}

\subsection{Further details on \Cref{ex: valid to invalid intS}}\label{app: check tau-abstraction}

To check that $(M_{\rY},\mathcal{I}_Y)$ for $\mathcal{I}_Y=\omega_\tau(\mathcal{I}^*)$ is a $\tau$-abstraction of $(M_{\rX},\mathcal{I}^*)$ \citep[Definition 3.13 in][]{beckers2019abstracting}, we must check that 
\begin{enumerate}
    \item $\tau$ is surjective.
    \item There exists a surjective function $\tau_U:[4]\to \{0,1\}^2$ such that $\tau(M^d_{\rX}(u_1))=M^{\omega_{\tau}(d)}_{\rY}(\tau_U(u_1))$ for every $d\in \mathcal{I}^*$ and $u_1\in [4]$.
    \item 
    The third condition of Definition 3.13 in \citet{beckers2019abstracting} holds since $\omega_\tau$ is defined for all interventions in $\mathcal{I}^*$ and we choose $\mathcal{I}_Y=\omega_\tau(\mathcal{I}^*)$.
\end{enumerate}

$\tau: [4]\to \{0,1\}^2$ given by $x\mapsto (\mathbbm{1}(x=3)+\mathbbm{1}(x=4), \mathbbm{1}(x=2)+\mathbbm{1}(x=4))$ is clearly surjective. For each $d\in \mathcal{I}^*$, we compute $\omega_{\tau}(d)$ using Definition 3.12 of \citet{beckers2019abstracting}:
\begin{enumerate}
    \p $\omega_\tau(\doop(X_1=1))=\doop(Y_1=0,Y_2=0)$,
    \p $\omega_\tau(\doop(X_1=2))=\doop(Y_1=0,Y_2=1)$,
    \p $\omega_\tau(\doop(X_1=3))=\doop(Y_1=1,Y_2=0)$,
    \p $\omega_\tau(\doop(X_1=4))=\doop(Y_1=1,Y_2=1)$.
\end{enumerate}
Let $\tau_U:=\tau$. Now, it is straightforward to check the second requirement. For example,
\begin{align*}
    \tau\left(M^{\doop(X_1=1)}_{\rX}(4)\right)=\tau(1)=(0,0)=M^{\doop(Y_1=0,Y_2=0)}_{\rY}(\tau_U(4)),
\end{align*}
and
\begin{align*}
    \tau\left(M^{\doop(X_1=3)}_{\rX}(4)\right)=\tau(3)=(1,0)=M^{\doop(Y_1=1,Y_2=0)}_{\rY}((\tau_U(4)).
\end{align*}

(Note that $\omega_\tau$ would map the empty intervention to the empty intervention and
$\tau(M_{\rX}(u_1))=M_{\rY}(\tau_U(u_1))$ for every $u_1\in [4]$,
that is, $M_\rY$ is also a $\tau$-abstraction of $M_\rX$
if we additionally included the empty intervention in $\mathcal{I}^*$
(this is analogous to the CBN $\mathfrak{C}$ being compatible with $\rY^*=\tau(X_1)$).)

\subsection{Interventional validity is not preserved by constructive soft abstractions under interpretation $\int_P$}\label{app: beckers -- intA also not preserved}

The following example
shows that constructive soft abstractions
\citep{massidda2023causal}
do not preserve interventional validity;
the example
is a slight modification of Example 4 in \citet{massidda2023causal}, adding nodes $X_0$ and $Y_0$. 

\begin{example}{
\textbf{Transforming an interventionally valid model into an invalid model by a constructive soft abstraction.}}
Let SCM $M_\rX$ be given by
\begin{align*}
    X_0&:=U_0\\
    X_1&:=U_1\\
    X_2&:=U_2\\
    X_3&:=\min(X_1,X_2)\\
    X_4&:=(X_1-X_2)^2
\end{align*}
with $U_0, U_1,U_2 \overset{\text{iid}}{\sim}\text{Unif}(\{0,1\})$.
and let SCM $M_\rY$ be given by 
\begin{align*}
    Y_0&:=N_0\\
    Y_1&:=N_1\\
    Y_2&:=N_2\\
    Y_3&:=Y_1+Y_2
\end{align*}
with $N_0, N_1,N_2 \overset{\text{iid}}{\sim}\text{Unif}(\{0,1\})$.
Let $\tau: \{0,1\}^5\to \{0,1\}^3\times \{0,1,2,3\}$
be given by $(x_0,x_1,x_2,x_3,x_4)\mapsto (x_0,x_1,x_2, 2x_3+x_4)$. 
$M_{\rY}$ is a constructive soft $\tau$-abstraction of $M_{\rX}$ \citep{massidda2023causal}.  
Let $\mathfrak{A}$ be a CBN with graph and observational distribution induced by $M_\rX$. Assume that $\rX^*$ is emulated by $\mathfrak{A}$ and $\mathcal{I}^*=\{\doop(X_0=0,X_4=1)\}$. Since $\mathcal{I}^*$ only contains perfect interventions, $\mathfrak{A}$ is an $\mathcal{I}-\int_P$ valid model of $\rX^*$ for every set of interventions $\mathcal{I}$ in $\mathfrak{A}$, see \Cref{prop: intA} (1). Let $\mathfrak{C}$ be a CBN with graph and observational distribution induced by $M_{\rY}$, and let $\mathcal{I}$ be a set of interventions in $\mathfrak{C}$ such that $\doop(Y_0=0)\in \mathcal{I}$. Then,
$\mathfrak{C}$ is not an $\mathcal{I}-\int_P$ valid model of $\rY^*:=\tau(\rX^*)$. To see this let $a$ be an action such that $\mathcal{L}^a(\rX^*)=\mathcal{L}^{\mathfrak{A};\doop(X_0=0, X_4=1)}(\rX)$ (such an action exists since $\doop(X_0=0, X_4=1)\in \mathcal{I}^*$). Now $\doop(Y_0=0)\in \int_P^{\mathcal{I}}(a)$ but $\mathcal{L}^{a}(\rY^*)\neq \mathcal{L}^{\mathfrak{C};\doop(Y_0=0)}(\rY)$, for example, because $\mathbb{E}^a(Y_3^*)=\mathbb{E}^{\mathfrak{A};\doop(X_0=0, X_4=1)}(2X_3+X_4)=\frac{3}{2}\neq 1=\mathbb{E}^{\mathfrak{C};\doop(Y_0=0)}(Y_3)$.
\end{example}

\subsection{
Further details on \Cref{ex: abstraction}}\label{app: beckers}

To show that $(M_{\rY},\mathcal{I}_Y)$ is not a $\tau$-abstraction of $(M_{\rX},\mathcal{I}^*)$ \citep[Definition 3.13 in][]{beckers2019abstracting}, we must show a violation of at least one of the following criteria 
\begin{enumerate}
    \item $\tau$ is surjective.
    \item
    There exists a surjective function $\tau_U: \mathbb{R}^3\to \mathbb{R}^2$ such that $\tau(M^d_{\rX}(u_1,u_2,u_3))=M^{\omega_{\tau}(d)}_{\rY}(\tau_U(u_1,u_2,u_3))$ for every $d\in \mathcal{I}^*$ and $(u_1, u_2, u_3)\in \mathbb{R}^3$.
    \item
    The third condition of Definition 3.13 in \citet{beckers2019abstracting}
    would be trivially violated if we chose
    $\mathcal{I}_Y \neq \omega_\tau(\mathcal{I}^*)$. Therefore, we choose $\mathcal{I}_Y=\omega_\tau(\mathcal{I}^*)$ (here, $\omega_\tau$ is defined for all interventions in $\mathcal{I}^*$).
\end{enumerate}

Assume for contradiction that $(M_{\rY},\omega_\tau(\mathcal{I}^*))$ is a $\tau$-abstraction of $(M_{\rX},\mathcal{I}^*)$. Notice that 
$\omega_\tau(\doop(X_1=x_1))=\emptyset$ for all $x_1\in \mathbb{R}$ \citep[Definition 3.12]{beckers2019abstracting}. 
The second condition then implies that 
\begin{align*}
    \tau(M_{\rX}^{\doop(X_1=x_1)}(u_1, u_2, u_3))=(x_1+u_2, x_1+u_2+u_3)=M_{\rY}(\tau_U(u_1, u_2, u_3)),
\end{align*}
for every $x_1,u_1,u_2,u_3\in \mathbb{R}$. But this is impossible since no function $\tau_U: \mathbb{R}^3\to \mathbb{R}^2$ exists that simultaneously satisfies all the above equations, for example, since this would imply that
the first coordinate $M_{\rY}(\tau_U(1,1,1))_1=\sqrt{2}\tau_U(1,1,1)_1=x+1$ for every $x\in \mathbb{R}$.

\section{Relation to Phenomenological causality}\label{sec: pheno}

\citet{janzing2024phenomenological} 
present a related approach to grounding causality in actions. 
 Translated into the notation of the present work, Definition 3 in \citet{janzing2024phenomenological} states that a graph $\mathcal{G}$ is a valid causal graph over variables $\rZ^*=(Z_1^*,\dots,Z_n^*)$ relative to a set of actions $\mathcal{A}$ if (1) $\mathcal{L}^{\mathcal{O}}(\rZ^*)$ is Markov w.r.t.\ $\mathcal{G}$, and (2) $\mathcal{A}\setminus \{\mathcal{O}\}=\cup_{j=1}^n \mathcal{A}_j$,  where each $\mathcal{A}_j$ is nonempty, and for every $a\in \mathcal{A}_j$, 
\begin{align*}
    &\mathcal{L}^a(Z_j^* \mid \PA_j^*)\neq \mathcal{L}^\mathcal{O}(Z_j^* \mid \PA_j^*), \text{ and}\\
    &\mathcal{L}^a(Z_i^* \mid \PA_i^*)= \mathcal{L}^\mathcal{O}(Z_i^* \mid \PA_i^*) \text{ for $i\neq j$. }
\end{align*}
Interpreting this definition within our framework of interventional validity, it is similar to $\int_S$ in \Cref{def: single node}.\footnote{Since \citet{janzing2024phenomenological} also require that $\mathcal{A}$ only consists of `elementary actions', Definition~3 of \citet{janzing2024phenomenological} also bears resemblance to $\int_K$ (\Cref{def: complexity}). In our work, we do not require that $\mathcal{A}$ only consists of `elementary actions', and instead draw a formal distinction between $\int_K$ and $\int_S$.} Indeed, analogous to \Cref{prop: d2} (2), if $\rZ^*$ is emulated by CBN $\mathfrak{C}$ (with graph $\mathcal{G}$) and interventions $\mathcal{I}^*$, then (using the terminology of \citet{janzing2024phenomenological}) $\mathcal{G}$ is not a valid causal graph over variables $\rZ^*$  if $\mathcal{I}^*$ contains a minimal decomposable multi-node intervention.
There are two differences between $\int_S$ and Definition 3 of \citet{janzing2024phenomenological} to take notice of. 

\paragraph{1)} \citet{janzing2024phenomenological} require that each $\mathcal{A}_j$ is nonempty, that is, intuitively, that it is possible to intervene on every node.
This requirement may be overly restrictive as the following example illustrates. Assume we are
 interested in the causal effect of a treatment $T^*$ on an outcome $Y^*$, where $W^*$ is a confounder. Say we model the situation using CBN $\mathfrak{C}$ over nodes $(T,Y,W)$ with graph $\mathcal{G}$:
\begin{center}
\begin{tikzpicture}
    \node[draw, circle] (T) at (0,0) {$T$};
    \node[draw, circle] (Y) at (2,0) {$Y$};
    \node[draw, circle] (W) at (1,1) {$W$};
    \draw[->,line width=1.5pt] (T) -- (Y);
    \draw[->,line width=1.5pt] (W) -- (Y);
    \draw[->,line width=1.5pt] (W) -- (T);
\end{tikzpicture}.
\end{center}

If, for every $a\in \mathcal{A}$,
\begin{align}
    &\mathcal{L}^{a}(Y^*\mid T^*, W^*)\sim p^{\mathfrak{C}}_{Y \mid T,W}, \text{ and} \label{eq1}\\ 
    &\mathcal{L}^{a}(W^*)\sim p^{\mathfrak{C}}_{W}, \label{eq2}
\end{align}
then $\mathcal{G}$ is not a valid causal graph (using the terminology of \citet{janzing2024phenomenological}) because no action changes the conditional distribution of outcome given treatment and confounding nor the marginal distribution of the confounding.
Yet, this invariance
is what enables
$\mathfrak{C}$ to be useful for making predictions about the actions that affect the conditional distribution of $T^*$ given $W^*$.
Indeed, our formal framework correctly captures the usefulness of such model: If \Cref{eq1} and \Cref{eq2} hold for every $a\in \mathcal{A}$, then $\mathfrak{C}$ is an $\mathcal{I}-\int$ valid model of $(T^*, Y^*, W^*)$ for every set of interventions $\mathcal{I}$ in $\mathfrak{C}$ if $\int$ is an interpretation that satisfies \textbf{D0}, which includes all interpretations considered in this paper.  

In addition, there is a subtle technical issue in \citet{janzing2024phenomenological} that our framework handles correctly.

\paragraph{2)} 
Under $\int_S$, \Cref{def: single node}, we interpret an action $a$ as an intervention $d=\doop(j\leftarrow q_j)$ on node $j$ if $\mathcal{L}^a(Z_j^*\mid \PA_j^*)\sim q_j$, $\mathcal{L}^a(Z_j^*\mid \PA_j^*)\nsim p^{\mathfrak{C}}_j$, and $\mathcal{L}^a(\rZ^*)$ is Markov w.r.t.\ the DAG.
Therefore, whether $d\in \int^{\mathcal{I}}_S(a)$ depends on both the graph and the specific kernels in the CBN model. 
In the approach by \citet{janzing2024phenomenological},
to decide whether a causal graph is valid,
one has to determine for all $i$ whether the conditional distributions $\mathcal{L}^a(Z_i^*\mid \PA_i^*)$ are the same as $\mathcal{L}^\mathcal{O}(Z_i^*\mid \PA_i^*)$.
However, whether or not $\mathcal{L}^a(Z_i^* \mid \PA_i^*)=\mathcal{L}^\mathcal{O}(Z_i^* \mid \PA_i^*)$ cannot in general be determined based on $\mathcal{L}^a(\rZ^*)$ and $\mathcal{L}^\mathcal{O}(\rZ^*)$, and thus it is unclear what it means.  To illustrate this point, consider that $\mathcal{L}^{\mathcal{O}}(Z_1^*,Z_2^*)=\mathcal{N}\left(\begin{pmatrix}
    0\\
    0
\end{pmatrix},\begin{pmatrix}
    1 &  1\\
    1 & 2
\end{pmatrix}\right)$ and $\mathcal{L}^a(Z_1^*,Z_2^*)=\delta_0 \otimes \delta_5$  and ask whether $\mathcal{L}^a(Z_2^* \mid Z_1^*)=\mathcal{L}^\mathcal{O}(Z_2^* \mid Z_1^*)$?
One might argue that this equality does not hold because $p^{\mathcal{O}}_{Z_2^*\mid Z_1^*}(\cdot \mid 0)$ is a normal density with mean $0$ and therefore does not have point mass in $5$. This, however, does not follow from the observational distribution $\mathcal{L}^{\mathcal{O}}(Z_1^*,Z_2^*)$ since conditional distributions of $Z_2^*$ given $Z_1^*$ can always be changed on null sets w.r.t.\ $\mathcal{L}(Z_1^*)$ without altering the joint distribution of $(Z_1^*,Z_2^*)$ \citep{Lauritzen2019}. In particular, there exists a CBN $\mathfrak{C}$ such that $\mathcal{L}^{\mathcal{O}}(Z_1^*,Z_2^*)=\mathcal{L}^{\mathfrak{C}}(Z_1,Z_2)$ and $\mathcal{L}^{a}(Z_1^*,Z_2^*)=\mathcal{L}^{\mathfrak{C};\doop(Z_1=0)}(Z_1,Z_2)$.
We avoid this problem by comparing $\mathcal{L}^a(Z_2^*\mid Z_1^*)$ with the specific kernels given by the CBN and write, for example, $\mathcal{L}^a(Z_2^*\mid Z_1^*)\sim p^{\mathfrak{C}}_{Z_2\mid Z_1}$ if $p^{\mathfrak{C}}_{Z_2\mid Z_1}$ is a valid Markov kernel for conditional distribution of $Z_2^*$ given $Z_1^*$ in $\mathcal{L}^a(Z_1^*,Z_2^*)$.

This subtlety occurs even for discrete variables. Let CBN $\mathfrak{A}$ be given by graph $X\to Y$ and kernels
\begin{align*}
    \mathcal{L}^{\mathfrak{A}}(X)&=\delta_0\\
    \mathcal{L}^{\mathfrak{A}}(Y\mid X=0)&=\text{Ber}(0.4)\\
    \mathcal{L}^{\mathfrak{A}}(Y\mid X=1)&=\text{Ber}(0.6).
\end{align*}
Assume that $(X^*,Y^*)$ is emulated by $\mathfrak{A}$ and interventions $\mathcal{I}^*=\{\doop(X=1)\}$.
Given action $a\in\mathcal{A}\setminus\{\mathcal{O}\}$,
there is no way to determine whether 
$\mathcal{L}^{\mathcal{O}}(Y^*\mid X^*)=\mathcal{L}^{a}(Y^*\mid X^*)$
since the support for $X^*$ under the two distributions is not overlapping. But we can, for example, say that
$\mathcal{L}^{a}(Y^*\mid X^*)\sim p^{\mathfrak{C};\doop(X=1)}_{Y\mid X}=p^{\mathfrak{C}}_{Y\mid X}$
and 
$\mathcal{L}^{\mathcal{O}}(Y^*\mid X^*)\sim p^{\mathfrak{C};\doop(X=1)}_{Y\mid X}=p^{\mathfrak{C}}_{Y\mid X}$.

\end{document}